\definecolor{mydarkblue}{rgb}{0,0.08,0.45}
\newcommand\addstarred[1]{%
    \expandafter\let\csname\string#1@nostar\endcsname#1%
    \edef#1{\noexpand\@ifstar\expandafter\noexpand\csname\string#1@star\endcsname\expandafter\noexpand\csname\string#1@nostar\endcsname}%
    \expandafter\newcommand\csname\string#1@star\endcsname%
}
\newcommand{\1}[1]{\mathbbm{1}{\{#1\}}}
\DeclarePairedDelimiter\ceil{\lceil}{\rceil}
\DeclarePairedDelimiter\floor{\lfloor}{\rfloor}
\renewcommand{\ge}{\geqslant}
\renewcommand{\le}{\leqslant}
\renewcommand{\leq}{\leqslant}
\newcommand{\upbra}[1]{^{(#1)}}
\newcommand{\type}[1]{Type-\uppercase\expandafter{\romannumeral#1}}
\newtheorem{theorem}{Theorem}
\newtheorem{lemma}[theorem]{Lemma}
\newtheorem{proposition}[theorem]{Proposition}
\newtheorem{definition}[theorem]{Definition}
\let\oldComment=\Comment
\renewcommand{\Comment}[1]{\oldComment{\texttt{#1}}}
\algnewcommand{\LeftComment}[1]{\Statex $\triangleright$ \texttt{#1}}
\algnewcommand{\RightComment}[1]{\Statex \leavevmode\hfill$\triangleright$ \texttt{#1}}
\algnewcommand\algorithmicinput{\textbf{Input:}}
\algnewcommand\Input{\item[\algorithmicinput]}%
\algnewcommand\algorithmicoutput{\textbf{Output:}}
\algnewcommand\Output{\item[\algorithmicoutput]}%
\algnewcommand\algorithmicinitial{\textbf{Initialize:}}
\algnewcommand\Initial{\item[\algorithmicinitial]}%
\DeclareMathOperator{\ratio}{CR}
\newcommand{\overall}{\upbra{{\normalfont \text{OV}}}}
\newcommand{\statedep}{\upbra{{\normalfont \text{SD}}}}
\newcommand{\individual}{\upbra{{\normalfont \text{IND}}}}
\newcommand{\sdcr}{\ratio\upbra{\normalfont \text{SD}}}
\newcommand{\ovcr}{\ratio\upbra{\normalfont  \text{OV}}}
\newcommand{\indcr}{\ratio\upbra{\normalfont \text{IND}}}
\DeclareMathOperator{\opt}{OPT}
\DeclareMathOperator{\sdopt}{OPT\upbra{SD}}
\DeclareMathOperator{\ovopt}{OPT\upbra{OV}}
\DeclareMathOperator{\cost}{Cost}
\DeclareMathOperator{\rent}{\texttt{Rent}}
\DeclareMathOperator{\buy}{\texttt{Ind}}
\DeclareMathOperator{\group}{\texttt{Group}}
\DeclareMathOperator{\leave}{\texttt{Null}}
\newcommand{\masr}{{\normalfont \texttt{MA-SkiRental}}\xspace}
\begin{document}

\title[Competitive Algorithms for Multi-Agent Ski-Rental Problems]{Competitive Algorithms for Multi-Agent Ski-Rental Problems
}


\author{Xuchuang Wang}
\affiliation{%
  \institution{University of Massachusetts Amherst}
  \city{Amherst}
  \state{MA}
  \country{USA}}
\email{xuchuangw@gmail.com}

\author{Bo Sun}
\affiliation{%
  \institution{University of Ottawa}
  \city{Ottawa}
  \state{ON}
  \country{Canada}}
\email{bo.sun@uottawa.ca}

\author{Hedyeh Beyhaghi}
\affiliation{%
  \institution{University of Massachusetts Amherst}
  \city{Amherst}
  \state{MA}
  \country{USA}}
\email{hbeyhaghi@umass.edu}

\author{John C.S. Lui}
\affiliation{%
  \institution{The Chinese University of Hong Kong}
  \city{Hong Kong}
  \country{China}
}
\email{cslui@cse.cuhk.edu.hk}

\author{Mohammad Hajiesmaili}
\affiliation{%
  \institution{University of Massachusetts Amherst}
  \city{Amherst}
  \state{MA}
  \country{USA}}
\email{hajiesmaili@cs.umass.edu}

\author{Adam Wierman}
\affiliation{%
  \institution{California Institute of Technology}
  \city{Pasadena}
  \state{CA}
  \country{USA}}
\email{adamw@caltech.edu}









\renewcommand{\shortauthors}{}

\begin{abstract}
  This paper introduces a novel multi-agent ski-rental problem that generalizes the classical ski-rental dilemma to a group setting where agents incur individual and shared costs.
  In our model,
  each agent can either rent at a fixed daily cost,
  or purchase a pass at an individual cost,
  with an {additional third option} of a discounted \emph{group pass} available to all.
  We consider scenarios in which agents' active days differ, leading to \emph{dynamic states} as agents drop out of the decision process.
  To address this problem from different perspectives, we define three distinct competitive ratios: overall, state-dependent, and individual rational.
  For each objective, we design and analyze optimal deterministic and randomized policies.
  Our deterministic policies employ state-aware threshold functions that adapt to the dynamic states, while our randomized policies sample and resample thresholds from tailored state-aware distributions.
  The analysis reveals that symmetric policies, in which all agents use the same threshold, outperform asymmetric ones in most cases.
  Our results provide competitive ratio upper and lower bounds and extend classical ski-rental insights to multi-agent settings, highlighting both theoretical and practical implications for group decision-making under uncertainty.
\end{abstract}



\begin{CCSXML}
  <ccs2012>
  <concept>
  <concept_id>10003752.10003809.10010047</concept_id>
  <concept_desc>Theory of computation~Online algorithms</concept_desc>
  <concept_significance>500</concept_significance>
  </concept>
  <concept>
  <concept_id>10003752.10010070.10010071.10010082</concept_id>
  <concept_desc>Theory of computation~Multi-agent learning</concept_desc>
  <concept_significance>500</concept_significance>
  </concept>
  <concept>
  <concept_id>10003752.10010070.10010071.10010261.10010272</concept_id>
  <concept_desc>Theory of computation~Sequential decision making</concept_desc>
  <concept_significance>500</concept_significance>
  </concept>
  </ccs2012>
\end{CCSXML}

\ccsdesc[500]{Theory of computation~Online algorithms}
\ccsdesc[500]{Theory of computation~Multi-agent learning}
\ccsdesc[500]{Theory of computation~Sequential decision making}

\keywords{Ski rental problem, competitive analysis, multi-agent, sequential decision making}


\maketitle

\section{Introduction}


The ski-rental problem~\citep{karlin1988competitive,karlin1994competitive} is a well-known online optimization problem~\citep{borodin2005online,shalev2012online},
where
for each active day, the skier can either rent skis for a unit cost of \(1\), or buy skis for a fixed cost of \(B\,(>1)\) and stop renting thereafter, where active days \(N\in\mathbb{N}^+\) to ski are unknown a prior.
The online algorithm needs to determine a threshold\footnote{This threshold sometimes is also known as the \emph{break-even point} or \emph{cutoff time} in the literature of ski-rental problems~\citep{karlin1988competitive}. For consistency, we use the term \emph{threshold} in this paper.} to decide when to stop renting and buy skis.
The ski-rental problem has many applications, such as online cloud computing~\citep{khanafer2013constrained}, snoopy caching~\citep{karlin1994competitive}, energy management~\citep{lee2017online,lee2021online,zhang2015peak}, etc.

However, many applications that deal with the rent or buy dilemma involve multiple agents.
For example, consider a group of researchers in an institute using a service (e.g., Overleaf) to write and revise papers, which requires a monthly payment of (short-term) subscription fees (\emph{rent}).
The service platform usually also provides a long-term (e.g., lifetime) subscription plan (\emph{buy}).
For each individual researcher, whether and when to choose the long-term subscription is a typical ski-rental problem.
Often, the service platform provides a third option, which is the institutional-level subscription (\emph{group buy}) plan, which allows all researchers in the same institute to share the subscription, and the expenses are often cheaper than the sum of individual subscriptions.
This third option of \emph{group buy}
is largely ignored in the literature of ski-rental problems.
For another example, consider a community solar project~\citep{huang2015methods,nolden2020community}:
to enjoy the benefits of renewable energy and compensate the electricity bill, a group of residents in a community can either subscribe to a community solar project (rent) or buy a share of the community solar project (buy)---a ski-rental problem for each resident.
However, the community solar programs provide a group option for the residents in the same community to buy a large share together (group buy), which is cheaper than the sum of individual shares due to the reduction of administrative and operational costs.
This group-buy option is common in many other applications as well~\citep{lu2012matching}, such as managing business cars in a company~\citep{graus2008principal},
financing electric vehicle charging station in a block~\citep{azarova2020potential}, etc.

To study algorithms that consider the group-buy option, we introduce the cooperative multi-agent ski-rental problem (\masr), consisting of \(M\in\mathbb{N}^+\) canonical ski-rental problems.
Each skier (agent) \(m\in \mathcal{M}\coloneqq \{1,2,\dots,M\}\) needs to solve a ski-rental problem with buy cost \(B\) and unknown active days \(N_m\).
Besides the individual rent and buy options, a (sub)group of \(M\) skiers can also buy a group pass with cost \(G\), which is evenly shared among them.
We develop \emph{optimal} deterministic and randomized policies from both \textit{group} and \textit{individual} perspectives.


When addressing multi-agent problems, a fundamental question arises: are \emph{symmetric} or \emph{asymmetric} policies more effective? Symmetric policies entail all agents adopting an identical strategy, whereas asymmetric policies allow each agent to employ a distinct strategy.
From a high-level perspective, symmetric policies offer advantages in scenarios involving group purchases, as they can leverage collective buying power to secure resources at a cost lower than the sum of individual purchases. Conversely, asymmetric policies, with their broader strategy space, enable agents to set varying purchasing thresholds, potentially confounding adversaries and enhancing strategic robustness.
In this paper, we demonstrate that the optimal policy for the multi-agent ski-rental problem is symmetric, except for the deterministic strategy in the special case where all agents are homogeneous, i.e., they have same active days. This finding suggests that in applications such as service subscriptions and community solar projects, participants should coordinate to make uniform decisions, regardless of the length of individual active days.


\subsection{Contributions}


In this paper, we study the multi-agent ski-rental problem (\masr) and propose optimal deterministic and randomized policies. The contributions of this paper are summarized as follows:

\(\blacktriangleright\)
In Section~\ref{sec:model-formulation}, we investigate offline optimal policies for \masr as benchmarks for our online policies.
From the group perspective, we assume all agents are \textit{cooperative} and aim to minimize the total cost of all agents.
We generalize the offline optimal policy for the canonical ski-rental problem to \masr, yielding the overall optimal offline policy.
Then, we highlight an intricate property of \masr---over the course of decision-making, the agents gradually become inactive as their active days pass, i.e., \(\{N_n\}_{n\le \ell}\) for \(\ell\) inactive agents, which alters the \emph{``state''} of the remaining active agents.
Taking the state into account, we propose the \emph{state-dependent optimal offline policy}, which can be interpreted as the overall optimal policy \emph{conditioned} on the state.
From the individual perspective, when agents are self-interested and aim to minimize their own cost, we propose the \emph{individual rational offline optimal policy} for each agent, which ensures the general-sum Nash equilibrium (NE) for the group.
According to these offline policies, we define three types of competitive ratios as objectives for \masr.




\(\blacktriangleright\) As a warm-up, we first study the homogeneous \masr in Section~\ref{sec:homogeneous-ma-ski-rental}.
The homogeneous \masr restricts the adversary to set the active days of all agents to the same value, which implies a potential advantage of the asymmetric policies (flexibility) over the symmetric ones (cheaper group buy).
Specifically, for deterministic strategies we prove that the optimal policy is whichever of the optimal symmetric or optimal asymmetric policies yields the smaller competitive ratio; which one prevails hinges on the relationship between the group-buy cost~\(G\) and the individual-buy cost~\(B\). In contrast, in the randomized setting the optimal symmetric policy always dominates its asymmetric counterpart in the homogeneous \masr.


\(\blacktriangleright\) We study deterministic policies in Section~\ref{sec:deterministic-heterogeneous}.
We show that the optimal deterministic policies should have a state-aware threshold \(T(\ell)\), a function of the state \(\{N_n\}_{n\le \ell}\) (simplified as \(\ell\), the number of inactive agents so far).
Specifically,
the policy keeps updating the threshold \(T(\ell)\) when agents become inactive and reveals their active days (see Figure~\ref{fig:algorithmic-idea}).
Within this framework, we propose two thresholds,
\(T\overall(\ell) \coloneqq \min\{ {(G - \sum_{n=1}^{\ell} N_n)} /{(M-\ell)}, B \}\)
and \(T\statedep(\ell) \coloneqq \min\left\{ {G}/{(M-\ell)}, B \right\}\),
and show that deploying \(\ceil{T\overall(\ell)}\) minimizes the overall competitive ratio, while deploying \(\ceil{T\statedep(\ell)}\) minimizes both state-dependent and individual rational competitive ratios.
The optimality of these thresholds is proved by showing that the competitive ratio upper bounds of the deterministic policies match our proposed competitive ratio lower bounds.

\(\blacktriangleright\) We study randomized policies in Section~\ref{sec:randomized-algorithm}.
We first show that, similar to the deterministic policies, randomized policies also need state-aware thresholds \(T(\ell)\), resampled from a state-dependent density function \(f_t(\ell)\) whenever the state changes.
We propose three randomized policies \(p_t\left(\ell, T\overall(\ell)\right)\), \(p_t\left(\ell, T\statedep(\ell)\right)\) and \(q_t\left(\ell, T\statedep(\ell)\right)\),
which can be interpreted as the randomized versions of the deterministic thresholds \(T\overall(\ell)\) and \(T\statedep(\ell)\), for each type of competitive ratio, respectively.
We prove that these three randomized policies are optimal by showing that the competitive ratio upper bounds of the randomized policies match our provided competitive ratio lower bounds, respectively.
We summarize the results of the deterministic and randomized policies in Table~\ref{tab:summary}.

\(\blacktriangleright\) Lastly, we reports an empirical study in Section~\ref{sec:experiments} that tests the performance of our proposed policies and validates our theoretical results.
From the worst-case competitive ratios perspective (in which sense our theoretical results hold), we show that for each type of competitive ratio,
the policy designed to minimize the particular competitive ratio indeed achieves the lowest value of that ratio among all deterministic (or, respectively, randomized) policies.
From the average competitive ratios perspective (average practical performance), we show that in a benign environment where the number of active days of agents is drawn from a stochastic distribution,
the deterministic policy optimized for the \emph{state-dependent} competitive ratio consistently outperforms every other policy, deterministic or randomized, that targets other types of competitive ratio.




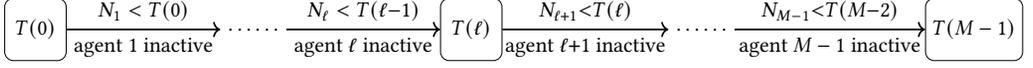
\begin{figure}[t]
    \centering
    \resizebox{0.98\linewidth}{!}{
        \begin{tikzpicture}
            \node[draw, rectangle, rounded corners, minimum width=1cm, minimum height=1cm] (rect1) {\(T(0)\)};

            \node[draw=white, rectangle, rounded corners, minimum width=0cm, minimum height=1cm, right=2.5cm of rect1] (rect2) {\(\ldots\ldots\)};

            \node[draw, rectangle, rounded corners, minimum width=1cm, minimum height=1cm, right=2.5cm of rect2] (rect3) {\(T(\ell)\)};

            \node[draw=white, rectangle, rounded corners, minimum width=0cm, minimum height=1cm, right=2.7cm of rect3] (rect4) {\(\ldots\ldots\)};

            \node[draw, rectangle, rounded corners, minimum width=1cm, minimum height=1cm, right=3.1cm of rect4] (rect5)  {\(T(M-1)\)};

            \draw[->, thick] (rect1.east) to node[above]{\(N_1 < T(0)\)}  node[below] {agent \(1\) inactive} (rect2.west);


            \draw[->, thick] (rect2.east) to node[above]{\(N_{\ell} < T({\ell{-}1})\)} node[below] {agent \(\ell\) inactive} (rect3.west);

            \draw[->, thick] (rect3.east) to node[above]{\(N_{\ell{+}1} {<} T({\ell})\)} node[below] {agent \(\ell{+}1\) inactive} (rect4.west);

            \draw[->, thick] (rect4.east) to node[above]{\(N_{M{-}1} {<} T(M{-}2)\)} node[below] {agent \(M-1\) inactive} (rect5.west);

        \end{tikzpicture}}
    \caption{Illustration for the state-aware thresholds. The sequence may terminate at any point when \(t=T(\ell)\) happens before \(t = N_{\ell+1} < T(\ell)\).}
    \label{fig:hereogeneous-deterministic-policy}
    \label{fig:algorithmic-idea}
\end{figure}

\begin{table}[t]
    \centering
    \caption{
        Summary of policies: distributions \(p_t(\cdot,\cdot)\) and \(q_t(\cdot,\cdot)\) are defined in~\eqref{eq:randomized-p-t} and~\eqref{eq:randomized-q-t}; denote \(T\upbra{\text{OV}}(\ell) = \min\left\{{\frac{G - \sum_{m=1}^\ell N_m}{M-\ell}}, B\right\} \), \(T\upbra{\text{SD}}(\ell) = \min\left\{{\frac{G}{M-\ell}}, B\right\}\) with \(\ell\) as the number of inactive agents. The detailed competitive ratios are deferred to Table~\ref{tab:summary-ratio} in Appendix.}\label{tab:summary}
    \begin{tabular}{|r|c|c|}
        \hline
         & Deterministic Policy (\S\ref{sec:deterministic-heterogeneous})
         & Randomized Policy (\S\ref{sec:randomized-algorithm})
        \\\hline
        Overall \(\ratio\)
         & \(\ceil*{T\upbra{\text{OV}}(\ell)}\) (Thm.~\ref{thm:deterministic-heterogeneous-policy-overall})
         & \(p_t \left(\ell, T\upbra{\text{OV}}(\ell)\right)\) (Thm.~\ref{thm:randomized-heterogeneous-policy-overall})
        \\
        \cline{1-3}
        State-dependent \(\ratio\)
         & \(\ceil*{T\upbra{\text{SD}}(\ell)}\) (Thm.~\ref{thm:deterministic-heterogeneous-policy-state-dependent})
         & \(p_t \left(\ell, T\upbra{\text{SD}}(\ell)\right)\) (Thm.~\ref{thm:randomized-heterogeneous-policy-state-dependent})
        \\
        \cline{1-3}
        Individual rational \(\ratio\)
         & \(\ceil*{T\upbra{\text{SD}}(\ell)}\) (Thm.~\ref{thm:deterministic-heterogeneous-policy-individual-rational})
         & \(q_t \left(\ell, T\upbra{\text{SD}}(\ell)\right)\) (Thm.~\ref{thm:randomized-heterogeneous-policy-individual-rational})
        \\
        \hline
    \end{tabular}
\end{table}

{\textbf{Overview of technical challenges.}}
The optimal policies' design and analysis for \masr consist of two key steps:
(i) prove that the optimal policies are symmetric, i.e., all agents sharing the same strategy is the best option, and
(ii) given a state, devise the corresponding best symmetric policy.
For (i), to rigorously prove that the optimal policies are symmetric, one needs to show that any asymmetric policy cannot outperform the symmetric ones, where
the total number of deterministic asymmetric policies can be as many as \(O(N^M)\) (say all \(M\) agents with the same active day \(N\)), let alone the randomized asymmetric policies.
This argument is highly non-trivial, and one needs to prove it for both deterministic and randomized policies and for all three types of competitive ratios.
For (ii),
even with the symmetric policy restrictions in place,
agents still can have heterogeneous active days, which makes the design of optimal policies intricate.
For one thing, the state of the group keeps changing as agents become inactive, and the change of state affects the optimal policies, which is not the case for the classical ski-rental problem.
For another, the given state implies the paid costs for these inactive agents, which in turn affects the optimal policies for the remaining active agents, and this impact differs in different types of competitive ratios. We provide a more detailed discussion on the challenges in Sections~\ref{subsec:determistic-algorithmic-idea} and~\ref{subsec:randomized-algorithmic-idea}.

\subsection{Related Works}\label{sec:related-works}

\textbf{Ski-rental problems.} The canonical ski-rental problem is one of the most fundamental online algorithm problems:
With an unknown number of active ski days \(N\in \mathbb{N}^+\),
the skier can either rent skis for a daily cost \(1\), or buy skis costing \(B \in \mathbb{N}^+\) and terminate the game.
The skier's goal is to minimize the total cost of skiing.
The optimal deterministic policy is to rent skis for the first \(B-1\) days and buy skis if the skier reaches the threshold day \(B\), with
a competitive ratio of \(2 - 1/B\)~\citep{karlin1988competitive}.
The optimal randomized policy picks a threshold day \(T\in\mathbb{N}^+\) from a specifically devised distribution with parameter \(1/B\),
achieving a competitive ratio of \(e/(e-1)\approx 1.58\)~\citep{karlin1994competitive}.

After the seminal works~\citep{karlin1988competitive,karlin1994competitive}, the ski-rental problem has been extended to various settings~\citep{lotker2008rent,ai2014multi,wang2020online,zhang2020combinatorial,dinitz2024controlling,purohit2018improving,wu2022competitive,wu2021competitive}.
Among them, the multi-slope ski-rental problem~\citep{lotker2008rent} generalizes the renting and buying to a smooth action: renting for a period with cost proportional to the length of the period.
The multi-shop ski-rental problem~\citep{ai2014multi,wang2020online}
considers the location-dependent costs of renting and buying,
and the skier accounts for shop locations when making decisions.
The combinatorial ski-rental problem~\citep{zhang2020combinatorial} considers the rental of multiple items,
where the cost functions of renting and buying are combinatorial regarding the picked items.
The two-level ski-rental problem~\citep{wu2021competitive} addresses scenarios involving multiple commodities, with three payment options: renting (pay-per-use), individual purchase (lifetime access for a single item), and combo purchase (lifetime access for all items).
While the combo purchase option resembles our group buy, it pertains to acquiring multiple distinct resources (e.g., skis and helmets), fulfilling a sequence of online requests for different resources, which is different from our multi-agent ski-rental problem.
Some new perspectives on the ski-rental problem have also been investigated, such as the tail risk of the randomized policy~\citep{dinitz2024controlling,christianson2024risk} and ski-rental problem with predictions~\citep{purohit2018improving,zeynali2021data,wei2020optimal}.

\textbf{Multi-agent online algorithms and online learning.}
Multi-agent online algorithms are largely underexplored in the literature. One paper that we are aware of is~\citet{istrate2024equilibria}, where the authors also study a multi-agent variant of the ski-rental problem.
Their formulation generalizes the individual buy option to the group license option,
in which multiple agents collaboratively buy a group license by \emph{pooling resources}. If the total pooled resources are insufficient to buy the group license, the agents must pay for renting.
\citet{istrate2024equilibria} focuses explicitly on the equilibrium of how each agent decides its pooling price with the {prediction} of others' actions.
This is a fundamentally different setting from our work, where we have both the individual and group-buy options, and given the cost of buying the group pass evenly shared among all buyers,
We focus on the optimal policies for the whole group of agents to optimize different types of competitive ratios.

\citet{meng2025group} studied the \emph{homogeneous} multi-agent ski-rental problem, which consists of a group of skiers with the same active days with \emph{only} the individual buy options (without the group-buy option).
They show that the optimal deterministic policy is asymmetric with competitive ratio \(e/(e-1)\) when the number of skiers is large.
This is because the asymmetric buying policies can ``trick'' the adversary who is restricted to set the same active days for all skiers.
This result was also mentioned as a special case in~\citet[Algorithm 3]{wu2022competitive}.
The problem studied in~\citet{meng2025group} is different from ours: (i) we study the general multi-agent ski-rental problem where agents may have different active days, and (ii) we consider the group-buy option, none of which is considered in~\citet{meng2025group}.
Later on, we will show that the optimal policies for our general multi-agent ski-rental problem (\masr) are symmetric and state-aware, which is different from the asymmetric policies in~\citet{meng2025group}.


Beyond the ski‑rental problem, several other classical online decision problems have been studied through a multi‑agent lens.  The canonical \(k\)-server problem~\citep{manasse1988competitive}, for example, can be interpreted as \(k\) agents that must coordinate the placement of mobile servers to service requests across a metric space.  Likewise, the metrical task system (MTS) problem~\citep{borodin1992optimal} has been extended to a collaborative setting---referred to as \emph{collective MTS}---in which multiple agents work together to attain a competitive ratio that improves upon the single‑agent benchmark~\citep{cosson2024barely}.
Outside the online algorithm literature, the multi-agent setting in other related fields, such as online bandits learning~\citep{boursier2019sic,wang2023achieving}, reinforcement learning~\citep{tan1993multi,nowe2012game}, has been well studied.
Our work is one of the first attempts to study the multi-agent online algorithms with competitive analysis.


\section{Model Formulation for Multi-Agent Ski-Rental}
\label{sec:model-formulation}

We consider the multi-agent ski-rental (\masr) problem with \(M\in\mathbb{N}^+\) skiers (agents) as a \emph{group}.
Each skier \(m \in \mathcal{M} \coloneqq \{1,2,\dots, M\}\) has \(N_m\in\mathbb{N}^+\) active days to ski.
To ski on each active day, each skier can either rent the ski for a day with cost \(1\), or (individually) buy an individual pass with cost \(B\in\mathbb{N}^+\) for the rest of the snow season (i.e., all its remaining active days).
In addition, agents can also (collectively) buy a \emph{``group''}  pass with cost \(G\in\mathbb{N}^+\) for the ski access of the whole group for the rest of the snow season, where the cost \(G\) is shared equally among all agents participating in the purchase.
To remove trivial cases, we assume \(B < G < MB\).
Because when \(G \ge MB\) (resp. \(G \le B\)), the group (resp. individual) ski pass is never beneficial.
For simplicity of notation, we label agents in an ascending order regarding their active days \(N_1 \le N_2 \le \cdots \le N_m\) (break the tie arbitrarily), unknown to online algorithms.
Especially, with the given \(\{M, B, G\}\), denote a \masr instance as \(\mathcal{I} \coloneqq \{N_1, \dots, N_M\}\) and the set of all possible instances as \(\mathfrak{I}\).


\textbf{Decisions and policies.}
Denote \(\mathcal{A} \coloneqq \{\rent, \buy, \group, \leave\}\) as the action space of an agent, \(\rent\) means that the agent rents for a day, \(\buy\) means that the agent buys pass individually, \(\group\) means that the agent buys the group ski pass (with others also taking \(\group\) actions),
and \(\leave\) means that the agent either is inactive or had already bought the pass.
Denote \(\mathcal{H}_t \coloneqq \{a_{m,s}\}_{m\in\mathcal{M}, s \le t}\) as the history of the decision process up to day \(t\), where \(a_{m,s} \in \mathcal{A}\) is the action taken by agent \(m\) on day \(s\).
Let \(\pi\) denote a \emph{deterministic} policy for all \(M\) agents which maps the history \(\mathcal{H}_t\) to one of the four actions of each agent \(m\in\mathcal{M}\), i.e., \(\pi: \mathcal{H}_t \to \mathcal{A}^M\).
Denote \(\Pi\) as the set of all deterministic policies for \masr.
Denote the probability density function \(f(\cdot)\)
over the set of deterministic policies in \(\Pi\) as a \emph{randomized} policy, which maps any deterministic policy \(\pi\) to a probability, i.e., \(f: \Pi \to \Delta^{\abs{\Pi}-1}\),
where \(\Delta^{\abs{\Pi}-1}\) denotes the probability simplex over the set of deterministic policies \(\Pi\).
This policy \(f(\cdot)\) means that at the beginning (or a specific point) of the online problem,
it samples a deterministic policy \(\pi\) from the distribution \(f(\cdot)\) and then follows it.

\textbf{Cost.}
The cost of a deterministic policy \(\pi\) is defined as the total cost of all agents in the group as
\begin{align}\label{eq:group-cost}
    \cost\left( \pi, \mathcal{I} \right)
    \coloneqq \sum_{m=1}^M  \cost_m\left( \pi, \mathcal{I} \right)
    = \sum_{m=1}^M  \sum_{t=1}^{N_m} c_m(\pi; \mathcal{H}_t),
\end{align}
where the \(\mathcal I\) is the instance of the \masr problem, containing agents' active days, and \(\pi_m(\mathcal{H})\) is the action taken by agent \(m\) under the policy \(\pi\) and history \(\mathcal{H}\),
and for agent \(m\), the per-action cost \(c_m(\pi;\mathcal H)\)  is defined as follows,
\begin{align}
    c_m(\pi; \mathcal{H}) \coloneqq
    \begin{cases}
        1
         & \text{if } \pi_m(\mathcal{H}) = \rent
        \\
        B
         & \text{if } \pi_m(\mathcal{H}) = \buy
        \\
        {G} / {\sum_{m=1}^M \mathbb{I}[\pi_m(\mathcal{H}) {=} \group]}
         & \text{if }
        \pi_m(\mathcal{H}) = \group
        \\
        0
         & \text{if } \pi_m(\mathcal{H}) = \leave
    \end{cases}.
\end{align}
For a randomized policy \(f\), the total cost is defined as \(
\cost\left( f, \mathcal{I} \right)
\coloneqq
\mathbb{E}_{\pi\sim f(\cdot)}[\cost\left( \pi, \mathcal{I} \right)] = \sum_{\pi \in \Pi} f(\pi) \cdot \cost\left( \pi, \mathcal{I} \right).
\)

\subsection{Benchmark Offline Policies and Competitive Ratios}
\label{subsec:benchmark-policies}





We provide three offline optimal policies for \masr, where the first two are from the group perspective (minimize the total cost of all agents) and the last is from the individual agent aspect.

\textbf{Overall offline optimal policy.}
Given the knowledge of the active days of all agents, the offline optimal policy is either to (1) collectively buy the group pass in the beginning,
or (2) individually buy the individual pass in the beginning or rent for all active days, whichever is smaller.
The policy's total cost is
\begin{align}\label{eq:ovopt}
    \ovopt\left( \mathcal{I} \right)
    \coloneqq \min\left\{G, \sum_{m=1}^M \min\{B, N_m\}\right\}.
\end{align}
While the overall offline optimal policy
is a natural extension of that in canonical ski-rental,
the multi-agent nature of \masr introduces peculiar novel \emph{intricacy}:
During the decision-making procedure,
the agents gradually become inactive, which alters the number of active agents in the group.
Below, we define the another offline policy taking this intricacy into account.

\textbf{State-dependent offline optimal policy.}
In general, agents may become inactive due to either (i)
their active days being passed,
or (ii) the agents buy the individual/group pass.
However, as we are considering the offline optimal policy (and the online optimal policy later on),
the agents becoming inactive \emph{before} the termination of the sequential procedure must be due to all their active days being passed
(i.e., (i)); otherwise, the policy  (i.e., buy early) would be suboptimal~\citep{karlin1988competitive}.
Therefore, we only consider the case where the agents become inactive without buying,
and the \emph{state} of the group as the \emph{revealed} active days of the agents that becomes inactive at the current day.
As the number of active days of the agents is revealed sequentially when the agents become inactive, we define the \emph{state} of the group at the certain point of the decision-making procedure as the revealed number of active days of the agents that have already become inactive.


\begin{definition}[State]\label{def:state}
    In \masr, suppose that \(\ell\) agents have already become inactive and revealed the number of their active days.
    The \emph{state} at this moment is
    \(
    \{N_n\}_{n\le \ell},
    \)
    where \(N_n\) denotes the number of active days reported by the \(n\)-th inactive agent. When appropriate, we simplify the state by \(\ell\) just to indicate the number of inactive agents.
\end{definition}

We define the \emph{state-dependent} offline optimal cost as follows,
\begin{align}\label{eq:sdopt}
    \sdopt(\mathcal{I}\vert \{N_n\}_{n\le \ell})
     & \coloneqq \sum_{n=1}^\ell N_n + \min\left\{G, \sum_{n=\ell+1}^M \min\{N_n, B\}\right\}.
\end{align}
where the second term in the RHS is the overall optimal cost for the remaining \(M-\ell\) agents (see~\eqref{eq:ovopt}).
This state-dependent policy can be interpreted as the overall offline optimal policy for the remaining active agents, \emph{conditioned} that the first \(\ell\) agents are inactive.


\textbf{Individual rational offline optimal policy.}
\emph{Individually rational} agents aim to minimize their own individual cost.
Denote \(\ell_* \coloneqq \min \left\{0\le \ell \le M-1:  N_{\ell+1} > \min\left\{ {G}/{(M-\ell)}, B \right\}\right\}\) as the number of agents better off to rent than buy the ski pass.
Then, for each agent \(m\), the individual offline optimal policy implies the cost for each agent as follows,
\begin{align}\label{eq:indopt}
    \opt_{m}\individual(\mathcal{I})
    \coloneqq
    \begin{cases}
        N_m
         & \text{ if } N_m \leq \min\left\{ \frac{G}{M-\ell_*}, B \right\}
        \\
        \min\left\{\frac{G}{M-\ell_*}, B\right\}
         & \text{ if } N_m >  \min\left\{ \frac{G}{M-\ell_*}, B \right\}
    \end{cases},
\end{align}
where the first case means that agent \(m\) is better off to rent for all active days, and the second case means that the agents \(\{\ell_*+1,\ell_*+2,\dots, M\}\) are better off to buy the ski pass---either group or individual, whichever is cheaper.
Proposition~\ref{prop:indopt} shows that the individual offline optimal policy achieves the Nash equilibrium for the group.
All proofs of this paper are deferred to the appendices.


\begin{proposition}\label{prop:indopt}
    The offline policy in~\eqref{eq:indopt} achieves the general-sum Nash equilibrium for all agents.
\end{proposition}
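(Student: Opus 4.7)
The plan is to verify, case-by-case, that no agent can strictly lower its own cost by unilaterally deviating from the profile described in~\eqref{eq:indopt}. Under this profile, agents $1,\ldots,\ell_*$ (the \emph{renters}) rent for all of their active days, while agents $\ell_*+1,\ldots,M$ (the \emph{buyers}) obtain a pass, taking the group pass together whenever $G/(M-\ell_*)\le B$ and individual passes otherwise; in either case every buyer pays $c \coloneqq \min\{G/(M-\ell_*),B\}$ and every renter $m$ pays $N_m$. Two inequalities supplied by the definition of $\ell_*$ will drive the argument: (i) $N_{\ell_*+1} > c$, which combined with $N_1 \le \cdots \le N_M$ gives $N_m > c$ for every buyer $m$, and (ii) if $\ell_* \ge 1$, the minimality of $\ell_*$ forces $N_{\ell_*} \le \min\{G/(M-\ell_*+1),B\}$, hence $N_m \le \min\{G/(M-\ell_*+1),B\}$ for every renter $m$.

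Next I would walk through the deviations. For a buyer $m$ currently paying $c$, renting yields $N_m > c$ by (i), and switching between individual and group buy either raises the cost to $B \ge c$ (when the current choice is the group pass) or, in the individual-buy regime, leaves the deviator as the sole member of a group coalition paying the full $G > B = c$. For a renter $m$ currently paying $N_m$, buying individually costs $B \ge N_m$ by (ii); joining the existing group coalition (when one is active) enlarges it to $M-\ell_*+1$ members so that $m$ pays $G/(M-\ell_*+1) \ge N_m$ again by (ii); and forming a singleton coalition costs $G > N_m$. Every such deviation is weakly worse, which is exactly the (pure-strategy) Nash equilibrium condition.

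The main obstacle I expect is the \emph{renter-joins-group} deviation: adding one member reduces the per-member group-buy price, so the renter's rental cost $N_m$ must be compared against $G/(M-\ell_*+1)$ rather than the ``posted'' price $G/(M-\ell_*)$. This is precisely the comparison baked into the definition of $\ell_*$ through its minimality clause at $\ell = \ell_*-1$, so the bookkeeping works out but must be tracked carefully. The edge cases $\ell_* = 0$ (no renters, verify only the buyers' side) and $\ell_* = M$ under the natural convention that no agent prefers buying (verify only the renters' side) cleanly collapse to one half of the case analysis and do not create new difficulties.
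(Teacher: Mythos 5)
Your proof is correct, but it takes a genuinely different route from the paper's. The paper argues by \emph{iterated dominance}: it exhibits the two-agent payoff matrix, observes that the shortest-horizon agent's rent action is weakly dominant whenever $N_1 \le \min\{G/M, B\}$ regardless of what the others do, fixes that agent's action, and repeats on the reduced game until it reaches agent $\ell_*+1$, at which point all remaining agents prefer to buy. You instead verify the Nash condition directly, enumerating every unilateral deviation for buyers and renters and bounding each one using the two inequalities extracted from the definition of $\ell_*$ (the defining inequality at $\ell = \ell_*$ for buyers, and the minimality clause at $\ell = \ell_* - 1$ for renters). The paper's approach buys a slightly stronger conclusion --- the profile survives iterated elimination, not merely that it is a best-response fixed point --- and scales cleanly to $M$ agents by induction on the reduced games. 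Your approach is more elementary and, notably, more careful on the one deviation the paper's sketch glosses over: a renter joining the group coalition changes the per-member price from $G/(M-\ell_*)$ to $G/(M-\ell_*+1)$, and the comparison must be made against the latter; you correctly trace this back to the minimality of $\ell_*$. Your handling of the edge cases $\ell_* = 0$ and the empty-set convention $\ell_* = M$ is also more explicit than the paper's.
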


\textbf{Competitive ratios.}
Given the three types of offline optimal policies, we can define three competitive ratios (CR) for the multi-agent ski-rental problem as follows,
\begin{align}
    \label{eq:overall-cr}
    \ovcr(\pi\vert \{N_n\}_{n\le \ell})
     & \coloneqq \max_{\mathcal{I} \in \mathfrak{I}\vert \{N_n\}_{n\le \ell}} \frac{\cost(\pi, \mathcal{I})}{\ovopt(\mathcal{I})}, \,\forall \ell\in\mathcal{M},
    \\\label{eq:state-dependent-cr}
    \sdcr(\pi \vert \{N_n\}_{n\le \ell} )
     & \coloneqq \max_{\mathcal{I} \in \mathfrak{I}\vert \{N_n\}_{n\le \ell}}  \frac{\cost(\pi, \mathcal{I})}{\sdopt(\mathcal{I}\vert \{N_n\}_{n\le \ell})}, \,\forall \ell\in\mathcal{M},
    \\\label{eq:individual-cr}
    \ratio\upbra{\text{IND}}_m (\pi\vert\{N_n\}_{n\le \ell})
     & \coloneqq \max_{\mathcal{I} \in \mathfrak{I}\vert \{N_n\}_{n\le \ell}}  \frac{\cost_m(\pi, \mathcal{I})}{\opt_m\upbra{\text{IND}}(\mathcal{I})}, \,\forall m, \ell\in\mathcal{M},
\end{align}
where the \(\max_{\mathcal{I} \in \mathfrak{I}\vert \{N_n\}_{n\le \ell}}\) considers the worst-case scenario of the cost of the policy \(\pi\) over all possible instances of \masr where first \(\ell\) agents have active days \(\{N_n\}_{n\le \ell}\).
Among all these competitive ratio definitions, the overall competitive ratio \(\ovcr(\pi\vert \emptyset)\) is the counterpart of that in canonical ski-rental,
while all others are novel and particularly defined for \masr.

\section{Warm-up: Policies for Homogeneous Multi-Agent Ski-Rental}
\label{sec:homogeneous-ma-ski-rental}
\label{sec:homogeneous-ski-rental}



In this warm-up section, we examine the \emph{homogeneous} multi-agent ski-rental problem, in which every agent is active for the same number of days, i.e., \(N_1 = N_2 = \dots = N_M = N\).
Because the state \(\{N_n\}_{n \le \ell}\) does not change over the decision procedure, there is not need to separately consider the three competitive-ratio notions introduced in Section~\ref{subsec:benchmark-policies}; we therefore focus exclusively on the overall competitive ratio \(\ratio\) defined in~\eqref{eq:overall-cr} in this section.

Although the homogeneous case is a special instance of the broader heterogeneous \masr\ studied later, it restricts the adversary to assign identical active-day horizons to all agents, which can give \emph{asymmetric} policies an edge.
We show in Section~\ref{sec:homogeneous-deterministic-policy} that, among deterministic strategies, certain asymmetric policies \emph{may} achieve better competitive ratios than any symmetric alternative.
In contrast, Section~\ref{sec:homogeneous-randomized-policy} proves that the optimal {randomized} policy remains symmetric.

The main body of the paper (Sections~\ref{sec:deterministic-heterogeneous} and~\ref{sec:randomized-algorithm}) returns to the general heterogeneous setting and demonstrates that optimal \emph{symmetric} policies---augmented by the cheaper group-buy option---always outperform asymmetric ones.
Thus, asymmetry only helps in the homogeneous deterministic case but offers no advantage in the heterogeneous problem.

\subsection{Deterministic Policy}\label{sec:homogeneous-deterministic-policy}

Without the group-buy option, \citet[Section 4]{meng2025group} and~\citet[Algorithm 3]{wu2022competitive} show that the optimal deterministic policy is \emph{asymmetric} (see~\citet[Eq.(1)]{meng2025group} for the non-closed-form solution), whose competitive ratio reaches to \(\frac{e}{e-1}\, (< 2)\) when the number of agents \(M\) is large enough~\citep[Proposition 4]{meng2025group}.
This is strictly better than the \(2\) competitive ratio of the single-agent symmetric policy~\citep{karlin1988competitive}.
With the group-buy option, the symmetric deterministic policy also achieves a competitive ratio better than \(2\). We present the optimal symmetric policy as follows.
\begin{proposition}\label{thm:homogeneous-deterministic-policy}
    For homogeneous \masr, the optimal symmetric deterministic policy is to set the threshold \(T\) as \(\ceil*{\frac{G}{M}}\) for all agents, and the competitive ratio is \(2 - \frac{M}{G}\) if \(\frac{G}{M}\) is an integer.\footnote{The competitive ratio becomes \(1 + \frac{M}{G}\floor*{\frac{G}{M}}\) if \(\frac{G}{M}\) is not an integer, which is still strictly less than \(2\).}
\end{proposition}
\noindent This proposition is derived by casting the multi-agent ski-rental problem as a single-agent problem with a daily rent cost \(M\), a total \(N\) active days, and the buy cost \(G\), which corresponding optimal threshold is \(\ceil*{\frac{G}{M}}\)~\citep{karlin1988competitive}.

Then, the optimal deterministic policy for homogeneous \masr is the one with the smaller competitive ratio between the optimal asymmetric and symmetric policies.
Since the competitive ratio of the optimal asymmetric policy does not have a closed-form expression, we cannot give an analytical criterion to determine which is better.
However, note that the competitive ratio of asymmetric policy decreases in terms of the number of agents \(M\) and converges to \(\frac{e}{e-1}\) as \(M\to\infty\)~\citep[Proposition 4]{meng2025group}.
Hence, when the group-buy option has a relative large discount (formally, \(\frac{G}{M} > \frac{e-1}{e-2} \approx 2.39\), which is usually true in practices) and the number of agents \(M\) is large,
the optimal symmetric policy is better.

We conduct a numerical experiment to compare the competitive ratios of the optimal asymmetric and symmetric policies for different values of \(M\) and \(G\) in Figure~\ref{fig:homogeneous-deterministic-policy-cr}.
We see that when the group-buy cost \(G\) is relatively large (e.g., \(B=0.5G\) and \(B=0.55G\)), the optimal policy is asymmetric, while when the group-buy cost \(G\) is relatively small (e.g., \(B=0.6G\) and \(B=0.7G\)), the optimal policy is symmetric.

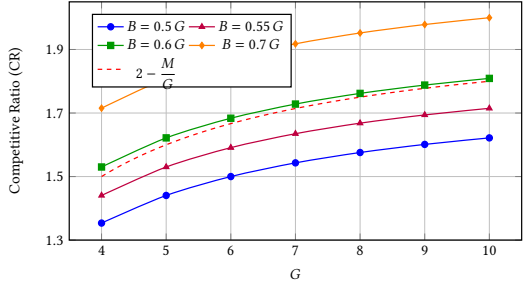
\begin{figure}[tbhp]
    \centering
    \resizebox{0.5\textwidth}{!}{
        \begin{tikzpicture}
            \begin{axis}[
                    xlabel={$G$},
                    ylabel={Competitive Ratio (CR)},
                    xmin=3.5, xmax=10.5,
                    ymin=1.30, ymax=2.05,
                    xtick={4,5,6,7,8,9,10},
                    ytick={1.30,1.50,1.70,1.90,2.10},
                    grid=both,
                    width=12cm,
                    height=7cm,
                    thick,
                    legend style={at={(0.05,0.95)},anchor=north west,legend columns=2},
                ]
                \addplot[
                    color=blue,
                    mark=*,
                    smooth,
                ] coordinates {
                        (4,  1.3536)
                        (5,  1.4405)
                        (6,  1.5000)
                        (7,  1.5431)
                        (8,  1.5757)
                        (9,  1.6012)
                        (10, 1.6217)
                    };
                \addlegendentry{$B=0.5\,G$}

                \addplot[
                    color=purple,
                    mark=triangle*,
                    smooth,
                ] coordinates {
                        (4,  1.4405)
                        (5,  1.5301)
                        (6,  1.5910)
                        (7,  1.6349)
                        (8,  1.6681)
                        (9,  1.6940)
                        (10, 1.7148)
                    };
                \addlegendentry{$B=0.55\,G$}

                \addplot[
                    color=green!60!black,
                    mark=square*,
                    smooth,
                ] coordinates {
                        (4,  1.5301)
                        (5,  1.6217)
                        (6,  1.6836)
                        (7,  1.7282)
                        (8,  1.7618)
                        (9,  1.7880)
                        (10, 1.8090)
                    };
                \addlegendentry{$B=0.6\,G$}

                \addplot[
                    color=orange,
                    mark=diamond*,
                    smooth,
                ] coordinates {
                        (4,  1.7153)
                        (5,  1.8090)
                        (6,  1.8724)
                        (7,  1.9178)
                        (8,  1.9520)
                        (9,  1.9786)
                        (10, 2.0000)
                    };
                \addlegendentry{$B=0.7\,G$}

                \addplot[
                    color=red,
                    domain=4:10,
                    samples=100,
                    dashed,
                ] {2 - 2/x};
                \addlegendentry{$2 - \dfrac{M}{G}$}
            \end{axis}
        \end{tikzpicture}}
    \caption{Competitive ratio comparison of the optimal asymmetric and  symmetric policies for \(M=2\) agents: all the solid lines corresponds to the competitive ratio of the optimal asymmetric policies for different relations between individual-buy cost \(B\) and group-buy cost \(G\), while the red dashed line for \(2- \frac{M}{G}\) corresponds to the competitive ratio of the optimal symmetric policy.}\label{fig:homogeneous-deterministic-policy-cr}
\end{figure}

\subsection{Randomized Policy}\label{sec:homogeneous-randomized-policy}

Without the group-buy option, \citet[Section 3]{meng2025group} study the optimal randomized policy the homogeneous \masr problem.
They show that the optimal randomized policy can be \emph{symmetric} among all agents, and the optimal threshold distribution is given by
\(
p_t = \frac{1}{B}\left( 1 - \frac{1}{B} \right)^{\frac{1}{B} - t}, \forall t\in \left\{1,2,\dots, B\right\},
\)
and the optimal competitive ratio is still \(\frac{e}{e-1}\) as the single-agent ski-rental problem.

As the group-buy option only reduces the cost when agents buy the pass together, the optimal randomized policy for the homogeneous \masr problem with group-buy option is still symmetric, which is summarized as follows.

\begin{theorem}\label{thm:homogeneous-randomized-policy}
    For homogeneous \masr, the optimal randomized policy is to randomly pick a threshold day \(T\) from the following distribution for buying a group pass if the group reaches the sampled threshold,
    where the probability density function is, \begin{align}\label{eq:homogeneous-randomized-policy}
        r_t \coloneqq \frac{M}{G}\left( 1 - \frac{M}{G} \right)^{\frac{G}{M} - t}, \forall t\in \left\{1,2,\dots, \ceil*{\frac{G}{M}}\right\}.
    \end{align}
    Especially, the policy has a competitive ratio  \(\ratio = {1} \big/ \left( 1 - \left( 1 - \frac{M}{G} \right)^{\frac{G}{M}} \right)\).
\end{theorem}

Different from the asymmetric and symmetric comparisons in the deterministic case, the optimal randomized policy is symmetric for sure.
This competitive ratio is asymptotically \(\frac{e}{e-1}\) when \(\frac{G}{M} \to\infty\), the same as that of the canonical ski-rental~\citep{karlin1994competitive}.
Similar to the optimal deterministic symmetric policy in Theorem~\ref{thm:homogeneous-deterministic-policy}, the optimal randomized policy can also be interpreted as the optimal policy for a single-agent ski-rental problem.


\section{Deterministic Policies for Multi-Agent Ski-Rental}
\label{sec:deterministic-heterogeneous}

In this section, we develop optimal deterministic policies.
We first present high-level algorithmic ideas in Section~\ref{subsec:determistic-algorithmic-idea},
a discussion on the superiority of the symmetric policy in the general \masr in Section~\ref{subsec:symetric-superiority-deterministic},
and specifically propose a policy for minimizing the overall \(\ratio\) (Section~\ref{subsec:deterministic-heterogeneous-policy-overall}), state-dependent \(\ratio\) (Section~\ref{subsec:deterministic-heterogeneous-state-dependent}), and individual rational \(\ratio\) (Section~\ref{subsec:deterministic-heterogeneous-policy-individual-rational}), respectively.

\subsection{Algorithm Design Challenge and Key Ideas}\label{subsec:determistic-algorithmic-idea}

\textbf{Algorithm design challenge.}
As mentioned in Section~\ref{subsec:benchmark-policies},
\masr has one crucial intricacy: the \emph{state} of the group changes over time, where the state
is the revealed active days of the \(\ell\) agents that has been inactive so far, i.e., \(\{N_n\}_{n \le \ell}\),
often simplified as an \(\ell\) alone.
As the decision proceeds, agents gradually become inactive and reveal their active days. The accumulation of these revealed active days spurs the online algorithms (with different \(\ratio\) objectives) to make corresponding alterations.
This kind of state-aware decision-making does not present in canonical ski-rental, and one needs to design \emph{state-aware policies} to adapt to the state change.
Specifically, given a state, the online algorithm needs to decide
when to buy the pass (i.e., determine the thresholds).

\textbf{Key design ideas.}
First, to choose between the same threshold for the group pass (symmetric policy) or different thresholds for the individual passes (asymmetric policy), we rigorously show that
the symmetric policy is better than the asymmetric policy.
Second, given the state \(\{N_n\}_{n\le \ell}\), the symmetric policy picks the same threshold for all {active} agents, which we denote as \(T(\{N_n\}_{n\le \ell})\), often abbreviated as \(T(\ell)\).
Whenever the state changes, say from \(\{N_n\}_{n\le \ell-1}\) to \(\{N_n\}_{n\le \ell}\) (i.e., agent \(\ell\) becomes inactive, or formally {\(N_\ell < T(\ell - 1)\)}), the algorithm needs to alter the threshold for the remaining agents.
The new given state \(\{N_n\}_{n\le \ell}\) implies paid costs for these inactive agents---the sum of the revealed active days of the inactive agents \(\sum_{n=1}^\ell N_n\).
Depending on how a specific objective (i.e., overall, state-dependent, and individual rational) accounts for this paid cost,
we design the corresponding threshold functions \(T(\ell)\).
Figure~\ref{fig:hereogeneous-deterministic-policy} illustrates how the state-aware threshold \(T(\ell)\) changes when the state changes.


\textbf{Deterministic policy {framework}.} With the above key ideas, we propose a general framework (Algorithm~\ref{alg:deterministic-optimal-policy}) of the deterministic policy.
Besides the prior knowledge of \((M, B, G)\), the algorithm takes a threshold function \(T(\ell)\) as input,
which is specifically devised for each type of competitive ratio in the following subsections.
For each active day \(t\) (with at least one agent active at the day), the algorithm checks whether the threshold \(T(\ell)\) is reached.
If so (i.e., \(t=T(\ell)\)), the algorithm decides to buy either the group pass (if \(T(\ell) < B\)) or the individual pass (otherwise) and terminates (Lines~\ref{line:deterministic-buy}-\ref{line:deterministic-terminate}).
Otherwise, the algorithm rents for all active agents (Line~\ref{line:deterministic-rent}).
After each active day, the agents with \(N_m=t\) reveal (i.e., become inactive), and the algorithm updates the number of inactive agents \(\ell\) and records the revealed active days of these agents \(\{N_n\}_{n\le \ell}\) (Lines~\ref{line:deterministic-begin-update-inactive}-\ref{line:deterministic-end-update-inactive}).
\begin{algorithm}[tbhp]
    \caption{Framework for deterministic symmetric policy for \masr}
    \label{alg:deterministic-optimal-policy}
    \begin{algorithmic}[1]
        \Input number of agents \(M\), group pass cost \(G\), individual pass cost \(B\), threshold function \(T(\cdot)\)
        \Initial number of inactive agents \(\ell \gets 0\)
        \For{each active day \(t\) (with active agent(s))}
        \If{\(t = T(\ell)\)}
        \State \textbf{Buy} \(\begin{cases}
            \text{Group pass}      & \text{if } T(\ell) < {B}
            \\
            \text{Individual pass} & \text{otherwise}
        \end{cases}\) \label{line:deterministic-buy}
        \State \textbf{Terminate} \label{line:deterministic-terminate}
        \Else
        \State \textbf{Rent} for all active agents \label{line:deterministic-rent}
        \EndIf

        \State Reveal \(\mathcal{L}_t\): set of agents become inactive at the end of day \(t\) \label{line:deterministic-begin-update-inactive}
        \If{\(\mathcal{L}_t \neq \emptyset\)}
        \State \(\ell \gets \ell + \abs{\mathcal{L}_t}\)
        \State \(N_m \gets t\) for all \(m \in \mathcal{L}_t\)
        \EndIf\label{line:deterministic-end-update-inactive}

        \EndFor
    \end{algorithmic}
\end{algorithm}

\textbf{Technical analysis challenge.}
As the key ideas above illustrate, we need to rigorously show that the optimal deterministic policy is symmetric, where one needs to prove that for any asymmetric policy, there exists a symmetric policy that achieves a better competitive ratio.
Given the heterogeneous active days of agents, the number of possible asymmetric policies is large, making the analysis highly non-trivial.
For another thing, as the state changes over time, the algorithm needs to adaptively change the threshold as well. This iterative (and time-varying) decision-making process makes the analysis of the future performance of an online algorithm challenging.
Whenever the state changes, to minimizing the corresponding (worst-case) competitive ratio, the next threshold needs to take account all the possible active days for the remaining active agents in the future, i.e., the instance space \(\mathfrak{I}\vert \{N_n\}_{n\le \ell}\) conditioned on the current state \(\{N_n\}_{n\le \ell}\),
which is again a large space of possible instances.

\subsection{Dominance of Symmetric Deterministic Policies}\label{subsec:sym-det}\label{subsec:symetric-superiority-deterministic}

This subsection establishes a structural result that guides the design of all deterministic algorithms analyzed later.  Intuitively, when every active agent follows the same threshold rule---a \emph{symmetric} policy---the group can never perform worse than under any different assignment of thresholds (\emph{asymmetric} policy).  The next lemma formalizes this intuition.

\begin{lemma}[Symmetry beats asymmetry]\label{lemma:symmetric-better-than-asymmetric-deterministic}
    Fix an arbitrary state \(\{N_n\}_{n\le \ell}\).  For every deterministic \emph{asymmetric} policy there exists a deterministic \emph{symmetric} policy whose competitive ratio---under each of the three definitions introduced earlier---is strictly smaller.
\end{lemma}

Lemma~\ref{lemma:symmetric-better-than-asymmetric-deterministic} permits us to restrict attention to symmetric strategies in the remainder of the section.  Its proof, deferred to Appendix~\ref{sec:symmetric-better-than-asymmetric-deterministic-proof}, iteratively adjusts any asymmetric policy: at each step the adjustment reduces the ``asymmetricity'' of the policy, which decreases the competitive ratio; the process ends when all thresholds coincide, yielding a symmetric policy.

Once symmetry is enforced from the perspective of the online algorithm---i.e., all active agents adopt the same threshold \(T\)---the adversary,s worst‑case instance is to assign an identical number of active days, \(N_m = T\) for every still‑active agent.  Any other choice would only \emph{lower} the competitive ratio, contradicting the adversary’s maximizing objective \citep{karlin1988competitive}.  Consequently, when constructing optimal deterministic policies, we only need to consider those instances in which all active agents share the same number of active days.

\subsection{Optimal Deterministic Policy for Overall Competitive Ratio}
\label{subsec:deterministic-heterogeneous-policy-overall}

The overall offline optimal cost in~\eqref{eq:overall-cr} is the minimal between the group-buy cost \(G\)
and the sum of the minimum among the individual-buy cost and the active days of all agents, i.e., \(\sum_{m=1}^M \min\{N_m, B\}\).
To match the overall optimal,
these paid costs \(\sum_{n=1}^\ell N_n\) are considered as \emph{irrevocable losses}.
When the group pass is cheaper, the policy needs to be more aggressive in buying it, given the paid cost \(\sum_{n=1}^\ell N_n\).
Specifically,
the \emph{``equivalent''} cost of buying the group pass,
averaged by all \(M-\ell\) remaining active agents,
would become \(\frac{G - \sum_{n=1}^\ell N_n}{M-\ell}\).
Considering the individual pass cost \(B\), the policy should choose the group pass when the equivalent average group cost is smaller than \(B\) and otherwise, the individual pass.
Formally, the threshold for the overall competitive ratio at state \(\ell\) is
\begin{align}\label{eq:threshold-overall}
    T\overall(\ell)  = T\overall(\{N_n\}_{n\le \ell})  \coloneqq \min\left\{ {\frac{G - \sum_{n=1}^{\ell} N_n}{M-\ell}}, B \right\}.
\end{align}


\begin{theorem}\label{thm:deterministic-heterogeneous-policy-overall}
    The deterministic policy of Algorithm~\ref{alg:deterministic-optimal-policy} with input threshold \(\ceil*{T\overall(\ell)}\) in~\eqref{eq:threshold-overall} is optimal for minimizing overall competitive ratio, denoted as \(\pi_*\overall\).
    When \(T\overall(\ell)\) is an integer,\footnote{
        For the sake of simplicity in presentation and discussion, this section only presents the ratios for integer thresholds \(T(\ell)\).
        The non-integer counterparts can be easily derived with careful calculations but introduce unnecessary complications in understanding the results.}
    the competitive ratio can be expressed as follows,
    \begin{align}\label{eq:deterministic-overall-competitive-ratio}
         & \ovcr(\pi_*\overall \vert \{N_n\}_{n\le\ell})
        \!=\!
        \begin{cases}
            2 - \frac{M-\ell}{G}
             & \text{if } G \le (M-\ell)B
            \\
            1 + \frac{(M-\ell)(B-1)}{G}
             & \text{if } (M{-}\ell) B < G \le \sum_{n=1}^{\ell}N_n {+} (M{-}\ell) B
            \\
            1 + \frac{(M-\ell)(B-1)}{\sum_{n=1}^{\ell} N_n + (M-\ell)B}
             & \text{if } G > \sum_{n=1}^{\ell} N_n + (M-\ell) B
        \end{cases}
    \end{align}
\end{theorem}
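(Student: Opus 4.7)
I would decompose the argument into three parts: (i) establish that the state-aware threshold $\lceil T\overall(\ell)\rceil$ within the symmetric framework of Algorithm~\ref{alg:deterministic-optimal-policy} attains the ratio in~\eqref{eq:deterministic-overall-competitive-ratio}; (ii) prove a matching lower bound over all symmetric threshold policies; and (iii) reduce general (possibly asymmetric) deterministic policies to the symmetric ones. Throughout I fix the revealed state $\{N_n\}_{n\le\ell}$ and abbreviate $S:=\sum_{n=1}^{\ell}N_n$, $\bar M:=M-\ell$, and $T:=\lceil T\overall(\ell)\rceil$. The three regimes correspond to: Case~1 ($G\le\bar M B$), where the algorithm buys a group pass and $\ovopt=G$; Case~2 ($\bar M B<G\le S+\bar M B$), where the algorithm still buys a group pass while $\ovopt=G$; and Case~3 ($G>S+\bar M B$), where the threshold saturates at $B$ and both the algorithm and the offline optimum default to individual passes.

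\textbf{Upper bound.} In each case, I would argue that the adversary's worst-case instance is the ``no further transitions'' instance, in which every remaining agent $m>\ell$ has $N_m$ large enough to remain active through day $T$. Any instance with some $N_m<T$ only triggers an earlier state transition and forces the algorithm to recompute a strictly smaller threshold; a monotonicity argument (as $S$ grows, $T\overall$ shrinks proportionally) ensures that such transitions cannot inflate the cost-to-optimum ratio beyond the no-transition baseline. Under the worst-case instance, the algorithm's total cost equals $S+\bar M(T-1)+G$ in Cases~1--2 and $S+\bar M(T-1)+\bar M B$ in Case~3. Dividing by $\ovopt=\min\{G,\,S+\bar M B\}$ and substituting the explicit form of $T\overall(\ell)$ gives the three piecewise formulas, which I would verify are continuous at the case boundaries as a sanity check.

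\textbf{Lower bound and symmetric reduction.} For the symmetric lower bound, I would show that any threshold $\tilde T\neq T\overall(\ell)$ admits an adversarial instance attaining at least the claimed ratio: if $\tilde T>T\overall(\ell)$, the adversary extends all remaining $N_m\ge\tilde T$ to inflate the renting cost; if $\tilde T<T\overall(\ell)$, it picks $N_m$ just above $\tilde T$ so that the algorithm buys prematurely while $\ovopt$ remains close to $G$. The most delicate step is the asymmetric-to-symmetric reduction, which I would handle by an iterative exchange argument: whenever two active agents in some policy commit to distinct thresholds, the adversary can align their active days so that either the earlier buyer purchases solo (forfeiting the group pass discount) or the later buyer accumulates extra rent; in either case, replacing both thresholds by their common optimum weakly decreases the worst-case ratio. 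This reduction is the main obstacle, since asymmetric policies form a space of size $O(N^M)$ and can stagger purchases across future state transitions; fully verifying that no such staggering outperforms the symmetric threshold requires maintaining a careful invariant across the exchanges, in particular ensuring that the adversary's flexibility in choosing $\{N_m\}_{m>\ell}$ is never restricted by the reduction.
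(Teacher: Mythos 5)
Your overall architecture (upper bound for the symmetric threshold, matching symmetric lower bound, reduction from asymmetric to symmetric policies) mirrors the paper's, but your cost accounting in Case~2 is wrong and would not reproduce the stated formula. In the regime $\bar M B < G \le S + \bar M B$ the threshold is $T=(G-S)/\bar M\le B$, and on day $T$ the purchase must be whichever option is cheaper \emph{for the remaining agents}, namely individual passes at total cost $\bar M B < G$, not the group pass. The correct worst-case cost is $S+\bar M(T-1)+\min\{G,\bar M B\}$ in all three cases; your Case~2 cost $S+\bar M(T-1)+G=2G-\bar M$, divided by $\ovopt=G$, gives $2-\bar M/G$, which is strictly larger than the claimed $1+\bar M(B-1)/G$ whenever $G>\bar M B$, so your derivation establishes neither the formula nor the optimality of the threshold in that regime. (Algorithm~\ref{alg:deterministic-optimal-policy} literally says ``group pass if $T(\ell)<B$,'' which would indeed force the group purchase in Case~2, but the paper's proof charges $\min\{G,(M-\ell)B\}$, so the intended rule is ``buy the cheaper of $G$ and $(M-\ell)B$''; either way, your Case~2 analysis does not match the theorem.)

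Two further deviations are worth flagging. First, your claim that the ``no further transitions'' instance is the worst case, justified by monotonicity of $T\overall$ in $S$, is not how the paper argues and is delicate: the per-state ratio $2-\frac{M-\ell}{G}$ is \emph{increasing} in $\ell$ when $G\le(M-\ell)B$, so a completion that triggers another dropout attains a larger ratio; the paper instead handles transitions by induction on the state and effectively reads the theorem as the ratio conditional on the purchase occurring while in state $\ell$, a reading your argument would need to make explicit rather than assert away. Second, for the asymmetric-to-symmetric reduction the paper does not use a pairwise exchange: in Lemma~\ref{lemma:symmetric-better-than-asymmetric-deterministic} it writes an arbitrary asymmetric policy as a sorted tuple of thresholds, lets the adversary set $N_m=T_m$, and shows directly that the resulting ratio $\bigl(\sum_m(T_m-1)+\min\{MB,\,G+\ell B\}\bigr)\big/\min\bigl\{\sum_m\min\{B,T_m\},\,G\bigr\}$ is minimized when all eventually-buying agents share one threshold. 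Your exchange argument is a plausible alternative, but you explicitly leave it as the ``main obstacle'' rather than carrying it out, and this is precisely the step that licenses restricting attention to symmetric policies in the first place.
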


All optimality statements (as in Theorem~\ref{thm:deterministic-heterogeneous-policy-overall}) for deterministic policies in this section consist of two arguments: (i) the devised policy is optimal for any possible instance with the identical active days of agents, detailed in the proof in Appendix~\ref{sec:deterministic-heterogeneous-policy-overall-proof}, and
(ii) any other instances with different active days of agents will lead to a smaller competitive ratio (i.e., not the worst case), illustrated in Section~\ref{subsec:symetric-superiority-deterministic}.
The same holds for Theorems~\ref{thm:deterministic-heterogeneous-policy-state-dependent} and ~\ref{thm:deterministic-heterogeneous-policy-individual} for state-dependent and individual rational competitive ratios.
We plot the curve of the competitive ratio \(\ovcr(\pi_*\overall \vert \ell)\) as a function of the number of inactive agents \(\ell\) as the \textcolor{red}{red} line in Figure~\ref{subfig:deterministic-CR-trend-overall} under a specific instance of \(M=10, B=10, G=60\) and active days are \(\mathcal{I} = (1,2,\dots,10)\).
As the state \(\ell\) increases, the \(\ovcr(\pi_*\overall \vert \ell)\) increases under the first condition \(G \le (M-\ell)B\) and decreases afterwards.

The first condition \(G \le (M-\ell) B\) (Region A) of the piece-wise ratio in~\eqref{eq:deterministic-overall-competitive-ratio} implies that in the current state, the group pass is still cheaper than the individual pass for the remaining active agents.
In that case, the competitive ratio \(2 - \frac{M-\ell}{G}\) can be regarded as the generalization of the competitive ratio \(2 - \frac{M}{G}\) in the homogeneous case in Theorem~\ref{thm:homogeneous-deterministic-policy}, where the number of active agents is \(M-\ell\) instead of \(M\).
This increase is because in Region A, the group pass is still cheaper than the individual pass for the remaining active agents.
However, the later the threshold is reached in Region A, the more agents become inactive, yielding larger paid costs \(\sum_{n=1}^{\ell} N_n\) upon the unavoidable group pass cost \(G\) and thus worse competitive ratios.
Once the group pass is not preferred (Regions B and C), i.e., \(G > (M-\ell) B\), the competitive ratio decreases regarding the number of inactive agents, as expected.
These two regions (second and third pieces of~\eqref{eq:deterministic-overall-competitive-ratio}) are separated by \(\sum_{n=1}^\ell N_n + (M-\ell) B\) due to the fact that the overall \(\ratio\) takes account the paid costs \(\sum_{n=1}^\ell N_n\) as irrevocable losses.
Hence, whether the group pass is better or not actually depends on the sum of the paid costs and that of individual pass of the remaining active agents \((M-\ell)B\).


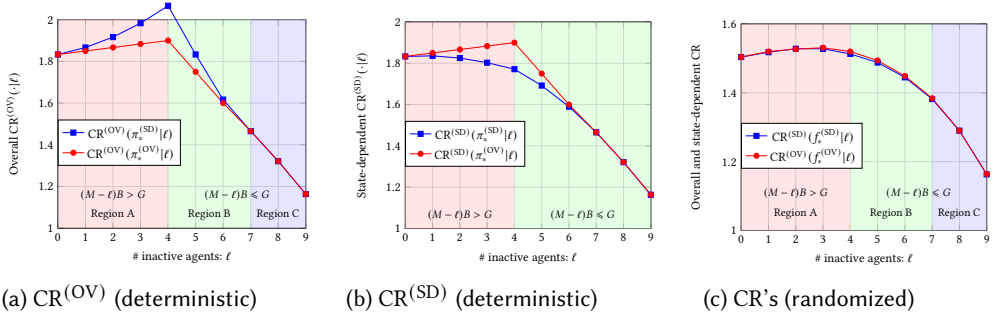
\begin{figure}[t]
    \centering
    \begin{subfigure}{0.25\textwidth}
        \resizebox{!}{\textwidth}{
            \begin{tikzpicture}
                \begin{axis}[
                        xlabel={\# inactive agents: $\ell$},
                        ylabel={Overall \(\ovcr(\cdot\vert \ell)\)},
                        xtick={0, 1,2,3,4,5,6,7,8,9},
                        grid=both,
                        ymin=1, ymax=2, 
                        xmin=0, xmax=9, 
                        clip=false,      
                        legend style={at={(0,0.3)}, anchor=south west},
                    ]
                    \draw[fill=red!20, fill opacity=0.5, draw=none]
                    (axis cs:0,1) rectangle (axis cs:4,2);
                    \node[anchor=south, font=\small] at (axis cs:2,1.02) {Region A};
                    \node[anchor=south, font=\small] at (axis cs:2,1.11) {\((M-\ell)B > G\)};

                    \draw[fill=green!20, fill opacity=0.5, draw=none]
                    (axis cs:4,1) rectangle (axis cs:7,2);
                    \node[anchor=south, font=\small] at (axis cs:5.5,1.02) {Region B};

                    \draw[fill=blue!20, fill opacity=0.5, draw=none]
                    (axis cs:7,1) rectangle (axis cs:9,2);
                    \node[anchor=south, font=\small] at (axis cs:8,1.02) {Region C};

                    \node[anchor=south, font=\small] at (axis cs:6.5,1.11) {\((M-\ell)B \le G\)};

                    \addplot[
                        color=blue,
                        mark=square*,
                        thick,
                    ] coordinates {
                            (0, {11 / 6)})
                            (1, {28/15})
                            (2, {23/12})
                            (3, {119/60})
                            (4, {31/15})
                            (5, {11/6})
                            (6, {97/60})
                            (7, {85/58})
                            (8, {37/28})
                            (9, {64/55})
                        };
                    \addlegendentry{\(\ovcr(\pi_*\statedep\vert \ell)\)}

                    \addplot[
                        color=red,
                        mark=*,
                        thick,
                    ] coordinates {
                            (0, {2 - ((10 - 1 + 1) / 60)})
                            (1, {2 - ((10 - 2 + 1) / 60)})
                            (2, {2 - ((10 - 3 + 1) / 60)})
                            (3, {2 - ((10 - 4 + 1) / 60)})
                            (4, {2 - ((10 - 5 + 1) / 60)})
                            (5, {1 + ((10 - 6 + 1) * (10 - 1) / 60)})
                            (6, {1 + ((10 - 7 + 1) * (10 - 1) / 60)})
                            (7, {1 + ((10 - 8 + 1) * (10 - 1) / 58)})
                            (8, {1 + ((10 - 9 + 1) * (10 - 1) / 56)})
                            (9, {1 + ((10 - 10 + 1) * (10 - 1) / 55)})
                        };
                    \addlegendentry{\(\ovcr(\pi_*\overall\vert \ell)\)}
                \end{axis}
            \end{tikzpicture}}
        \caption{\(\ratio\overall\) (deterministic)}
        \label{subfig:deterministic-CR-trend-overall}
    \end{subfigure}
    \hspace{0.07\textwidth}
    \begin{subfigure}{0.24\textwidth}
        \centering
        \resizebox{!}{\textwidth}{
            \begin{tikzpicture}
                \begin{axis}[
                        xlabel={\# inactive agents: $\ell$},
                        ylabel={State-dependent \(\ratio\statedep(\cdot|\ell)\)},
                        xtick={0,1,2,3,4,5,6,7,8,9},
                        grid=both,
                        ymin=1, ymax=2, 
                        xmin=0, xmax=9, 
                        clip=false,      
                        legend style={at={(0,0.3)}, anchor=south west},
                    ]
                    \draw[fill=red!20, fill opacity=0.5, draw=none]
                    (axis cs:0,1) rectangle (axis cs:4,2);
                    \node[anchor=south, font=\small] at (axis cs:2,1.02) {\((M-\ell)B>G\)};

                    \draw[fill=green!20, fill opacity=0.5, draw=none]
                    (axis cs:4,1) rectangle (axis cs:9,2);
                    \node[anchor=south, font=\small] at (axis cs:6.5,1.02) {\((M-\ell)B\le G\)};

                    \addplot[
                        color=blue,
                        mark=square*,
                        thick,
                    ] coordinates {
                            (0, {11 / 6})
                            (1, {112 / 61})
                            (2, {115 / 63})
                            (3, {119 / 66})
                            (4, {124 / 70})
                            (5, {110 / 65})
                            (6, {97 / 61})
                            (7, {1 + ((10 - 8 + 1) * (10 - 1) / 58)})
                            (8, {1 + ((10 - 9 + 1) * (10 - 1) / 56)})
                            (9, {1 + ((10 - 10 + 1) * (10 - 1) / 55)})
                        };
                    \addlegendentry{\(\ratio\statedep(\pi_*\statedep \vert \ell)\)}
                    \addplot[
                        color=red,
                        mark=*,
                        thick,
                    ] coordinates {
                            (0, {2 - ((10 - 1 + 1) / 60)})
                            (1, {2 - ((10 - 2 + 1) / 60)})
                            (2, {2 - ((10 - 3 + 1) / 60)})
                            (3, {2 - ((10 - 4 + 1) / 60)})
                            (4, {2 - ((10 - 5 + 1) / 60)})
                            (5, {1 + ((10 - 6 + 1) * (10 - 1) / 60)})
                            (6, {1 + ((10 - 7 + 1) * (10 - 1) / 60)})
                            (7, {1 + ((10 - 8 + 1) * (10 - 1) / 58)})
                            (8, {1 + ((10 - 9 + 1) * (10 - 1) / 56)})
                            (9, {1 + ((10 - 10 + 1) * (10 - 1) / 55)})
                        };
                    \addlegendentry{\(\ratio\statedep(\pi_*\overall \vert \ell)\)}
                \end{axis}
            \end{tikzpicture}}
        \caption{\(\ratio\statedep\) (deterministic)}
        \label{subfig:deterministic-CR-trend-state-dependent}
    \end{subfigure}
    \hspace{0.07\textwidth}
    \begin{subfigure}{0.24\textwidth}
        \centering
        \resizebox{!}{\textwidth}{
            \begin{tikzpicture}
                \begin{axis}[
                        xlabel={\# inactive agents: $\ell$},
                        ylabel={Overall  and state-dependent CR},
                        xtick={0, 1,2,3,4,5,6,7,8,9},
                        grid=both,
                        ymin=1, ymax=1.6, 
                        xmin=0, xmax=9, 
                        clip=false,      
                        legend style={at={(0,0.3)}, anchor=south west},
                    ]
                    \draw[fill=red!20, fill opacity=0.5, draw=none]
                    (axis cs:0,1) rectangle (axis cs:4,1.6);
                    \node[anchor=south, font=\small] at (axis cs:2,1.08) {\((M-\ell)B > G\)};

                    \node[anchor=south, font=\small] at (axis cs:2,1.02) {Region A};

                    \draw[fill=green!20, fill opacity=0.5, draw=none]
                    (axis cs:4,1) rectangle (axis cs:7,1.6);
                    \node[anchor=south, font=\small] at (axis cs:5.5,1.02) {Region B};

                    \draw[fill=blue!20, fill opacity=0.5, draw=none]
                    (axis cs:7,1) rectangle (axis cs:9,1.6);
                    \node[anchor=south, font=\small] at (axis cs:8,1.02) {Region C};

                    \node[anchor=south, font=\small] at (axis cs:6.5,1.08) {\((M-\ell)B \le G\)};

                    \addplot[
                        color=blue,
                        mark=square*,
                        thick,
                    ] coordinates {
                            (0, 1.504)
                            (1,1.518)
                            (2,1.5283)
                            (3,1.5279)
                            (4,1.513)
                            (5,1.488)
                            (6,1.445)
                            (7,1.382)
                            (8,1.290)
                            (9,1.163)
                        };
                    \addlegendentry{\(\sdcr(f_*\statedep\vert \ell)\)}

                    \addplot[
                        color=red,
                        mark=*,
                        thick,
                    ] coordinates {
                            (0, 1.504)
                            (1, 1.520)
                            (2, 1.527)
                            (3, 1.531)
                            (4, 1.520)
                            (5, 1.494)
                            (6, 1.449)
                            (7, 1.384)
                            (8, 1.290)
                            (9, 1.164)
                        };
                    \addlegendentry{\(\ovcr(f_*\overall\vert \ell)\)}

                \end{axis}
            \end{tikzpicture}}
        \caption{\(\ratio\)'s (randomized)}
        \label{subfig:randomized-policy-comparison}
    \end{subfigure}
    \caption{Compare policies in terms of group competitive ratios,
        \(\ovcr\) and \(\sdcr\), under the scenario that \(G = 60, B = 10\) and \(M=10\) agents with active days \(\mathcal{I} = (1, 2, 3, \dots, 10)\). More discussions in Appendix~\ref{app:deterministic-CR-trend}.
    }\label{fig:deterministic-CR-trend}
\end{figure}

\subsection{Optimal Deterministic Policy for State-Dependent Competitive Ratio}
\label{subsec:deterministic-heterogeneous-state-dependent}
\label{subsec:deterministic-individual-rational}

The offline optimal state-dependent cost in~\eqref{eq:state-dependent-cr} takes the paid cost \(\sum_{n=1}^\ell N_n\) as \emph{sunk costs} for the remaining active agents.
Hence, the policy does not account for the paid cost by inactive agents (who have already left the group).
From this perspective, the policy considers the group cost averaged over the remaining active agents, i.e., \(\frac{G}{M-\ell}\), and compares it with the individual pass cost \(B\).
This yields the threshold function for the state-dependent competitive ratio as follows,
\begin{align}\label{eq:threshold-state-dependent}
    T\statedep (\ell) \coloneqq \min\left\{ {\frac{G}{M-\ell}}, B \right\}.
\end{align}

\begin{theorem}\label{thm:deterministic-heterogeneous-policy-state-dependent}
    The deterministic policy of Algorithm~\ref{alg:deterministic-optimal-policy} with input threshold \(\ceil{T\statedep(\ell)}\) in~\eqref{eq:threshold-state-dependent} is optimal for minimizing state-dependent
    competitive ratio, denoted as \(\pi_*\statedep\).
    When \(T\statedep(\ell)\) is an integer, the competitive ratio can be expressed as follows,
    \begin{align}\label{eq:deterministic-state-dependent-competitive-ratio}
         & \sdcr(\pi_*\statedep \vert \{N_n\}_{n\le\ell})
        =
        \begin{cases}
            1 +
            \frac{G - (M-\ell)}{\sum_{n=1}^{\ell} N_n + G}
             & \text{ if } G \le (M-\ell)B
            \\
            1 +
            \frac{(M-\ell)(B-1)}{\sum_{n=1}^{\ell} N_n + (M-\ell)B}
             & \text{ if } G > (M-\ell)B
        \end{cases}.
    \end{align}
\end{theorem}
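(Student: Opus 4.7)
The plan is to establish Theorem~\ref{thm:deterministic-heterogeneous-policy-state-dependent} in two matching halves: an \emph{upper bound} showing that $\pi_*\statedep$ attains the ratios in~\eqref{eq:deterministic-state-dependent-competitive-ratio}, and a \emph{matching lower bound} ruling out any deterministic policy with smaller worst-case ratio. The conceptual contrast with Theorem~\ref{thm:deterministic-heterogeneous-policy-overall} is that the state-dependent benchmark $\sdopt$ treats $\sum_{n\leq\ell} N_n$ as a sunk cost appearing on both sides of the ratio, so the algorithm need only guard against the group pass cost $G$ amortized over the \emph{remaining} $M-\ell$ agents, which is exactly what $T\statedep(\ell) = \min\{G/(M-\ell), B\}$ targets. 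Throughout I can reuse the framework developed in Section~\ref{subsec:determistic-algorithmic-idea} and structurally mirror the overall-ratio proof, with the key substitution that the $\sum_{n\leq\ell} N_n$ shift moves from the threshold into the normalization.

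For the upper bound, I would fix a state $\{N_n\}_{n\leq\ell}$ and argue that the adversary's worst continuation sets $N_m = T\statedep(\ell)$ for every $m>\ell$, so no further state transition happens before the threshold is reached. In Case 1 ($G \leq (M-\ell)B$, $T\statedep(\ell) = G/(M-\ell)$) the algorithm rents until day $T\statedep(\ell)-1$ for all $M-\ell$ agents and then buys the group pass at cost $G$, giving total cost $\sum_{n\leq\ell} N_n + (M-\ell)(T\statedep(\ell)-1) + G = \sum_{n\leq\ell} N_n + 2G - (M-\ell)$, against $\sdopt = \sum_{n\leq\ell} N_n + G$; the ratio simplifies to $1 + (G-(M-\ell))/(\sum_{n\leq\ell} N_n + G)$. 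Case 2 ($T\statedep(\ell) = B$) is an analogous computation with individual passes. To justify that this is the worst case, I would verify that every deviation can only lower the ratio: if some $N_m < T\statedep(\ell)$ the state updates to $\ell' > \ell$ with the larger threshold $T\statedep(\ell')$ (using monotonicity of $G/(M-\ell)$ in $\ell$), and the extra rent is absorbed by $\sdopt$ through its sunk-cost bookkeeping; if all $N_m < T\statedep(\ell)$ the algorithm simply never buys and the ratio drops to one.

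For the lower bound I would proceed in two stages. First, I reduce to \emph{symmetric} state-aware policies via an exchange argument: given any asymmetric deterministic policy with heterogeneous per-agent buy thresholds, I construct an adversarial instance tuned to its threshold spread that either forces it to buy individual passes instead of exploiting the group pass or to over-rent while waiting for a late agent, and then show that a symmetric policy with a single common threshold matches this worst case. Second, among symmetric policies with per-state threshold $\tau$, I treat $\tau$ as a free integer parameter and expose two adversarial branches --- the adversary sets all remaining $N_m$ just below $\tau$ to punish wasted rent, or sets them exactly at $\tau$ to punish the buy --- whose envelope is minimized precisely at $\tau = T\statedep(\ell)$, yielding the matching lower bound. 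The main technical obstacle is the symmetric-dominates-asymmetric reduction: because the benchmark $\sdopt$ itself shifts with $\ell$ and the algorithm's future actions are coupled to its past through the history, the exchange must carefully swap per-agent thresholds while simultaneously tracking how $\sdopt$ responds at each possible subsequent state transition, which requires a delicate pair of case analyses that I expect to dominate the length of the formal proof.
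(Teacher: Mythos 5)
Your proposal is correct and follows essentially the same route as the paper: an upper bound obtained by fixing the state, letting the worst-case adversary set $N_m = T(\ell)$ for all remaining agents, and minimizing the resulting ratio over the threshold (handling earlier drop-outs by induction on the state), combined with a lower bound that first reduces to symmetric policies and then shows the adversary's best response to any common threshold $\tau$ forces the same envelope, minimized at $\tau = T\statedep(\ell)$. Your Case 1 computation even recovers the sign $G-(M-\ell)$ as stated in the theorem (the paper's appendix has a harmless sign typo there), so no substantive gap remains.
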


The state-dependent \(\ratio\) in~\eqref{eq:deterministic-state-dependent-competitive-ratio} has only two cases.
Because the state-dependent \(\ratio\) does not account the sunk costs \(\sum_{n=1}^\ell N_n\) for future decisions, and only whether the group pass is cheaper than the individual pass, i.e., \(G\) vs. \((M-\ell) B\), matters.
Figure~\ref{fig:deterministic-CR-trend} reports a numerical comparison of the \(\pi_*\overall\) and \(\pi_*\statedep\) policies regarding the overall and state-dependent \(\ratio\)s. Under the overall \(\ratio\) in Figure~\ref{subfig:deterministic-CR-trend-overall}, \(\pi_*\overall\) enjoys a lower ratio than that of \(\pi_*\statedep\),
and under the state-dependent \(\ratio\) in Figure~\ref{subfig:deterministic-CR-trend-state-dependent},
\(\pi_*\statedep\) outperforms \(\pi_*\overall\).
This observation corroborates the optimality stated in Theorems~\ref{thm:deterministic-heterogeneous-policy-overall} and~\ref{thm:deterministic-heterogeneous-policy-state-dependent}.
Empirical comparisons are reported in Section~\ref{sec:experiments}.


\subsection{Optimal Deterministic Policy for Individual Rational Competitive Ratio}
\label{subsec:deterministic-heterogeneous-policy-individual-rational}

As agents aim to minimize their individual costs for \(\ratio\individual\), the paid costs \(\sum_{n=1}^\ell N_n\) (for other agents) is also considered as {sunk costs} for the remaining active ones, as for the state-dependent \(\ratio\). Hence, the threshold function is also the same as that for the state-dependent \(\ratio\) in~\eqref{eq:threshold-state-dependent}.

Recall that \(\ell_*\) defined for~\eqref{eq:indopt} is the number of agents that are better off to rent in all active days regarding the individual rational perspective.
We notice \(T\statedep(\ell_*) = \opt_{m}\individual\) for any agent \(m> \ell_*\) that is better off to buy the pass,
where the threshold for the state of \(\ell_*\) inactive agents (in parenthesis at LHS) is exactly deployed to the
\(M-\ell_*\) agents (in subscript at RHS).
In other words, with Algorithm~\ref{alg:deterministic-optimal-policy} following the state-dependent threshold  \(T\statedep(\ell)\), the agents \(\{1,\dots, \ell_*\}\)---supposed to rent in all active days from the offline perspective---indeed rent in all active days (since \(N_m < T(\ell_*) \le T(m)\)),
while agents \(\{\ell_*{+}1,\dots, M\}\)---supposed to buy the pass at the cost of \(\opt_{m}\individual {=} T(\ell_*)\)---rent until the threshold \(T(\ell_*)\) is reached.
Therefore, agents \(m\, (\le \ell_*)\) have \(\ratio\)s exactly \(1\), while agents \(m\, (> \ell_*)\) have \(\ratio\)s strictly less than \(2\), summarized in the following theorem.

\begin{theorem}\label{thm:deterministic-heterogeneous-policy-individual}
    \label{thm:deterministic-heterogeneous-policy-individual-rational}
    The deterministic policy of Algorithm~\ref{alg:deterministic-optimal-policy} with input threshold \(\ceil*{T\statedep(\ell)}\) in~\eqref{eq:threshold-state-dependent} is optimal for minimizing individual rational competitive ratio, denoted as \(\pi_*\individual\).
    When \(T\statedep(\ell)\) is an integer, the competitive ratio can be expressed as follows,
    \begin{align}\label{eq:deterministic-individual-rational-competitive-ratio}
        \ratio\individual_m(\pi_*\individual \vert \{N_{n}\}_{n\le m-1}) =
        \begin{cases}
            1
             & \text{ if } m \le \ell_*
            \\
            2 - \frac{1}{T\statedep(\ell_*)}
             & \text{ if } m > \ell_*
        \end{cases}
    \end{align}
    where the individual rational competitive ratio for agent \(m\) is calculated under the state of \(m-1\) inactive agents, i.e., \(\{N_{n}\}_{n\le m-1}\), since agents reveal their active days in an increasing order.
\end{theorem}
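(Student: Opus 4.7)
The plan is to prove optimality in the usual two-part form: first establish that Algorithm~\ref{alg:deterministic-optimal-policy} with threshold $\ceil*{T\statedep(\ell)}$ attains the ratios in~\eqref{eq:deterministic-individual-rational-competitive-ratio} (upper bound), then produce adversarial instances showing no other policy (deterministic, symmetric or asymmetric) can beat these ratios for every agent simultaneously (lower bound). The crucial handle is the observation already flagged before the theorem: $T\statedep(\ell_*) = \opt_m\individual$ for every buyer $m>\ell_*$, which pins down what the ``right'' threshold has to be.

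For the upper bound, I would simulate the algorithm step by step. Since $T\statedep(\ell)=\min\{G/(M-\ell),B\}$ is non-decreasing in $\ell$, and since the definition of $\ell_*$ gives $N_{\ell+1}\le T\statedep(\ell)$ for every $\ell<\ell_*$, an induction on $m$ shows that for each $m\le \ell_*$ agent $m$ becomes inactive on day $N_m$ before any purchase is triggered; thus $\cost_m=N_m=\opt_m\individual$ and the ratio is exactly $1$. Once the state reaches $\ell_*$, the threshold $T\statedep(\ell_*)$ is strictly smaller than $N_{\ell_*+1}\le N_m$ for every remaining agent, so the algorithm purchases on day $T\statedep(\ell_*)$. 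Each remaining agent then pays $T\statedep(\ell_*)-1$ in rent plus a share equal to $T\statedep(\ell_*)$ (this share being either $G/(M-\ell_*)$ or $B$, whichever defines $T\statedep(\ell_*)$), giving total cost $2T\statedep(\ell_*)-1$, while the individual benchmark is $T\statedep(\ell_*)$, producing the ratio $2-1/T\statedep(\ell_*)$.

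For the lower bound, the goal is to show that no deterministic policy $\pi$ can simultaneously beat ratio $1$ on $m\le \ell_*$ and beat $2-1/T\statedep(\ell_*)$ on $m>\ell_*$. The $m\le\ell_*$ claim is immediate since any cost is at least the individual benchmark. For $m>\ell_*$, I would fix the prefix state $\{N_n\}_{n\le m-1}$ consistent with the individual-rational structure (so that the adversary's continuation still makes $m$ a buyer) and apply a standard ski-rental style argument: if $\pi$'s effective purchase day for agent $m$ is $T$, then setting all subsequent $N_n$ just above $T$ forces cost $\ge 2T-1$ while leaving $\opt_m\individual = T\statedep(\ell_*)$; balancing this against the alternative scenario where the adversary sets $N_m$ small enough to expose wasted pre-commitment pins $T=T\statedep(\ell_*)$ as the only worst-case-optimal choice. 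To rule out asymmetric deviations, I would argue per-agent: any agent whose effective threshold differs from $T\statedep(\ell_*)$ can be punished by an adversarial choice of its own $N_m$, either by making it rent too long before a too-early purchase or by inflating its rent-plus-pass cost beyond $2T\statedep(\ell_*)-1$.

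The main obstacle will be the asymmetric lower bound, because in the individual-rational objective each agent has its own offline benchmark and the adversary's worst-case instance varies with the agent in question; in particular, the equilibrium definition of $\ell_*$ is itself a function of the instance, so one must carefully quantify the order of ``fix prefix, choose threshold, then extend instance'' without circularity. I expect to resolve this by conditioning on the realized value of $\ell_*$ (which is determined by the prefix $\{N_n\}_{n\le m-1}$ together with $N_m$), treating the two regimes $m\le\ell_*$ and $m>\ell_*$ separately, and reducing the asymmetric case to a collection of single-agent ski-rental lower bounds with purchase cost $T\statedep(\ell_*)$, exactly the structure Proposition~\ref{prop:indopt} guarantees is in Nash equilibrium.
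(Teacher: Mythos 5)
Your proposal follows essentially the same route as the paper: the upper bound is obtained by simulating the algorithm (agents \(m\le\ell_*\) simply rent out their active days for ratio \(1\); agents \(m>\ell_*\) rent until \(T\statedep(\ell_*)\) and then pay a per-agent share equal to \(T\statedep(\ell_*)\), giving \(2-1/T\statedep(\ell_*)\)), and the lower bound reduces each buying agent to a single-agent ski-rental problem with effective purchase cost \(T\statedep(\ell_*)\), for which renting up to the purchase cost is known to be optimal. Your handling of the lower bound and of the instance-dependence of \(\ell_*\) is more explicit than the paper's rather terse argument, but the underlying ideas coincide.
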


Comparing to the single agent ski-rental problem,
the smaller individual rational \(\ratio\) in~\eqref{eq:deterministic-individual-rational-competitive-ratio} highlights the advantage of taking the group pass option into account.
Agents \(m\le \ell_*\) achieves the competitive ratio \(1\), their best possible ratio.
Even for agents \(m > \ell_*\), their competitive ratio \(2 - ({1}/{T\statedep(\ell_*)})\) is no worse than the competitive ratio \(2 - ({1}/{B})\) of the canonical ski-rental problem,
and is strictly better when the group pass is cheaper than the individual pass, i.e., \(T\statedep(\ell_*) < B\).


\section{Randomized Policies for Multi-Agent Ski-Rental}
\label{sec:randomized-algorithm}

This section investigates the optimal randomized policies for \masr.
We start with the policy framework in Section~\ref{subsec:randomized-algorithmic-idea}, discuss the superiority of symmetric policies and lower bound in Section~\ref{subsec:symmetric-and-lower-bound}, and then devise specific policies for three types of competitive ratios: overall (Section~\ref{subsec:randomized-overall}), state-dependent (Section~\ref{subsec:randomized-state-dependent}), and individual rational (Section~\ref{subsec:randomized-individual-rational}).

\subsection{Randomized Policy Framework}
\label{subsec:randomized-algorithmic-idea}

\textbf{Randomized policy framework.}
As explained in Section~\ref{subsec:determistic-algorithmic-idea}, policies for \masr should be state-aware.
While in the deterministic case, the state-awareness is implemented by picking state-dependent thresholds as in~\eqref{eq:threshold-overall} and~\eqref{eq:threshold-state-dependent},
for the randomized case, the thresholds need to be sampled and resampled from state-dependent distributions \(f(\ell)\) over the course of the algorithm.
That is, the randomized policies also have a structure similar to Figure~\ref{fig:hereogeneous-deterministic-policy},
where the fixed \(T(\ell)\)  should be replaced by sampled \(T(\ell) \sim f(\ell)\).
Due to the limited space, we defer the detailed algorithmic framework (Algorithm~\ref{alg:randomized-optimal-policy}) to the appendix.
The key differences of the randomized framework from the deterministic one (Algorithm~\ref{alg:deterministic-optimal-policy}) are:
(i) The input threshold \(T(\cdot)\) should be replaced by density function \(f(\cdot)\), and
(ii) whenever agents become inactive (including the initialization of \(T(0)\)), the threshold \(T(\ell)\) is sampled from \(f(\cdot)\).
We devise the specific functions \(f(\cdot)\) for all three \(\ratio\)s in the following subsections.


\textbf{Technical challenges for analyzing randomized policies.}
While the above randomized policy structure is clear after the investigation of the deterministic policies in Section~\ref{sec:deterministic-heterogeneous}, the competitive ratio analysis of the randomized policies is much more challenging than the deterministic case.
To derive the probability density functions \(f(\cdot)\) for each type of competitive ratios, one needs to address the following challenges:
(i) Although the design in Algorithm~\ref{alg:randomized-optimal-policy} implicitly assumes that all agents shall sample the same threshold \(T(\ell)\) at the same state, which is better than agents individually sampling their thresholds, this assumption needs to be justified by the competitive ratio analysis for randomized policies.
Especially, the state \(\{N_n\}_{n\le \ell}\) changes over time, and one needs to justify this for any number of remaining \(M-\ell\) active agents.
(ii) For a given state \(\{N_n\}_{n\le \ell}\), the paid cost \(\sum_{n=1}^{\ell} N_n\) introduces non-trivial complexity for the derivation of the probability \(f(\cdot)\) for the randomized policy.
Specifically, the original randomized policy for canonical ski-rental is often derived by solving a linear programming (LP), where the LHS of the linear constraints corresponds to the expected cost of the randomized policy, and the RHS is the cost of the optimal offline policy multiplied by \(\ratio\),
and the solution of this LP relies on the peculiar format of the constraints.
However, with the introduction of the \(\sum_{n=1}^{\ell} N_n\) term, at least one side of the linear constraints needs renovations (depending on the type of \(\ratio\)), and without the peculiar format, solving the new LPs is challenging.

\subsection{Dominance of Symmetric Randomized Policies and Lower Bounds}
\label{subsec:symmetric-and-lower-bound}

This section establishes the superiority of symmetric randomized policies over asymmetric ones and derives lower bounds for the competitive ratios.
While this is a parallel section to Section~\ref{subsec:symetric-superiority-deterministic} for deterministic policies, the proof techniques for randomized policies are much more challenging.
On the other hand, while the optimality of deterministic policies are supported by their optimal construction,
the optimality of randomized policies is supported by the asymptotic lower bounds for the competitive ratios,
both of which are common practices in the ski-rental literature~\cite{karlin1988competitive,karlin1994competitive}.

The following lemma establishes the superiority of symmetric randomized policies over asymmetric ones, which addresses the first technical challenge mentioned above.
\begin{lemma}\label{lemma:symmetric-better-than-asymmetric-randomized}
    Under any given state \(\{N_n\}_{n\le \ell}\), it is a symmetric randomized policy that minimizing the competitive ratio (of all three types of definitions).
\end{lemma}

\begin{proof}[Proof sketch of Lemma~\ref{lemma:symmetric-better-than-asymmetric-randomized}]
The full proof appears in Appendix~\ref{sec:symmetric-better-than-asymmetric-randomized}; here we outline the main ideas.  
The argument proceeds by \emph{strong induction} (the second principle of induction) on the pair \((M,\ell)\) shown in Figure~\ref{fig:randomized-policy-induction}.  
Each node \((M,T(\ell))\) represents an instance with \(M\) agents and current state \(\ell\) (i.e., \(\ell\) agents have become inactive).  
Three ingredients drive the induction:

\textbf{(i) Base case \(\boldsymbol{M=2}\).}  
When only two agents remain active, we formulate the problem as a linear program, simplify it, and show that the optimal randomized rule assigns identical thresholds to both agents—hence symmetry is optimal in this smallest non‑trivial case.

\textbf{(ii) Diagonal links (fixed number of active agents).}  
Diagonal arrows in Figure~\ref{fig:randomized-policy-induction} connect states with the same number of still‑active agents.  
Because the underlying optimization is identical along any such diagonal, the optimal threshold distribution computed for one node (say, \((1,T(0))\)) continues to be optimal—up to a relabeling—for all other nodes on the same diagonal (e.g., \((2,T(1))\), \((3,T(2))\), and so on).

\textbf{(iii) Horizontal links (incrementing} \(\boldsymbol{M}\)\textbf{).}  
Horizontal arrows connect instances in which the total number of active agents increases by one while the state resets to \(\ell=0\).  
Assume by the induction hypothesis that symmetry is optimal for every instance \((m,T(0))\) with \(m\le M\).  
To handle \((M+1,T(0))\), partition the agents into two non‑empty subgroups of sizes \(M_1\) and \(M_2\) with \(M_1+M_2=M+1\).  
By the hypothesis, each subgroup is governed optimally by a symmetric randomized policy.  
It remains to show that the optimal joint policy for two subgroups is also symmetric, which can be proved by similar approaches as in (i) above.  
\end{proof}

\begin{figure}[tb]
    \centering
            \begin{tikzcd}
            [
                column sep={8em,between origins},
                row sep   ={2em,between origins}
            ]
            (1, T(0)) \arrow[r, black] \arrow[dr]
            & [-2em] \textcolor{black}{\boxed{(2, T(0))}} \arrow[r] \arrow[dr]
            & [-2em] (3, T(0)) \arrow[r] \arrow[dr]
            & [-3em] \dots \arrow[r] \arrow[dr]
            &  [-2em] (M-1, T(0)) \arrow[r] \arrow[dr]
            & (M, T(0))
            \\
            {}
            & (2, T(1)) \arrow[dr]
            & (3, T(1)) \arrow[dr]
            & \dots \arrow[dr]
            & (M-1, T(1)) \arrow[dr]
            & (M, T(1))
            \\
            {}
            & {}
            & (3, T(2)) \arrow[dr]
            & \dots \arrow[dr]
            & (M-1, T(2)) \arrow[dr]
            & (M, T(2))
            \\
            {}
            & {}
            & {}
            & \ddots \arrow[dr]
            & \vdots \arrow[dr]
            & \vdots
            \\
            {}
            & {}
            & {}
            & {}
            & (M-1, T(M-2))  \arrow[dr]
            & (M, T(M-2))
            \\
            {}
            & {}
            & {}
            & {}
            & {}
            & (M, T(M-1))
        \end{tikzcd}
    \caption{Induction diagram for Lemma~\ref{lemma:symmetric-better-than-asymmetric-randomized}.  
    Each node \((M,T(\ell))\) denotes an instance with \(M\) agents and state \(\ell\).  
    The boxed node is the base case \(\,M=2\) (step (i)).  
    Diagonal arrows link nodes with the same number of active agents (used in step (ii)), while horizontal arrows increase \(M\) with \(\ell=0\) (step (iii)).}
    \label{fig:randomized-policy-induction}
\end{figure}
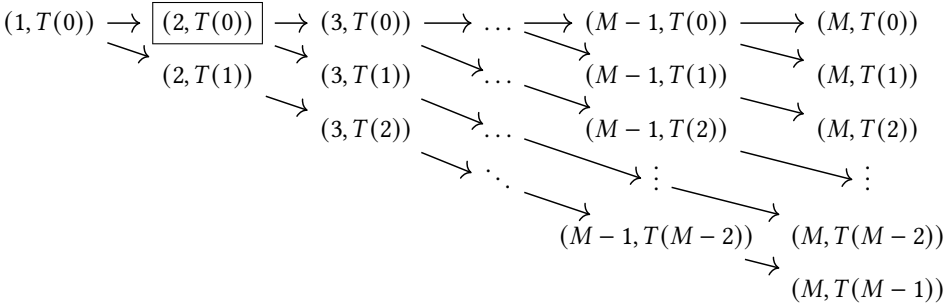

From Lemma~\ref{lemma:symmetric-better-than-asymmetric-randomized}, we derive the competitive ratio lower bound for randomized policies as follows,
\begin{lemma}\label{lemma:randomized-competitive-ratio-lower-bound}
    For any randomized policy for the \masr problem and any of the three types of competitive ratios, the competitive ratio is lower-bounded by \(\frac{e}{e-1}\) when the individual-buy cost \(B\) (hence the group-buy cost \(G\) as well) is sufficiently large.
\end{lemma}


The proof of Lemma~\ref{lemma:randomized-competitive-ratio-lower-bound} utilizes Yao’s principle~\cite{yao1977probabilistic} and a carefully crafted distribution supported on instances in which all agents have an identical number of active days.  Restricting attention to this subclass suffices for the lower‑bound analysis, for the same reason outlined in Section~\ref{subsec:symetric-superiority-deterministic} for deterministic policies.  The full proof is deferred to Appendix~\ref{sec:randomized-competitive-ratio-lower-bound-proof}.

\subsection{Optimal Randomized Policy for Overall Competitive Ratio}
\label{subsec:randomized-overall}

As illustrated in Section~\ref{subsec:deterministic-heterogeneous-policy-overall}, the overall \(\ratio\) accounts the paid cost \(\sum_{n=1}^{\ell} N_n\) as irrevocable losses, which leads to a threshold \(T\overall(\ell) = \min\{ {(G - \sum_{n=1}^\ell N_n)}/{(M-\ell)}, B \}\) for the deterministic policy.
This threshold also plays a key role for the randomized policy.
Specifically, by solving its corresponding linear programming problem, we can derive the probability density function \(f(\cdot)\) for the randomized policy as \(f_*\overall(t; \{N_n\}_{n\le \ell}) \coloneqq p_t(\{N_n\}_{n\le \ell}, T\overall(\ell))\), where the density \(p_t\) is defined as follows,
\begin{align}\label{eq:randomized-p-t}
     & p_t(\{N_n\}_{n\le \ell}, T(\ell))
    \! \coloneqq \!\begin{cases}
                       \frac{\left(\sum_{n=1}^\ell N_n + (M{-}\ell)(N_\ell {+} 1)\right)g}{\sum_{n=1}^\ell N_n + (M{-}\ell) (N_\ell {+} T(\ell))} \left( 1 {-} \frac{1}{T(\ell)} \right)^{T(\ell) {-} t}
                        & \!\!\!\!\! t = N_\ell + 1
                       \\
                       \frac{g}{T(\ell)} \left( 1 {-} \frac{1}{T(\ell)} \right)^{T(\ell) {-} t}
                        & \!\!\!\!\! t \in \left\{N_\ell {+} 2, \dots, \ceil*{T(\ell)}\right\}
                   \end{cases}
    \\\label{eq:competitive-ratio-group}
     & \text{where }\quad g = g(\{N_n\}_{n\le \ell}, T(\ell))
    =
    \frac{1}{1 - \frac{(M-\ell)(T(\ell) - 1)}{\sum_{n=1}^\ell N_n + (M-\ell) (N_\ell + T(\ell))}\left( 1 - \frac{1}{T(\ell)} \right)^{T(\ell) - N_\ell - 1 }}.
\end{align}

\begin{theorem}\label{thm:randomized-heterogeneous-policy-overall}
    The randomized policy of Algorithm~\ref{alg:randomized-optimal-policy} with input probability density \(f_*\overall(t; \{N_n\}_{n\le \ell})\) is optimal for minimizing overall competitive ratio,
    where the competitive ratio can be expressed as follows,
    \(
    \ovcr(f_*\overall \vert \{N_n\}_{n\le\ell})
    = g(\{N_n\}_{n\le \ell}, T\overall(\ell))
    \) defined in~\eqref{eq:competitive-ratio-group}.
\end{theorem}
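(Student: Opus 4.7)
I would establish Theorem~\ref{thm:randomized-heterogeneous-policy-overall} through the same two-pronged structure used throughout the paper: a matching upper bound on $\ovcr(f_*\overall\vert\{N_n\}_{n\le\ell})$ achieved by the proposed density, together with a Yao-style lower bound holding against every randomized policy. Fix the current state $\{N_n\}_{n\le\ell}$; both halves concern only the subproblem over the $M-\ell$ still-active agents, so the result follows inductively once the single-state step is handled.

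The first step is a symmetric-sampling reduction. Given any randomized rule that assigns possibly different, correlated thresholds $(T_{\ell+1},\ldots,T_M)$ to the remaining agents, I would symmetrize by first sampling a uniform permutation of these agents and then using the marginal of a single coordinate as a shared threshold. The worst-case cost is convex in the threshold profile, and the group pass is available only when all surviving agents commit to the same cutoff day, so the symmetric version is no worse against any extension of $\{N_n\}_{n\le\ell}$; this mirrors the deterministic argument in Section~\ref{subsec:determistic-algorithmic-idea} at the distributional level. A standard adversary-reduction then shows that the worst extension collapses all remaining active days to a single integer $N\in\{N_\ell+1,\ldots,\lceil T\overall(\ell)\rceil\}$ (or $N\ge\lceil T\overall(\ell)\rceil$), since $\ovopt$ is coordinate-monotone in each $N_m$ while the algorithm's expected cost is piecewise linear in the collapsed cutoff.

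With both reductions in place, let $p_t$ denote the symmetric density on $\{N_\ell+1,\ldots,\lceil T\overall(\ell)\rceil\}$. Writing $\mathbb{E}[\cost]$ against each candidate uniform-$N$ adversary and dividing by $\ovopt(\mathcal{I})$ from~\eqref{eq:ovopt} yields a family of linear inequalities $\mathbb{E}[\cost]\le g\cdot\ovopt$, one per $N$. I would then apply the usual "tight everywhere" trick: setting equality in every constraint and solving top-down from $t=\lceil T\overall(\ell)\rceil$ produces at interior days the geometric form $p_t\propto (1-1/T\overall(\ell))^{T\overall(\ell)-t}$ displayed in~\eqref{eq:randomized-p-t}, while at the boundary $t=N_\ell+1$ the constraint is shifted by the sunk cost $\sum_{n=1}^{\ell}N_n$ sitting inside $\ovopt$, producing the heavier boundary weight also shown in~\eqref{eq:randomized-p-t}. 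Normalizing $\sum_t p_t=1$ then pins $g$ down to the closed form in~\eqref{eq:competitive-ratio-group}. The dual of this LP supplies a distribution over adversary inputs under which every deterministic symmetric threshold achieves expected ratio exactly $g$, and Yao's minimax principle upgrades this into the matching lower bound for arbitrary randomized policies.

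The principal obstacle will be the boundary day $t=N_\ell+1$. In canonical ski-rental the tightness constraints form a clean bidiagonal system whose solution is the textbook geometric density; here the sunk cost $\sum_{n=1}^{\ell}N_n$ breaks that symmetry by inflating the denominator of the first constraint but not the others. Verifying that the perturbed LP nevertheless admits a tight-everywhere primal solution, that this solution has precisely the prefactor in~\eqref{eq:randomized-p-t} at $t=N_\ell+1$, and that the dual certificate is still a bona fide probability distribution (so Yao's bound applies), is the step where the careful bookkeeping lives. Once this boundary calculation is settled, the remaining pieces—convexity-based symmetrization, adversary collapse, and LP-solution bookkeeping—are standard.
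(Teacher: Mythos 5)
Your LP core is the paper's own argument: condition on the state, restrict to symmetric densities $p_t$ supported on $\{N_\ell+1,\dots,\lceil T\overall(\ell)\rceil\}$, write one constraint $\mathbb{E}[\cost]\le g\cdot\ovopt$ per adversary value $N$, make them all tight, and observe that the sunk cost $\sum_{n\le\ell}N_n$ perturbs only the first (boundary) constraint, giving the geometric interior weights and the heavier mass at $t=N_\ell+1$ in~\eqref{eq:randomized-p-t}, with $g$ fixed by normalization. That part of your plan matches Appendix~\ref{subapp:proof-randomized-heterogeneous-policy-overall} essentially line for line (the paper executes it by two rounds of row subtractions on the coefficient matrix).

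The genuine gap is your symmetrization step. You justify it by saying ``the worst-case cost is convex in the threshold profile,'' but convexity of a max over instances points the wrong way: what you actually need is the \emph{pointwise, per-instance} domination that the cost of an asymmetric threshold profile is at least the corresponding average of the costs of symmetric profiles --- in the paper's notation for two agents, $\text{row}'_{i,j}\ge\tfrac12(\text{row}'_{i,i}+\text{row}'_{j,j})$ --- and this inequality is not a convexity fact; it holds only because the group pass satisfies $G<MB$, and it is exactly where the paper's work lives (an explicit row-by-row verification in the two-agent LP after eliminating dominated columns, followed by a group-split induction to reach general $M$; see Lemma~\ref{lemma:symmetric-better-than-asymmetric-randomized} and the proof of Theorem~\ref{thm:homogeneous-randomized-policy}). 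That this cannot be a soft averaging argument is witnessed by the homogeneous deterministic setting, where asymmetric policies strictly beat symmetric ones. Replacing a correlated threshold vector by ``the marginal of a single coordinate as a shared threshold'' also changes which instances trigger the group purchase, so the comparison must be made instance by instance, not through the worst case. Separately, your lower bound via the LP dual plus Yao is plausible but needs care with the ratio-of-expectations versus expectation-of-ratios distinction; the paper instead exhibits an explicit exponential hard distribution and only proves an asymptotic $e/(e-1)$ bound at $\ell=0$, so if your dual certificate goes through it would actually be sharper than what the paper records --- but as written it is an assertion, not an argument.
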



The probability distribution and competitive ratio stated in Theorem~\ref{thm:randomized-heterogeneous-policy-overall} are obtained by solving a linear program that minimizes the overall competitive ratio, thereby certifying the optimality from construction.  
While the LP formulation follows the classical framework for devising optimal randomized ski‑rental policies~\citep{karlin1988competitive}, solving this new program requires a novel reduction that diverges from the standard approach; full details are given in Appendix~\ref{subsec:randomized-heterogeneous-policy-overall-proof}.  
Moreover, as the individual‑buy cost \(B\) grows to infinity, the competitive ratio in Theorem~\ref{thm:randomized-heterogeneous-policy-overall} converges to \(\tfrac{e}{e-1}\), matching the asymptotic lower bound in Lemma~\ref{lemma:randomized-competitive-ratio-lower-bound} and thereby providing an additional confirmation of optimality.

\subsection{Optimal Randomized Policy for State-Dependent Competitive Ratio}\label{subsec:randomized-state-dependent}

The key differences between overall and state-dependent \(\ratio\)s as shown in the deterministic case (Section~\ref{subsec:deterministic-heterogeneous-state-dependent}) is that the state-dependent \(\ratio\) accounts the paid cost \(\sum_{n=1}^{\ell} N_n\) as \emph{sunk costs}, yielding a threshold \(T\statedep(\ell) = \min\left\{ {G}/{(M-\ell)}, B \right\}\) independent of the paid cost.
With a similar derivation as in the overall case, we derive the probability density function for the randomized policy \(f_*\statedep(t; \{N_n\}_{n\le \ell}) \coloneqq p_t(\{N_n\}_{n\le \ell}, T\statedep(\ell))\), where the density \(p_t\) is defined in~\eqref{eq:randomized-p-t}.

\begin{theorem}\label{thm:randomized-heterogeneous-policy-state-dependent}
    The randomized policy of Algorithm~\ref{alg:randomized-optimal-policy} with input probability density \(f_*\statedep(t; \{N_n\}_{n\le \ell})\) is optimal for minimizing state-dependent competitive ratio,
    where the competitive ratio can be expressed as follows,
    \(
    \ovcr(f_*\statedep \vert \{N_n\}_{n\le\ell})
    = g(\{N_n\}_{n\le \ell}, T\statedep(\ell))
    \) defined in~\eqref{eq:competitive-ratio-group}.
\end{theorem}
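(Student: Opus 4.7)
The plan is to adapt the proof of Theorem~\ref{thm:randomized-heterogeneous-policy-overall} to the state-dependent setting, exploiting the fact that the only change in the objective is the replacement of $\ovopt$ by $\sdopt(\mathcal{I}\vert\{N_n\}_{n\le\ell}) = \sum_{n=1}^{\ell}N_n + \min\{G,\sum_{n=\ell+1}^M\min\{N_n,B\}\}$. Since the sunk cost $\sum_{n=1}^\ell N_n$ appears on both the algorithm side (the agents already paid for rentals up to their inactivity) and the offline side, the state-dependent competitive ratio is invariant to where we center this additive term; what changes is that the benchmark no longer ``absorbs'' part of the group-pass cost through the paid $\sum N_n$, which is precisely why the correct threshold becomes $T\statedep(\ell) = \min\{G/(M-\ell), B\}$ rather than $T\overall(\ell)$.

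I would first establish the symmetry reduction: any randomized asymmetric policy (in which active agents sample thresholds independently or from different marginals) can be dominated by the symmetric policy that samples a single shared threshold $T(\ell)\sim f(\cdot)$ whenever the state changes. The argument mirrors the symmetry reduction used for Theorem~\ref{thm:randomized-heterogeneous-policy-overall}: given any joint distribution over threshold tuples, constructing the symmetric coupling in which all active agents buy the group pass at the same sampled threshold can only weakly decrease the expected cost on every adversarial instance, because splitting thresholds never improves the amortized group-pass price $G/(M-\ell)$ while it may force the policy into more expensive individual-buy actions. Importantly, this argument is objective-agnostic on the algorithm side; the denominator $\sdopt$ does not depend on the policy, so the ratio improves pointwise.

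Next, fixing the symmetric structure, I set up a state-conditioned linear program whose variables are the probabilities $p_t := f(t;\{N_n\}_{n\le\ell})$ for $t \in \{N_\ell+1,\dots,\ceil{T\statedep(\ell)}\}$ (values outside this range are suboptimal and can be eliminated by standard exchange arguments). For every adversarial extension where the next revealed active day equals some candidate $N_{\ell+1}$ and the remaining agents' days are configured adversarially, write the constraint $\E[\cost]\le c\cdot \sdopt$. The adversary's worst extensions reduce to two canonical types: (i) all remaining agents have the same active day $N_{\ell+1}=s$ for $s\in\{N_\ell+1,\dots,\ceil{T\statedep(\ell)}\}$, giving one constraint per $s$, and (ii) $N_{\ell+1}=\infty$, giving the buy-all-the-way constraint. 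Making all these constraints tight yields a recursion on the $p_t$'s whose solution is the geometric-type form in~\eqref{eq:randomized-p-t}, with the boundary value at $t=N_\ell+1$ carrying the extra factor that encodes the sunk term $\sum_{n=1}^\ell N_n$ in the denominator of $\sdopt$; the normalization $\sum_t p_t = 1$ then pins down the scalar $g$ as exactly~\eqref{eq:competitive-ratio-group}, now evaluated at $T\statedep(\ell)$.

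The matching lower bound follows from Yao's principle: I construct a distribution over adversarial futures (a mixture of the ``everyone quits on day $s$'' instances for $s=N_\ell+1,\dots,\ceil{T\statedep(\ell)}$, plus the ``no one quits'' instance) whose weights are chosen so that for \emph{every} deterministic symmetric policy the expected ratio equals $g(\{N_n\}_{n\le\ell}, T\statedep(\ell))$. Combined with the symmetry reduction, this establishes optimality. I expect the main obstacle to be the careful book-keeping around the boundary term at $t = N_\ell+1$: because the algorithm only becomes aware of the current state exactly at that day, the density at this specific $t$ must compensate for the sudden jump in $\sdopt$ caused by the newly revealed $N_\ell$, which is why $p_{N_\ell+1}$ has a different closed form than $p_t$ for $t\ge N_\ell+2$. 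Verifying that the tight-constraint solution is feasible (nonnegative and sums to one) for all configurations of $\{N_n\}_{n\le\ell}$ and both regimes $G\lessgtr (M-\ell)B$ embedded in $T\statedep(\ell)$ is the main algebraic burden, but it is a direct computation once the LP has been set up.
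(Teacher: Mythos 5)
Your proposal follows essentially the same route as the paper: reduce to symmetric policies, set up the state-conditioned LP with constraints $\E[\cost]\le c\cdot\sdopt$ for each adversarial continuation, make the surviving constraints tight to obtain the geometric recursion with the special boundary term at $t=N_\ell+1$, and normalize to recover the density in~\eqref{eq:randomized-p-t} and the ratio $g(\{N_n\}_{n\le\ell},T\statedep(\ell))$, which is exactly how the paper adapts the proof of Theorem~\ref{thm:randomized-heterogeneous-policy-overall} by swapping $\ovopt$ for $\sdopt$ and $T\overall(\ell)$ for $T\statedep(\ell)$. Your Yao's-principle lower bound via a finite mixture tuned to equalize all deterministic thresholds is a slightly sharper packaging than the paper's asymptotic exponential-distribution construction, but it is the same principle and not a materially different argument.
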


The optimality of Theorem~\ref{thm:randomized-heterogeneous-policy-state-dependent} is guaranteed in the same way as Theorem~\ref{thm:randomized-heterogeneous-policy-overall}: the policy is optimal by construction (detailed in Appendix~\ref{subapp:proof-randomized-heterogeneous-policy-state-dependent}), and its ratio matches the \(\frac{e}{e-1}\) lower bound asymptotically.
The randomized policy for \(\ratio\statedep\) in Theorem~\ref{thm:randomized-heterogeneous-policy-state-dependent} is similar to that of \(\ratio\overall\) in Theorem~\ref{thm:randomized-heterogeneous-policy-overall}, with the only difference on the input thresholds.
This difference aligns with that of the deterministic case.
Notice that \(T\overall(0) = T\statedep(0) = \frac{G}{M}\)
and thus \(p_t(\emptyset, \frac{G}{M})\) is exactly the density function for
the randomized policy for the homogeneous \masr cases in Theorem~\ref{thm:homogeneous-randomized-policy}.
From this perspective, both policies in Theorems~\ref{thm:randomized-heterogeneous-policy-overall} and~\ref{thm:randomized-heterogeneous-policy-state-dependent} can be viewed as generalizations of the homogeneous case, regarding two different objectives.
We present numerical and empirical comparisons of both policies in Figures~\ref{subfig:randomized-policy-comparison} and~\ref{fig:hist-comparison-group}, respectively, demonstrating that both  competitive ratios are close but still distinct from each other.

\subsection{Optimal Randomized Policy for Individual Rational Competitive Ratio}
\label{subsec:randomized-individual-rational}

Solving the linear programming for optimizing individual rational \(\ratio\) leads to the probability density function
\(f_*\individual(t; \{N_n\}_{n\le \ell}) = q_t(\{N_n\}_{n\le \ell}, T\statedep(\ell))\), where the density \(q_t\) is defined as follows,
\begin{align}\label{eq:randomized-q-t}
     & q_t(\{N_n\}_{n\le \ell}, T(\ell))
    \!\coloneqq\!\!
    \begin{cases}
        \frac{h}{T(\ell)} \!\!\left(\!\! \frac{(N_\ell {+} 1)(T(\ell) {-} 1)}{N_\ell {+} T(\ell)} \!\!\left(\! 1 {-} \frac{1}{T(\ell)} \right)^{T(\ell) {-} N_\ell {-} 2}
        \!\!\!\!
        - \frac{N_\ell(T(\ell) {-} 1)}{N_\ell {+} T(\ell)} \!\!\right)
         & t = N_\ell + 1,
        \\
        \frac{h}{T(\ell)} \left(1\!-\!\frac{1}{T(\ell)} \right)^{T(\ell) - t}
         & \!\!\!\!\!\!\!\!\!\!\!\!\!\!\!\!\!\!\!\!\!\!\!\!\!\!\!\!\!\!\!\! t \in \left\{N_\ell {+} 2, \dots, \ceil*{T(\ell)}\right\},
    \end{cases}
    \\\label{eq:competitive-ratio-individual}
     & \text{where  }\quad h = h(\{N_n\}_{n\le \ell}, T(\ell))
    =
    \frac{1}{\frac{(T(\ell))^2 + N_\ell}{T(\ell) (T(\ell) + N_\ell)} - \frac{T(\ell) - 1}{T(\ell) + N_\ell}\left( 1 - \frac{1}{T(\ell)} \right)^{T(\ell) - N_\ell - 1 }}.
\end{align}

\begin{theorem}\label{thm:randomized-heterogeneous-policy-individual-rational}\label{thm:randomized-heterogeneous-policy-individual}
    The randomized policy of Algorithm~\ref{alg:randomized-optimal-policy} with input probability density \(f_*\individual\) is optimal for minimizing individual rational competitive ratio,
    where the competitive ratio for agent \(m\) can be expressed as
    \(
    \indcr_m(f_*\individual \vert \{N_n\}_{n\le m-1})
    = h(\{N_n\}_{n\le m-1}, T\statedep(m-1))
    \) defined in~\eqref{eq:competitive-ratio-individual}.
\end{theorem}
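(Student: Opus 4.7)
The plan is to follow a three-step strategy paralleling the ones used for $\ratio\overall$ and $\ratio\statedep$, adapted to the per-agent (individual-rational) objective. First, I would reduce to symmetric randomized policies in which, at state $\ell$, all active agents draw a single common threshold from a density $f(\ell)$. The reduction mirrors the one for the group-level ratios: sharing a threshold weakly dominates agent-specific thresholds on a per-agent basis because it enables coordinated group-pass purchase, and the benchmark $\opt_m\individual$ is independent of the policy, so no agent's ratio can worsen under the symmetrized policy. Second, given symmetry, the task at state $\{N_n\}_{n\le m-1}$ reduces to choosing a density $q$ on thresholds $t \in \{N_{m-1}+1,\dots,\lceil T\statedep(m-1)\rceil\}$ that minimizes agent $m$'s worst-case ratio. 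Third, a matching lower bound follows via Yao's principle by constructing an adversarial instance distribution dual to the primal LP.

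For the LP itself, write $\bar T = T\statedep(m-1)$ and $\bar N = N_{m-1}$. A direct case analysis shows that when the threshold $T\sim q$ is drawn, agent $m$'s cost equals $N_m$ when $N_m<T$ and equals $T-1+\bar T$ when $N_m\ge T$ (the per-agent buy price equals exactly $\bar T$ regardless of whether the group or individual pass is chosen). Using~\eqref{eq:indopt} together with the observation that $N_n < T\statedep(n-1)$ for all $n\le m-1$ (since those agents became inactive without buying), one verifies that $\opt_m\individual$ equals $N_m$ when $N_m\le\bar T$ and equals $\bar T$ when $N_m>\bar T$, independent of the adversary's choice of $N_{m+1},\dots,N_M$. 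The LP then minimizes $h$ subject to one individual-ratio constraint per $N_m\in\{\bar N+1,\dots,\lceil\bar T\rceil\}$, a single tail constraint for $N_m>\bar T$, the normalization $\sum_t q_t=1$, and $q_t\ge 0$.

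Making the interior constraints tight (for $N_m\in\{\bar N+2,\dots,\lceil\bar T\rceil\}$) and subtracting successive equalities yields a clean geometric recurrence whose solution is $q_t=(h/\bar T)(1-1/\bar T)^{\lceil\bar T\rceil-t}$, matching the ``bulk'' case of~\eqref{eq:randomized-q-t}. The two remaining unknowns $q_{\bar N+1}$ and $h$ are then determined by simultaneously binding the normalization and the tail constraint $\sum_t q_t(t-1+\bar T)\le h\bar T$, yielding the closed form of $h(\{N_n\}_{n\le m-1},T\statedep(m-1))$ in~\eqref{eq:competitive-ratio-individual} and the boundary piece of $q_t$ at $t=\bar N+1$ in~\eqref{eq:randomized-q-t}. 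The main obstacle, and the reason $q_t$ differs from $p_t$, sits precisely at this boundary: the interior LP has the clean ``$+h$ per step'' structure of the canonical ski-rental LP, but the boundary constraint at $N_m=\bar N+1$ carries extra slack proportional to $\bar N$, since even the trivial instance $N_m=\bar N$ already incurs cost $\bar N$ rather than zero. Accommodating this slack while keeping all constraints feasible and verifying non-negativity of $q_{\bar N+1}$ is what produces the distinctive two-piece shape of $q_t$ and is also what makes $h$ strictly differ from the corresponding group-level ratio $g$ of Theorem~\ref{thm:randomized-heterogeneous-policy-state-dependent}.
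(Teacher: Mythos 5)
Your proposal follows essentially the same route as the paper: reduce to symmetric state-aware randomized policies, set up the per-agent LP with cost $t-1+T\statedep(m-1)$ for $N_m\ge t$ and benchmark $\min\{N_m, T\statedep(m-1)\}$, solve by tightening and differencing successive constraints to get the geometric bulk of $q_t$ plus the distinctive boundary mass at $t=N_{m-1}+1$ carrying the sunk-cost term, and close with a Yao-principle lower bound. The only quibble is a slight bookkeeping looseness about which two constraints (the boundary row at $N_m=N_{m-1}+1$ versus the tail row at $N_m=T\statedep(m-1)$, together with normalization) pin down $q_{N_{m-1}+1}$ and $h$, but the resulting formulas and the overall argument match the paper's.
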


The optimality of Theorem~\ref{thm:randomized-heterogeneous-policy-individual-rational} is by its LP construction (detailed in Appendix~\ref{subapp:proof-randomized-heterogeneous-policy-individual}), and its competitive ratio also asymptotically matches the \(\frac{e}{e-1}\) ratio lower bound in Lemma~\ref{lemma:randomized-competitive-ratio-lower-bound}.
The specific density function \(q_t\) for \(f_*\individual\) is different from the \(p_t\) for \(f_*\statedep\) and \(f_*\overall\).
This difference comes from their different LP formulations: the LP for the individual rational \(\ratio\) is for any individual active agent, while that for the state-dependent \(\ratio\) is for the group of all remaining agents.
The policy \(f_*\individual\) also depends on the state-dependent threshold \(T\statedep(\ell)\) on which the deterministic policy \(\pi_*\individual\) in Section~\ref{subsec:deterministic-individual-rational} also depends.
This identical threshold \(T\statedep(\ell)\) between \(f_*\individual\) and \(f_*\statedep\) is due to both considering the paid cost as sunk costs.



\section{Experiments}
\label{sec:experiments}

In this section, we report the empirical performance of our proposed algorithms on the multi-agent ski-rental problem.

\textbf{Algorithms:}
We denote the proposed algorithms by the label, ``ALG Ratio'', where ``ALG'' is the type of the algorithm where ``Det'' stands for deterministic and ``Rand'' stands for randomized, and ``Ratio'' is the type of competitive ratio where ``OV'' stands for overall, ``SD'' stands for state-dependent, and ``IND'' stands for individual rational.
Besides the proposed six policies, we also consider two baseline algorithms with fixed buy thresholds (i.e., ``state-unaware'', in comparison to our ``state-aware'' policies): ``Det Fix'' with a buy threshold of \(\frac{G}{M}\) to buy the group pass,
and ``Rand Fix'' with a group pass buy threshold sample from the truncated distribution with rate \(\frac{G}{M}\).
We recall that ``Det SD'' and ``Det IND'' refer to the same policy, corresponding to the state-dependent threshold \(T\statedep\), and hence their performance is the same (with overlapped curves).

\begin{figure}[t]
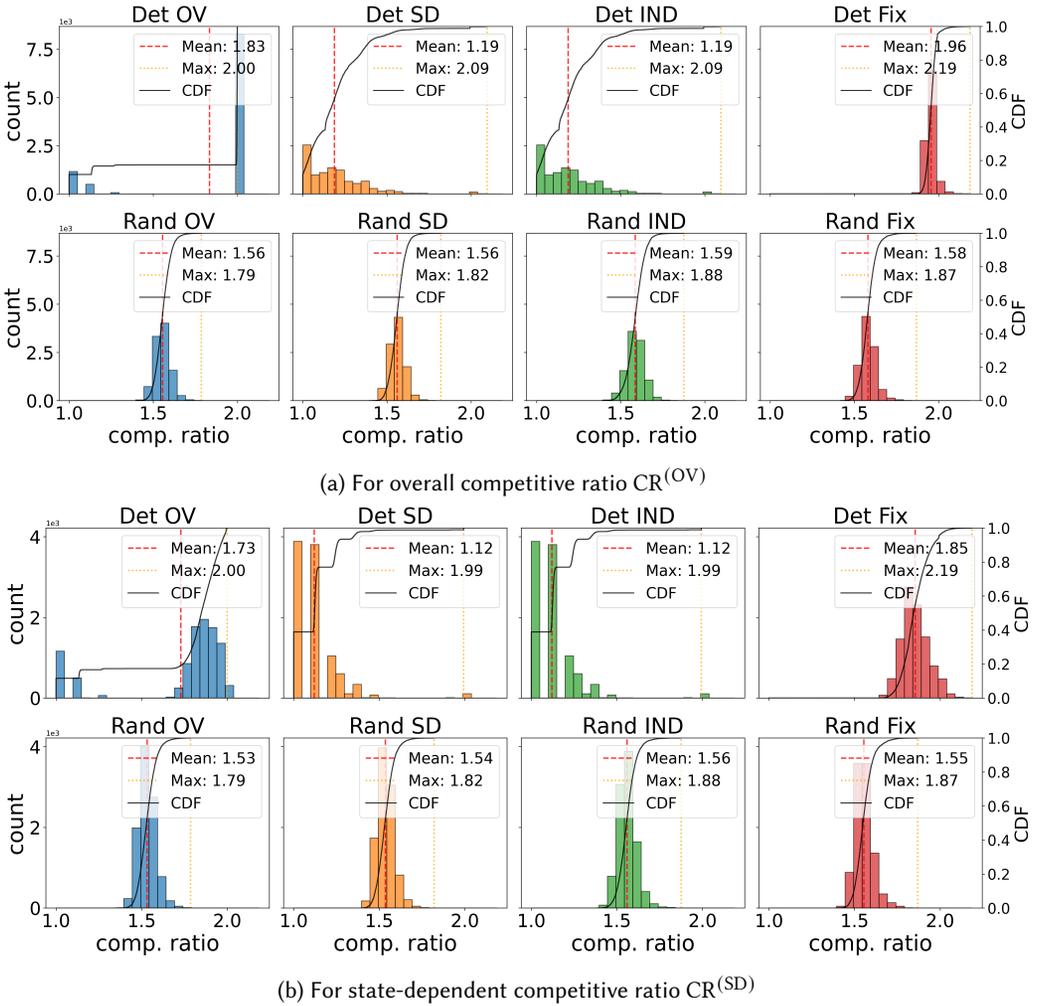

    \centering
    \begin{subfigure}{0.99\textwidth}
        \centering
        \includegraphics[width=\linewidth]{figures/hist_ratio_OV_T10000_M10_B200_G1500_H300_S30_MEAN160_LB50_UB300_R50.png}
        \caption{For overall competitive ratio \(\ovcr\)}
        \label{subfig:hist-ov}
    \end{subfigure}
    \hspace{0.2em}
    \\
    \begin{subfigure}{0.99\textwidth}
        \centering
        \includegraphics[width=\linewidth]{figures/hist_ratio_SD_T10000_M10_B200_G1500_H300_S30_MEAN160_LB50_UB300_R50.png}
        \caption{For state-dependent competitive ratio \(\sdcr\)}
        \label{subfig:hist-sd}
    \end{subfigure}
    \caption{Histogram and empirical cumulative distribution functions (CDFs) of two group competitive ratios: ``Max'' in the legend refers to the worst-case competitive ratio, while ``Mean'' refers to average performance.}\label{fig:hist-comparison-group}
\end{figure}

\begin{figure}[t]
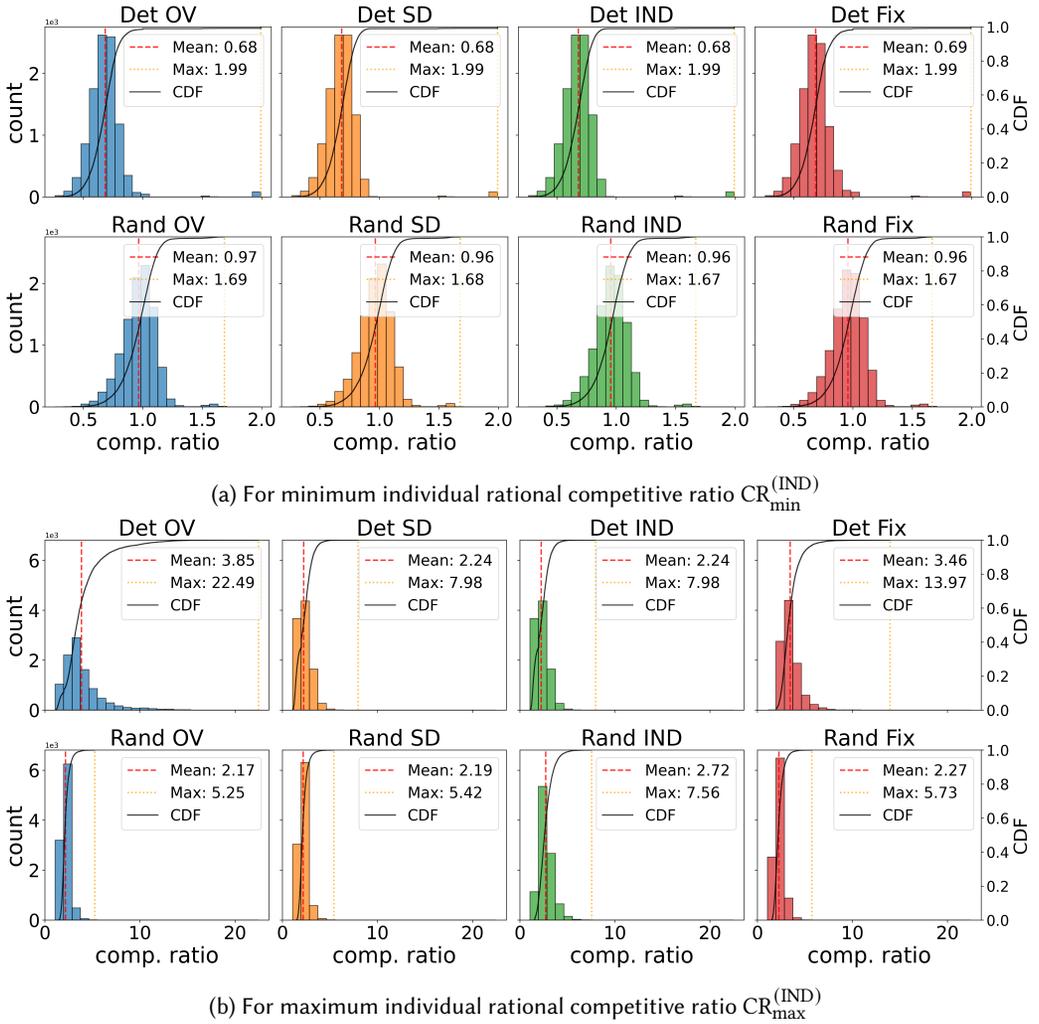

    \centering
    \begin{subfigure}{0.99\textwidth}
        \centering
        \includegraphics[width=\linewidth]{figures/hist_ratio_IND_min_T10000_M10_B200_G1500_H300_S30_MEAN160_LB50_UB300_R50.png}
        \caption{For minimum individual rational competitive ratio \(\indcr_{\min}\)}
        \label{subfig:hist-ind-min}
    \end{subfigure}
    \\
    \begin{subfigure}{0.99\textwidth}
        \centering
        \includegraphics[width=\linewidth]{figures/hist_ratio_IND_max_T10000_M10_B200_G1500_H300_S30_MEAN160_LB50_UB300_R50.png}
        \caption{For maximum individual rational competitive ratio \(\indcr_{\max}\)}
        \label{subfig:hist-ind-max}
    \end{subfigure}
    \caption{Histogram and empirical cumulative distribution functions (CDFs) of individual rational competitive ratios: ``Max'' in the legend refers to the worst-case competitive ratio, ``Mean'' refers to average performance.}\label{fig:hist-comparison-ind}
\end{figure}

\textbf{Setup:} We consider a \masr problem consisting of \(M=10\) agents, each with a rental cost of \(1\), an individual-buy cost of \(B = 200\), and the group-buy cost is \(G = 1500\).
We randomly sample the active days of the agents from a normal distribution with mean \(160\) and standard deviation \(\sigma = 30\), truncated to the range \([50, 300]\).
The experimental results for other distribution configurations are provided in Appendix~\ref{app:additional-experiments}.
Each of the experiments is repeated \(10,\!000\) times. The average costs and the empirical competitive ratios of these algorithms are reported.

\textbf{Metrics:}
While the overall and state-dependent competitive ratios are in terms of the whole group,
the individual rational competitive ratio \(\indcr_{m}\) is defined for each agent individually, and we report the maximum and minimum among all agents, i.e., \(\indcr_{\max}\coloneqq \max_{m\in\mathcal{M}}\indcr_m\) and \(\indcr_{\min}\coloneqq \min_{m\in\mathcal{M}}\indcr_m\).
We note that our proposed algorithms are optimal in the sense of minimizing the worst-case competitive ratios, which corresponds to the highest realized competitive ratio (Max), \emph{instead of the mean of them.}
Nevertheless, the mean can reflect the average performance of the proposed algorithms in a relatively benign environment.
Besides the three types of competitive ratios, we also report the average accumulated group cost of these algorithms.

\textbf{Experimental Results:}
Figures~\ref{fig:hist-comparison-group} and~\ref{fig:hist-comparison-ind} present the histograms and cumulative distribution functions (CDFs) of the empirical competitive ratios for all eight evaluated algorithms. The results are shown for four metrics: the overall competitive ratio \(\ovcr\), the state-dependent competitive ratio \(\sdcr\), and the individual rational competitive ratios \(\indcr_{\min}\) and \(\indcr_{\max}\).

In Figure~\ref{subfig:hist-ov}, which illustrates the overall competitive ratio, we observe that ``Det OV'' and ``Rand OV'' achieve the lowest worst-case (Max) competitive ratios among the deterministic and randomized algorithms, respectively. This supports the optimality of the proposed algorithms concerning the overall competitive ratio.
In Figure~\ref{subfig:hist-sd}, which shows the state-dependent competitive ratio, ``Det SD'' attains the lowest worst-case competitive ratio among all deterministic algorithms. Among the randomized algorithms, ``Rand SD'' and ``Rand OV'' perform the best, with ``Rand OV'' showing slightly better results due to limited sampling, since the empirical maximum of the competitive ratio is only an approximation of the actual worst-case performance.
In terms of average performance (Mean), ``Det SD''---which is equivalent to ``Det IND''---achieves the lowest average overall and state-dependent competitive ratios among deterministic algorithms. Meanwhile, ``Rand OV'' yields the lowest averages among randomized algorithms.

Figure~\ref{fig:hist-comparison-ind} presents the histogram and CDF of the minimum and maximum individual rational competitive ratios. Recall from~\eqref{eq:indopt} that the individual rational benchmark is tailored to minimize the cost for agents with fewer active days. Therefore, the minimum individual rational competitive ratio \(\indcr_{\min}\) is the most relevant metric for evaluating algorithmic performance under individual rationality, while \(\indcr_{\max}\) reflects the worst-case trade-off required to satisfy this benchmark.
Figure~\ref{subfig:hist-ind-min} shows that ``Rand IND'' achieves the lowest worst-case (Max) value of \(\indcr_{\min}\) among the randomized algorithms, while all deterministic algorithms exhibit similar worst-case performance. This validates the optimality of the proposed algorithms in minimizing the individual rational competitive ratio.
Finally, Figure~\ref{subfig:hist-ind-max} reveals that ``Rand IND'' incurs the highest worst-case (Max) value of \(\indcr_{\max}\) among the randomized algorithms, illustrating the cost incurred by agents with longer activity durations to maintain individual rationality.

\begin{figure}[t]
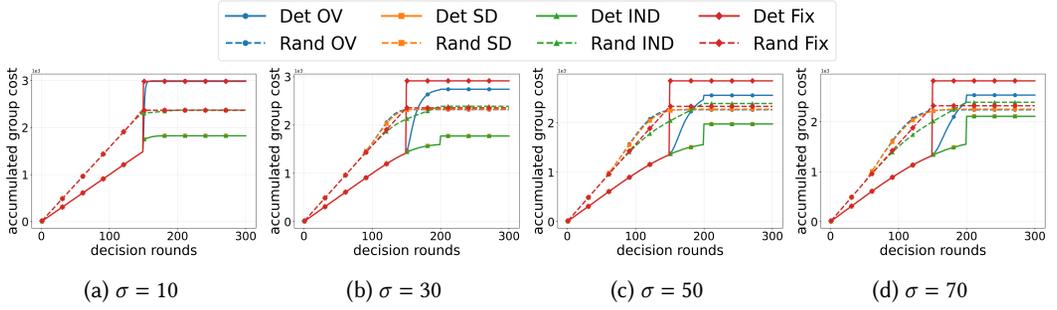

    \centering
    \begin{subfigure}{0.6\textwidth}
        \centering
        \includegraphics[width=\linewidth]{figures/group_cost_legend.png}
    \end{subfigure}
    \\
    \begin{subfigure}{0.245\textwidth}
        \centering
        \includegraphics[width=\linewidth]{figures/group_cost_individual_T10000_M10_B200_G1500_S10_MEAN160.png}
        \caption{\(\sigma=10\)}
        \label{subfig:group-cost-10}
    \end{subfigure}
    \hfill
    \begin{subfigure}{0.245\textwidth}
        \centering
        \includegraphics[width=\linewidth]{figures/group_cost_individual_T10000_M10_B200_G1500_S30_MEAN160.png}
        \caption{\(\sigma=30\)}
        \label{subfig:group-cost-30}
    \end{subfigure}
    \hfill
    \begin{subfigure}{0.245\textwidth}
        \centering
        \includegraphics[width=\linewidth]{figures/group_cost_individual_T10000_M10_B200_G1500_S50_MEAN160.png}
        \caption{\(\sigma=50\)}
        \label{subfig:group-cost-50}
    \end{subfigure}
    \hfill
    \begin{subfigure}{0.245\textwidth}
        \centering
        \includegraphics[width=\linewidth]{figures/group_cost_individual_T10000_M10_B200_G1500_S70_MEAN160.png}
        \caption{\(\sigma=70\)}
        \label{subfig:group-cost-70}
    \end{subfigure}
    \caption{Accumulative cost of all agents of algorithms for a normal distribution of active days with different standard deviations \(\sigma\).}\label{fig:group-cost}
\end{figure}

Figure~\ref{fig:group-cost} shows the cumulative group cost, averaged over \(10{,}000\) trials, for all algorithms at four heterogeneity levels \(\sigma\in\{10,30,50,70\}\) (all other parameters remain as in the setup).
Across every setting, the proposed ``Det~SD'' policy incurs the lowest cost, echoing its best mean competitive ratio in Figure~\ref{fig:hist-comparison-group} and underscoring its practical appeal.
When activity durations are nearly uniform (\(\sigma=10\); Figure~\ref{subfig:group-cost-10}), the baseline policies ``Det~Fix'' and ``Rand~Fix'' perform almost as well as the adaptive designs, since little benefit arises from state awareness.
As heterogeneity increases (\(\sigma=50,70\)), the adaptive policies—``Det~OV'', ``Det~SD'', and ``Rand~OV''—increasingly outstrip the baselines.
Among randomized methods, \texttt{Rand~IND} has the highest cost, consistent with its focus on the individual‑rational competitive ratio rather than group‑level efficiency.

\section{Conclusion}


We have introduced the multi-agent ski-rental problem (\masr), a natural generalization of the classical ski-rental problem that incorporates a discounted \emph{group-purchase} option prevalent in real-world scenarios.
Three performance metrics were examined---\emph{overall}, \emph{state-dependent}, and \emph{individually rational} competitive ratios---of which only the first reduces to the single-agent standard competitive ratio, while the latter two capture nuances unique to multi-agent interaction.
For each metric, we devised and analyzed optimal deterministic and randomized policies.
Our results reveal that symmetry is almost always optimal: in heterogeneous settings and for all randomized policies, the best strategy assigns identical thresholds to every active agent.
The lone exception arises in the deterministic homogeneous case, where agents share the same activity horizon, and a carefully tuned asymmetric rule can match or surpass its symmetric counterpart.
Central to the analysis is the notion of \emph{state}---the vector of active-day counts disclosed by the $\ell$ agents that have already become inactive.
All optimal policies are necessarily \emph{state-aware}: whenever the state updates, a deterministic policy must revise its purchase threshold, whereas a randomized policy must resample that threshold from a freshly adjusted distribution.
These findings extend the foundational insights of ski-rental to cooperative online decision-making and provide a blueprint for tackling more intricate rent-buy trade-offs in future multi-agent systems.


\textbf{Future directions.}
Although this paper provides a comprehensive treatment of the \masr\ problem, several compelling research avenues remain open.
\emph{Cost sharing.}
Our model assumes that agents who participate in a group-buy pass divide the group-buy cost evenly.
While this assumption leaves the overall and state-dependent competitive ratios unchanged, it obscures questions of fairness at the individual level.
Designing principled cost-sharing mechanisms---drawing, for instance, on proportional cost splitting, Shapley-value ideas, or incentive-compatible mechanisms---would make the framework more realistic and robust.
\emph{Richer rent–buy trade-offs.}
Many practical scenarios exhibit rent–buy structures more intricate than classical ski-rental.
A paradigmatic example is the \emph{Bahncard} problem~\citep{fleischer2001bahncard}, in which purchasing a pass confers only temporary discounts.
Extending the concept of group-buy to such settings and analyzing the resulting multi-agent online algorithms is a natural next step.
\emph{Learning augmentation.}
Recent work on learning-augmented algorithms has yielded concrete performance gains in online optimization, including ski-rental with predictions~\citep{wei2020optimal,purohit2018improving}.
Investigating how similar predictive signals can be leveraged to enhance performance in \masr\ constitutes another promising direction.







\bibliographystyle{ACM-Reference-Format}
\bibliography{bibliography}

\appendix

\newpage
\appendix

\section*{Appendix}
The appendix is organized as follows, and Table~\ref{tab:summary-ratio} summarizes the competitive ratios of all algorithms discussed in this paper.
\begin{itemize}
    \item In Appendix~\ref{app:individual-rational-offline-policy-proof}, we prove the individual rational offline policy in Section~\ref{sec:model-formulation} (Proposition~\ref{prop:indopt}).
    \item In Appendix~\ref{app:homogeneous-randomized-policy}, we prove the optimal homogeneous randomized policy (Theorem~\ref{thm:homogeneous-randomized-policy}).
    \item In Appendix~\ref{app:heterogeneous-determinisitic-policy-proof}, we present the proofs for the optimal deterministic policies for all three competitive ratios. Specifically,
          \begin{itemize}
              \item Appendix~\ref{sec:symmetric-better-than-asymmetric-deterministic-proof} proves there exists a symmetric deterministic policy is better than any given asymmetric deterministic policy (Lemma~\ref{lemma:symmetric-better-than-asymmetric-deterministic})
              \item Appendix~\ref{subapp:proof-determinisitic-overall-competitive-ratio} proves the overall competitive ratio (Theorem~\ref{thm:deterministic-heterogeneous-policy-overall})
              \item Appendix~\ref{subapp:proof-determinisitic-state-dependent-competitive-ratio} proves the state-dependent competitive ratio (Theorem~\ref{thm:deterministic-heterogeneous-policy-state-dependent})
              \item Appendix~\ref{subapp:proof-determinisitic-individual-rational-competitive-ratio} proves the individual rational competitive ratio (Theorem~\ref{thm:deterministic-heterogeneous-policy-individual-rational})
          \end{itemize}
    \item In Appendix~\ref{app:heterogeneous-randomized-policy-proof}, we present the proofs for the optimal  randomized policies for all three competitive ratios. Specifically,
          \begin{itemize}
              \item Appendix~\ref{subapp:basic-proof-for-randomized-policy} reviews the detailed procedures of the linear programming (LP) construction for the homogeneous \masr (Theorem~\ref{thm:homogeneous-randomized-policy}) as basics for the proofs in the rest of this appendix.
              \item Appendix~\ref{sec:symmetric-better-than-asymmetric-randomized} proves there exists a symmetric randomized policy is better than any given asymmetric randomized policy (Lemma~\ref{lemma:symmetric-better-than-asymmetric-randomized})

              \item Appendix~\ref{subapp:proof-randomized-overall-competitive-ratio} proves the overall competitive ratio (Theorem~\ref{thm:randomized-heterogeneous-policy-overall})
              \item Appendix~\ref{subapp:proof-randomized-state-dependent-competitive-ratio} proves the state-dependent competitive ratio (Theorem~\ref{thm:randomized-heterogeneous-policy-state-dependent})
              \item Appendix~\ref{subapp:proof-randomized-individual-rational-competitive-ratio} proves the individual rational competitive ratio (Theorem~\ref{thm:randomized-heterogeneous-policy-individual-rational})
          \end{itemize}
    \item In Appendix~\ref{app:lower-bound}, we prove the competitive ratio lower bounds for \masr. \begin{itemize}
              \item Appendix~\ref{app:deterministic-lower-bound} discusses the lower bounds for deterministic policies from the perspective of optimality by construction.
              \item Appendix~\ref{app:randomized-lower-bound} proves the lower bounds for randomized policies (Lemma~\ref{lemma:randomized-competitive-ratio-lower-bound}).
          \end{itemize}
    \item In Appendix~\ref{app:deterministic-CR-trend}, we provide more discussions and details on the numerical results in Figure~\ref{fig:deterministic-CR-trend}.
    \item In Appendix~\ref{app:additional-experiments}, we provide additional experiments to validate the effectiveness of our proposed algorithms.
\end{itemize}

\begin{table}[tbh]
    \centering
    \caption{Competitive ratios:
        \(T\upbra{\text{OV}}(\ell) = \min\left\{{\frac{G - \sum_{m=1}^\ell N_m}{M-\ell}}, B\right\} \) and \(T\upbra{\text{SD}}(\ell) = \min\left\{{\frac{G}{M-\ell}}, B\right\}\)}\label{tab:summary-ratio}
    \resizebox{\textwidth}{!}{
        \begin{tabular}{|c|l|c|}
            \hline
            \multicolumn{2}{|c|}{Algorithm Type}
             & \textbf{Deterministic}
            \\
            \hline
            \multicolumn{2}{|c|}{Single-agent}
             & \(\displaystyle 1 + \frac{B-1}{B}\)
            \\
            \hline
            \multicolumn{2}{|c|}{Homogeneous}
             & \(\displaystyle 1 + \frac{G-M}{G}\) (Thm.~\ref{thm:homogeneous-deterministic-policy})
            \\\hline
            \multirow[c]{3}{*}[-0.03in]{Hetero.}
             & Overall
             & \(\displaystyle  1 +
            \frac{\min\left\{
                G, (M-\ell) B
                \right\}
                - (M-\ell)
            }{
                \min\left\{
                G, \sum_{n=1}^{\ell} N_n + (M-\ell) B
                \right\}
            }\) (Thm.~\ref{thm:deterministic-heterogeneous-policy-overall})
            \\\cline{2-3}
             & St.-dep.
             & {\(\displaystyle 1 + \frac{\min\{G, (M-\ell)B\} - (M-\ell)
                    }{
                        \sum_{n=1}^{\ell} N_n
                        + \min\{G, (M-\ell)B\}
                    }\) (Thm.~\ref{thm:deterministic-heterogeneous-policy-state-dependent})}
            \\\cline{2-3}
             & Ind.
             & {\(\displaystyle
                    \begin{cases}
                        1
                         & \text{ if } m \le \ell_*
                        \\
                        2 - \frac{1}{T\statedep(\ell_*)}
                         & \text{ if } m > \ell_*
                    \end{cases}\) (Thm.~\ref{thm:deterministic-heterogeneous-policy-individual})}
            \\
            \hline
            \multicolumn{2}{|c|}{Algorithm Type}
             & \textbf{Randomized}
            \\
            \hline
            \multicolumn{2}{|c|}{Single-agent}
             & \(\left( 1 - \left( 1 - \frac{1}{B} \right)^{B} \right)^{-1}\)
            \\
            \hline
            \multicolumn{2}{|c|}{Homogeneous}
             & \(\left( 1 - \left( 1 - \frac{M}{G} \right)^{\frac{G}{M}} \right)^{-1}\)  (Thm.~\ref{thm:homogeneous-randomized-policy})
            \\\hline
            \multirow[c]{3}{*}[-0.03in]{Hetero.}
             & Overall
             & \(\displaystyle \left( 1 - \frac{(M-\ell)(T\overall(\ell) - 1)}{\sum_{m=1}^\ell N_m + (M-\ell) (N_\ell + T(\ell))}\left( 1 - \frac{1}{T\overall(\ell)} \right)^{T\overall(\ell) - N_\ell - 1 } \right)^{-1}\) (Thm.~\ref{thm:randomized-heterogeneous-policy-overall})
            \\\cline{2-3}
             & St.-dep.
             & \(\displaystyle \left( 1 - \frac{(M-\ell)(T\statedep(\ell) - 1)}{\sum_{m=1}^\ell N_m + (M-\ell) (N_\ell + T\statedep(\ell))}\left( 1 - \frac{1}{T\statedep(\ell)} \right)^{T\statedep(\ell) - N_\ell - 1 } \right)^{-1}\) (Thm.~\ref{thm:randomized-heterogeneous-policy-state-dependent})
            \\\cline{2-3}
             & Ind.
             & \(\displaystyle \left( \frac{(T\statedep(m))^2 + N_m}{T\statedep(m) (T\statedep(m) + N_m)} - \frac{T\statedep(m) - 1}{T(m) + N_m}\left( 1 - \frac{1}{T\statedep(m)} \right)^{T(m) - N_m - 1 } \right)^{-1}\) (Thm.~\ref{thm:randomized-heterogeneous-policy-individual-rational})
            \\
            \hline
        \end{tabular}
    }
\end{table}

\section{Proof for Individual Rational Offline Policy in Section~\ref{sec:model-formulation}: Proposition~\ref{prop:indopt}} \label{app:individual-rational-offline-policy-proof}

This Nash equilibrium is derived iteratively.
First from the perspective of agent \(1\) with the minimum active days, it is better off to rent for all active days if \(N_1 \le \min \left\{{G}/{(M-0)}, B\right\}\), and no other agents' decisions can affect agent \(1\)'s.
This follows a standard argument in general-sum game for Nash equilibrium~\citep[Chapter~4]{karlin2017game}. Here we take two-agent ski-rental with active days \(N_1 < N_2\) as an example, but the same derivation applies to any number of agents. In offline setting (know \(N_1, N_2\)) and at the beginning of this game, each of the agents \(1\) and \(2\) can choose from rent, buy individually, or buy as a group,
yielding the payoff matrix for the two agents as shown in Table~\ref{tab:payoff-matrix}.
From this payoff matrix, we can see no matter what agent \(2\) chooses, agent \(1\) is always better off to rent if \(N_1 \le \min \left\{ {G}/2, B \right\}\).
\begin{table}[htpb]
    \centering
    \caption{Payoff matrix for two agents with the active days \(N_1<N_2\).}\label{tab:payoff-matrix}
    \begin{tabular}{lccc}
        \hline
         & \(\text{Rent}_2\)
         & \(\text{Individual Buy}_2\)
         & \(\text{Group Buy}_2\)
        \\
        \hline
        \(\text{Rent}_1\)
         & \((N_1, N_2)\)
         & \((N_1, B)\)
         & \((N_1, G)\)
        \\
        \(\text{Individual Buy}_1\)
         & \((B, N_2)\)
         & \((B, B)\)
         & \((B, G)\)
        \\
        \(\text{Group Buy}_1\)
         & \((G, N_2)\)
         & \((G, B)\)
         & \(\left( \frac{G}{2}, \frac{G}{2} \right)\)
        \\\hline
    \end{tabular}
\end{table}

Then, we can iteratively apply the same reasoning to the agent \(2\) with the second minimum active days \(N_2\), and so on,
until reaching the agent \(\ell^*+1\) who is better off to buy the ski pass (either with the remaining agents for the group pass or separately for the individual pass), i.e., \(N_{\ell^*+1} > \min \left\{ {G}/{(M-\ell^*)}, B \right\}\).
That is, for all remaining agents \(m>\ell^*\), they are better off to buy the ski pass, which corresponds to the second case in~\eqref{eq:indopt}.

\section{Proofs for Homogeneous Multi-Agent Ski-Rental: Randomized Policy in Theorem~\ref{thm:homogeneous-randomized-policy}}
\label{app:homogeneous-randomized-policy}
\textbf{Technical novelty of Theorem~\ref{thm:homogeneous-randomized-policy}'s proof.} Although we intuitively illustrate that the asymmetric randomized policy is suboptimal, the proof for Theorem~\ref{thm:homogeneous-randomized-policy} needs to formally take both the symmetric and asymmetric policies into consideration.
To address the challenge of comparing the symmetric policy with exponentially many asymmetric policies, we introduce two new techniques to analyze the randomized policies:
(1) For the case of \(M=2\) agents (base hypothesis), we transform the problem to solving an underdetermined linear programming (LP) where the underdeterminedness is due to the additional asymmetric policies.
Then we utilize the property of LP that the optimal solution for an LP must be reached at the vertex of the feasible region and by comparing all feasible vertices, we show that the optimal policy is symmetric for the case of \(2\) agents.
(2) We use the second principle of induction to extend the base hypothesis to general \(M\) agents.
To bridge the general \(M\) case to the base hypotheses in the induction, we introduce a \emph{group-split} technique which splits the large group into two smaller groups.
Inside each smaller group, as it falls into the hypotheses of the induction, the optimal policy is symmetric.
Across these two symmetric groups, one can use the first technique above to show that the optimal policy among them is also symmetric.



\begin{proof}[Proof of Theorem~\ref{thm:homogeneous-randomized-policy}]
    As illustrated above, we use the second principle of induction to prove the theorem.

    \textbf{Step 1: Base hypothesis.} We first prove that for the case of \(M=2\) agents, the optimal randomized policy is symmetric.
    Let's first consider the case for \(M=2\) \emph{without group-buy option}, whose solution can be obtained from solving the following LP,
    \begin{align}
        \begin{blockarray}{cccccccccccc}
            N & p_{1,1} & p_{2,2} & p_{3,3} & p_{4,4} & p_{1,2} & p_{1,3}
            & p_{1,4} & p_{2,3}  & p_{2,4} & p_{3,4}
            & 2N
            \\
            \begin{block}{c[cccc|cccccc]c}
                1 & 2B & 2 & 2   & 2 & B+1 & B+1  & B+1 & 2 & 2 & 2 & 2 c \\
                2 & 2B & 2B+2 & 4   & 4 & 2B+1 &  B+2   & B+2 & B+3 & B+3  & 4 & 4c\\
                3 & 2B & 2B+2 & 2B+4   & 6 & 2B+1 & 2B+2   & B+3 & 2B+3  & B+4 & B+5 & 6c \\
                4 & 2B & 2B+2 & 2B+4   & 2B+6  & 2B+1 & 2B+2 & 2B+3 & 2B+3 & 2B+4 & 2B+5 & 8c\\
            \end{block}
        \end{blockarray}.
    \end{align}
    The derivation of the LP for devising randomized algorithms is detailed in Appendix~\ref{subapp:basic-proof-for-randomized-policy}.

    Then, note that the row of \(p_{i,j}\) for \(i,j\) can be reconstructed by averaging the rows of \(p_{i,i}\) and \(p_{j,j}\), that is,
    \begin{align}\label{eq:homogeneous-randomized-policy-row-average}
        \text{row}_{i,j} = \frac{1}{2}(\text{row}_{i,i} + \text{row}_{j,j})
    \end{align}
    where \(\text{row}_{i,j}\) is the row of \(p_{i,j}\) in the above table.
    From this observation, we can see that any randomized asymmetric policy can be represented as a linear combination of the symmetric policies, which means that the optimal randomized policy \emph{can be} symmetric.

    Next, for the homogeneous \masr \emph{with group-buy option}, the above linear programming is modified as follows,
    \begin{align}
        \begin{blockarray}{cccccccccccc}
            N & p_{1,1} & p_{2,2} & p_{3,3} & p_{4,4} & p_{1,2} & p_{1,3}
            & p_{1,4} & p_{2,3}  & p_{2,4} & p_{3,4}
            & \min\{2N, G\}
            \\
            \begin{block}{c[cccc|cccccc]c}
                1 & G & 2 & 2   & 2 & B+1 & B+1  & B+1 & 2 & 2 & 2 & 2 c \\
                2 & G & G+2 & 4   & 4 & 2B+1 &  B+2   & B+2 & B+3 & B+3  & 4 & 4c\\
                3 & G & G+2 & G+4   & 6 & 2B+1 & 2B+2   & B+3 & 2B+3  & B+4 & B+5 & 6c \\
                4 & G & G+2 & G+4   & G+6  & 2B+1 & 2B+2 & 2B+3 & 2B+3 & 2B+4 & 2B+5 & 8c\\
            \end{block}
        \end{blockarray},
    \end{align}
    where the \(2B\) costs of the symmetric policies are replaced by the group-buy option \(G\).

    Then relation of~\eqref{eq:homogeneous-randomized-policy-row-average} becomes
    \begin{align}\label{eq:homogeneous-randomized-policy-row-average-revised}
        \text{row}'_{i,j} = \text{row}_{i,j} = \frac{1}{2}(\text{row}_{i,i} + \text{row}_{j,j}) \ge \frac{1}{2}(\text{row}'_{i,i} + \text{row}'_{j,j}),
    \end{align}
    where \(\text{row}'_{i,j}\) is the row of \(p_{i,j}\) in the above matrix. From this observation, we can see that for any randomized asymmetric policy, there is a randomized symmetric policy that dominates it.

    \textbf{Step 2: Induction.} Assume that the optimal randomized policy is symmetric for any number of \(m \le M\) agents, and we will show that it is also symmetric for \(M+1\) agents.
    The key idea is to split the \(M+1\) agents into two groups, one with \(M_1\) agents and the other with \(M-M_1\) agents. No matter how the group-buy option is set, the optimal policy for each group is symmetric by the induction hypothesis.
    Then, we can use the same technique as in Step 1 to show that the optimal randomized policy for the two groups of total \(M+1\) agents is also symmetric.

    Lastly, as the optimal randomized policy is symmetric, the specific distribution of the threshold day \(T\) can be derived from the optimal randomized policy for the single-agent ski-rental problem, which is given in Appendix~\ref{subapp:basic-proof-for-randomized-policy}.
\end{proof}


\section{Proofs for Heterogeneous Multi-Agent Ski-Rental: Deterministic Policy}
\label{app:heterogeneous-determinisitic-policy-proof}

We present the proofs for the optimal deterministic policies for all three competitive ratios in the following subsections.

\subsection{Symmetric Deterministic Policies are Better than Asymmetric Policies}\label{sec:symmetric-better-than-asymmetric-deterministic}
\label{sec:symmetric-better-than-asymmetric-deterministic-proof}

\begin{proof}[Proof of Lemma~\ref{lemma:symmetric-better-than-asymmetric-deterministic}]
    We first consider the group competitive ratios: the overall and state-dependent ones.
    Without loss of generality, we consider the state that all agents are active, i.e., \(\ell=0\).
    Given any policy \(\pi = (T(0), T(1), \dots, T(M-1))\) (purchase thresholds in ascending order), we split the sets as two parts for some \(k<M-1\), \(\mathcal{T}_1 \coloneqq \{T(0), T(1), \dots, T(k)\}\) and \(\mathcal{T}_2 \coloneqq \{T(k+1), \dots, T(M-1)\}\) such that \(T(k) \neq T(k + 1)\) and \(T(k+1) = \dots = T(M-1)\).
    Then, if thresholds in \(\mathcal{T}_1\) are reached, the agents buy the individual pass, and if the thresholds in \(\mathcal{T}_2\) are reached, the agents buy either the individual pass or the group pass together, whichever is cheaper.

    To maximize the competitive ratio, the adversary needs to set the active days of agents exactly the same as the thresholds in \(\mathcal{T}_1\) and \(\mathcal{T}_2\), i.e., \(N_m = T(m-1)\) for all \(m\in \{1, \dots, M\}\).
    Then, the overall competitive ratio of the policy is
    \begin{align}
         & \quad \,\frac{\sum_{m=1}^M (T(m-1) - 1) + \ell B + \min\{(M-\ell) B, G\}}{\min\left\{ \sum_{m=1}^M \min\{B, T(m-1)\}, G \right\}}
        =
        \frac{\sum_{m=1}^M (T(m-1) - 1) + \min\{MB, G + \ell B\}}{\min\left\{ \sum_{m=1}^M \min\{B, T(m-1)\}, G \right\}},
    \end{align}
    which is minimized when \(k = 0\) (i.e., all active agents should have the same threshold, belonging to \(\mathcal{T}_2\)), yielding a symmetric policy. As the above derivation only depends on the nominator (i.e., the total cost) instead of the offline benchmark in the denominator, same statement holds for the state-dependent competitive ratio as well.

    For the individual rational competitive ratio, whenever the group-buy option is cheaper than the individual pass (i.e., \((M-\ell) B > G\) for \(\ell\) inactive agents), the symmetric policy is better than the asymmetric policy, while when the group-buy option is more expensive than the individual pass, the problem reduces to the single-agent ski-rental problem for each of the agent, which is also symmetric.
\end{proof}

\subsection{Proof of Minimizing Overall Competitive Ratio: Theorem~\ref{thm:deterministic-heterogeneous-policy-overall}}
\label{sec:deterministic-heterogeneous-policy-overall-proof}
\label{subapp:proof-determinisitic-overall-competitive-ratio}

After events \(N_1 < T(0)\), \(N_2 < T(1)\), \(\ldots\), \(N_{\ell} < T(\ell-1)\) happened, one needs to decide the threshold \(T(\ell)\) for the remaining \(M-\ell\) agents (see Figure~\ref{fig:algorithmic-idea} for an illustration).

If \(N_m \ge T(\ell)\) for all agents \(m \in \{\ell + 1, \ell +2, \dots, M\}\), overall CR becomes
\begin{align}
    \frac{\sum_{n=1}^{\ell} N_n + (M-\ell)(T(\ell) - 1) + \min\{G, (M-\ell)B\}}{
        \min\left\{
        \sum_{n=1}^{\ell} N_n + \sum_{n=\ell+1}^M \min\{N_n, B\},  G
        \right\}
    }.
\end{align}
To maximize the CR from the perspective of adversary, one needs to set \(N_m = T(\ell)\) for all agents \(m \in \{\ell + 1, \ell +2, \dots, M\}\), which yields the CR as follows,
\begin{align}
    \frac{\sum_{n=1}^{\ell} N_n + (M-\ell)(T(\ell) - 1) + \min\{G, (M-\ell)B\}}{
        \min\left\{
        \sum_{n=1}^{\ell} N_n + (M-\ell) \min\{T(\ell), B\},  G
        \right\}
    }.
\end{align}
This CR is minimized when \(T(\ell) = \min\left\{
\frac{G - \sum_{n=1}^{\ell} N_n}{M - \ell}, B
\right\},\) where the CR becomes \begin{align}
    \ovcr(\pi\upbra{\text{OV}}\vert \ell)
     & = \frac{\sum_{n=1}^{\ell} N_n + (M-\ell) \left( \min\left\{
        \frac{G - \sum_{n=1}^{\ell} N_n}{M - \ell}, B
        \right\} - 1 \right) + \min\{G, (M-\ell)B\}}{
        \min\left\{
        \sum_{n=1}^{\ell} N_n + (M-\ell) \min\left\{
        \frac{G - \sum_{n=1}^{\ell} N_n}{M - \ell}, B
        \right\},  G
        \right\}
    }
    \\
     & =
    1 +
    \frac{\min\left\{
        G, (M-\ell) B
        \right\}
        - (M-\ell)
    }{
        \min\left\{
        G, \sum_{n=1}^{\ell} N_n + (M-\ell) B
        \right\}
    }
    \\
     & =
    \begin{cases}
        2 - \frac{M-\ell}{G}
         & \text{ if } G \le (M-\ell)B
        \\
        1 + \frac{(M-\ell)(B-1)}{G}
         & \text{ if }(M-\ell)B < G \le \sum_{n=1}^{\ell} N_n + (M-\ell) B
        \\
        1 + \frac{(M-\ell)(B-1)}{\sum_{n=1}^{\ell} N_n + (M-\ell)B}
         & \text{ if } G > \sum_{n=1}^{\ell} N_n + (M-\ell) B
    \end{cases}.
\end{align}
If \(N_{\ell+1} < T(\ell)\), then it triggers the state transition.
Following the induction, the \(T(\ell+1)\) for the remaining \(M-(\ell+1)\) agents can be analyzed as above as well. \qed

\subsection{Proof of Minimizing State-Dependent Competitive Ratio: Theorem~\ref{thm:deterministic-heterogeneous-policy-state-dependent}}
\label{subapp:proof-determinisitic-state-dependent-competitive-ratio}

After events \(N_1 < T(0)\), \(N_2 < T(1)\), \(\ldots\), \(N_{\ell} < T(\ell-1)\) happened, one needs to decide the threshold \(T(\ell)\) for the remaining \(M-\ell\) agents.
If \(N_m \ge T(\ell)\) for all agents \(m \in \{\ell + 1, \ell +2, \dots, M\}\), the state-dependent CR becomes
\begin{align}
    \frac{\sum_{n=1}^{\ell} N_n + (M-\ell)(T(\ell) - 1) + \min\{G, (M-\ell)B\}
    }{
        \sum_{n=1}^{\ell} N_n
        + \min\left\{
        \sum_{n=\ell+1}^M \min\{N_n, B\},  G
        \right\}
    }
\end{align}
To maximize the CR from the perspective of adversary, one needs to set \(N_m = T(\ell)\) for all \(m \in \{\ell + 1, \ell +2, \dots, M\}\), which yields the CR as follows,
\begin{align}
    \frac{\sum_{n=1}^{\ell} N_n + (M-\ell)(T(\ell) - 1) + \min\{G, (M-\ell)B\}
    }{
        \sum_{n=1}^{\ell} N_n
        + \min\left\{
        \sum_{n=\ell+1}^M \min\{T(\ell), B\},  G
        \right\}
    }
\end{align}
This CR is \emph{minimized} when \(T(\ell) = \min\left\{
\frac{G}{M - m}, B
\right\},\) where the CR becomes \begin{align}
    \sdcr_{m}(\pi\upbra{\text{SD}}\vert \ell)
     & =
    \frac{\sum_{n=1}^{\ell} N_n + (M-\ell)(T(\ell) - 1) + \min\{G, (M-\ell)B\}
    }{
        \sum_{n=1}^{\ell} N_n
        + \min\left\{
        \sum_{n=\ell+1}^M \min\{T(\ell), B\},  G
        \right\}
    }
    \\
     & = \frac{\sum_{n=1}^{\ell} N_n - (M-\ell) + 2\min\{G, (M-\ell)B\}
    }{
        \sum_{n=1}^{\ell} N_n
        + \min\{G, (M-\ell)B\}
    }
    \\
     & =
    1 + \frac{\min\{G, (M-\ell)B\} - (M-\ell)
    }{
        \sum_{n=1}^{\ell} N_n
        + \min\{G, (M-\ell)B\}
    }
    \\
     & =
    \begin{cases}
        1 +
        \frac{M-\ell - G}{\sum_{n=1}^{\ell} N_n + G}
         & \text{ if } G \le (M-\ell)B
        \\
        1 +
        \frac{(M-\ell)(B-1)}{\sum_{n=1}^{\ell} N_n + (M-\ell)B}
         & \text{ if } G > (M-\ell)B
    \end{cases}.
\end{align}

If \(N_{\ell+1} < T(\ell)\), then it triggers the state transition.
Following the induction, the \(T(\ell+1)\) for the remaining \(M-(\ell+1)\) agents can be analyzed as above as well. \qed

\subsection{Proof of Minimizing Individual Rational Competitive Ratio: Theorem~\ref{thm:deterministic-heterogeneous-policy-individual}}
\label{subapp:proof-determinisitic-individual-rational-competitive-ratio}

Recall the offline optimal policy for the individual rational is as follows,
\begin{align}\label{eq:indopt-re}
    \opt_{m}\individual(\mathcal{I})
    \coloneqq
    \begin{cases}
        N_m
         & \text{ if } N_m \leq \min\left\{ \frac{G}{M-\ell_*}, B \right\}
        \\
        \min\left\{\frac{G}{M-\ell_*}, B\right\}
         & \text{ if } N_m >  \min\left\{ \frac{G}{M-\ell_*}, B \right\}
    \end{cases}.
\end{align}
As illustrated in Section~\ref{subsec:deterministic-heterogeneous-policy-individual-rational}, following the threshold \(T\statedep(\ell)\), the agents with \(m\in\{1,\dots,\ell_*\}\) will rent for all active days and hence, enjoy the ideal \(1\)-competitive ratio.
For agents with \(m\in\{\ell_*+1,\dots,M\}\), they will rent for all active days until the threshold \(T\statedep(\ell_*)\) is reached with the competitive ratio \(2 - \frac{1}{T\statedep(\ell_*)}\).
Recall that for single-agent ski-rental, without knowledge of the active days \(D\), the optimal deterministic policy is also to rent until the buy cost \(B\) is reached, which is the same as the threshold \(T\statedep(\ell_*)\) in our case.
Therefore, following threshold \(T\statedep(\ell)\) is the optimal deterministic policy for the individual rational competitive ratio as well. \qed

\begin{algorithm}[htbp]
    \caption{Randomized policy framework for heterogeneous \masr}
    \label{alg:randomized-optimal-policy}
    \begin{algorithmic}[1]
        \Input number of agents \(M\), group cost \(G\), individual cost \(B\), probability density function \(f(\cdot)\)
        \Initial number of inactive agents \(\ell \gets 0\)
        \State Sample initial threshold \(T(0) \sim f(0)\)\label{line:randomized-initial-sample}
        \For{each active day \(t\) (with at least one active agent)}
        \If{\(t = T(\ell)\)}
        \State \textbf{Buy} \(\begin{cases}
            \text{Group pass}      & \text{if } T(\ell) < B
            \\
            \text{Individual pass} & \text{otherwise}
        \end{cases}\) \label{line:randomized-buy}
        \State \textbf{Terminate} \label{line:randomized-terminate}
        \Else
        \State \textbf{Rent} for all active agents \label{line:randomized-rent}
        \EndIf
        \State Reveal \(\mathcal{L}_t\): set of agents become inactive at the end of day \(t\) \label{line:randomized-begin-update}
        \If{\(\mathcal{L}_t \neq \emptyset\)}
        \State \(\ell \gets \ell + \abs{\mathcal{L}_t}\)
        \State \(N_m \gets t\) for all \(m \in \mathcal{L}_t\)
        \State Sample threshold \(T(\ell) \sim f(\{N_{n}\}_{n\le \ell})\) \label{line:randomized-sample}
        \EndIf\label{line:randomized-end-update}
        \EndFor
    \end{algorithmic}
\end{algorithm}

\section{Proofs for Heterogeneous Multi-Agent Ski-Rental: Randomized Policies}
\label{sec:appendix-heterogeneous-randomized}
\label{app:heterogeneous-randomized-policy-proof}

The detailed framework for the randomized policy is summarized in Algorithm~\ref{alg:randomized-optimal-policy}.

\subsection{Proof of Minimizing Competitive Ratio for Homogeneous \masr}\label{subapp:basic-proof-for-randomized-policy}

Before we present the proofs for general heterogeneous \masr, we first present a derivation (similar to that for the single-agent ski-rental problem) for the optimal randomized policy for the homogeneous \masr (where \(N_1 = N_2 = \dots = N_M = N\)) as a preliminary step.
This derivation will help us understand the key challenges for deriving optimal randomized policy for each of three types of competitive ratios in the following subsections.
For simplicity, we assume \(\frac{G}{M}\) is an integer, otherwise one can replace \(\frac{G}{M}\) with its ceiling \(\ceil{\frac{G}{M}}\) where the same proof and results hold.


Denote \(p_t\) for any \(t\in\{1,2,\dots\}\) as the probability of the picking day \(t\) as the threshold to buy the group pass.
Denote \(\ovcr(p_t, \ell = 0)\) as \(c\).
For each possible value of active days \(N \in \{1,2,\dots\}\), we have \begin{align}
    \sum_{t=1}^\infty p_t\cost(T = t, N)  \le c \cdot \ovopt(N\vert \ell = 0) = \min\{M\min\{N, B\}, G\} c.
\end{align}
By eliminating sub-dominated variables and inequalities as in standard ski-rental problem~\citep{mathieu_skirental},
we can derive the following group of inequalities---variables \(t\in \{1,2,\dots, \frac{G}{M}\}\) and inequalities \(N\in\{1,2,\dots, \frac{G}{M}\}\) remain---for the optimal randomized policy for the homogeneous \masr as follows,
\begin{align}
    \begin{blockarray}{cccccccc}
        N & p_{1} & p_{2} & p_{3} & \ldots & p_{\frac G M - 1} & p_{\frac G M}
        & c
        \\
        \begin{block}{c[cccccc]c}
            1 & G & M & M   & \ldots & M & M  &  M  \\
            2 & G & G+M & 2M   & \ldots & 2M &  2M    & 2M\\
            3 & G & G+M & G+2M   & \ldots & 3M & 3M    & 3M \\
            \vdots & \vdots & \vdots & \vdots   & \ddots & \vdots  & \vdots  & \vdots \\
            \frac{G}{M}-1  & G & G+M & G+2M & \ldots & 2G - 2M & G-M  & G-M\\
            \frac{G}{M} & G & G+M & G+2M   & \ldots  & 2G - 2M & 2G-M & G\\
        \end{block}
    \end{blockarray},
\end{align}
where we simplified the group of inequalities by their coefficients as entries in the matrix.
For example, the first row in the above matrix corresponds to the first inequality \(G\cdot p_1 + M \cdot p_2  + \dots + M \cdot p_{\frac{G}{M}} \leq M \cdot c\).
Then, we use the inequality in \(i^{\text{th}}\) row to subtract the inequality in \((i-1)^{\text{th}}\) for all \(i \in \{1,2,\dots, \frac{G}{M}-1\}\), yielding the following group of inequalities,
\begin{align}
    \begin{blockarray}{cccccccc}
        N & p_{1} & p_{2} & p_{3} & \ldots & p_{\frac G M - 1} & p_{\frac G M}
        & c
        \\
        \begin{block}{c[cccccc]c}
            1 & G & M & M   & \ldots & M & M & Mc \\
            2 & 0 & G & M   & \ldots & M &  M & Mc\\
            3 & 0 & 0 & G   & \ldots & M & M   & Mc \\
            \vdots & \vdots & \vdots & \vdots   & \ddots & \vdots  & \vdots  & \vdots \\
            \frac{G}{M}-1  & 0 & 0 &0 & \ldots & G & M & Mc\\
            \frac{G}{M} & 0 & 0 & 0   & \ldots  & 0 & G & Mc\\
        \end{block}
    \end{blockarray}.
\end{align}
Lastly, we use the \(i^{\text{th}}\) row to subtract the \((i-1)^{\text{th}}\) row for all \(i \in \{2,\dots, \frac{G}{M}\}\), yielding the following group of inequalities,
\begin{align}
    \begin{blockarray}{cccccccc}
        N & p_{1} & p_{2} & p_{3} & \ldots & p_{\frac G M - 1} & p_{\frac G M}
        & c
        \\
        \begin{block}{c[cccccc]c}
            1 & G & M-G & 0   & \ldots & 0 & 0  & 0 \\
            2 & 0 & G & M-G   & \ldots & 0 &  0  & 0\\
            3 & 0 & 0 & G   & \ldots & 0 & 0    & 0 \\
            \vdots & \vdots & \vdots & \vdots   & \ddots & \vdots  & \vdots  & \vdots \\
            \frac{G}{M}-1  & 0 & 0 &0 & \ldots & G & M-G  & 0\\
            \frac{G}{M} & 0 & 0 & 0   & \ldots  & 0 & G  & Mc\\
        \end{block}
    \end{blockarray}.
\end{align}
Solving the above inequalities and noticing that \(\sum_{t=1}^{\frac{G}{M}} p_t = 1\) yields the randomized policy in Theorem~\ref{thm:homogeneous-randomized-policy} as follows,
\begin{align}
    p_t=  \frac{M}{G}\left( 1 - \frac{M}{G} \right)^{\frac{G}{M} - t}, \forall t\in \left\{1,2,\dots, {\frac{G}{M}}\right\},
    \text{ and } c = \frac{1}{1 - \left( 1 - \frac{M}{G} \right)^{\frac{G}{M}}}.
\end{align}\qed

\subsection{Symmetric policies are better than asymmetric policies}\label{sec:symmetric-better-than-asymmetric-randomized}

\begin{proof}[Proof of Lemma~\ref{lemma:symmetric-better-than-asymmetric-randomized}]
    The proof of Lemma~\ref{lemma:symmetric-better-than-asymmetric-randomized} consists of two key steps via the application of the second principle of induction. This is illustrated in Figure~\ref{fig:randomized-policy-induction}.

    All diagonal arrows in Figure~\ref{fig:randomized-policy-induction} connects the states with the same number of active agents. For example, the diagonal arrows connecting \((1, T(0))\) to \((2, T(1))\), \((2, T(0))\) to \((3, T(1))\), and so on, all refer to the scenario with one active agent.
    Therefore, the optimal threshold for one of them, say \((1, T(0))\), can be used (with minor modification) for all of them.
    With this in mind, we can focus on the horizontal arrows in Figure~\ref{fig:randomized-policy-induction} to prove the lemma.

    \paragraph{Step 1: Base case for \(2\) agents} This step corresponds to prove the optimal policy of \((2, T(0))\), the red scenario in Figure~\ref{fig:randomized-policy-induction}, is symmetric.
    Following the LP formulation in Section~\ref{subapp:basic-proof-for-randomized-policy}, we can write the problem as follows (one LP split to two parts to fit with text width spacing),
    {\small
            \begin{align}
                \begin{blockarray}{cccccccc|c}
                    (N_1, N_2) & (1,1) & (2,2,2) & (2,2,3) & (2,2,4) & (3,3,3) & (3,3,4) & (4,4)
                    & \text{OPT}
                    \\
                    \begin{block}{c[ccccccc|]c}
                        (1, 1) & G & 2 & 2 & 2 & 2 & 2    & 2
                        & 2c
                        \\
                        (2, 2) & G & G+2 & G+2 & G+2 & 4 & 4 & 4
                        & 4c
                        \\
                        (3, 3) & G & G+2 & G+2 & G+2 & G+4 & G+4 & 6
                        & 6c
                        \\
                        (4, 4) & G & G+2 & G+2 & G+2 & G+4 & G+4 & G+6
                        & 6c
                        \\
                    \end{block}
                    \begin{block}{c[ccccccc|]c}
                        (1, 2) & G & B+2 & 3 & 3 & 3 & 3   & 3  & 3c
                        \\
                        (1, 3) & G & B+2 & B+3 & 4 & B+3 & 4   & 4      & 4c
                        \\
                        (1, 4) & G & B+2 & B+3 & B+4 & B+3 & B+4   & 5
                        & 5c
                        \\
                        (2, 3) & G & G+2 & B+3 & G+2 & B+3 & 5   & 5      & 5c
                        \\
                        (2, 4) & G & G+2 & G+2 & G+2 & G+2 & B+5 & B+5
                        & 6c
                        \\
                        (3, 4) & G & G+2 & G+2 & G+2 & G+4 & G+4 & B+6
                        & 6c
                        \\
                    \end{block}
                \end{blockarray}
            \end{align}
            \begin{align}
                \begin{blockarray}{c|ccccccc}
                    (N_1, N_2) & (1,2) & (1,3) & (1,4) & (2,3) & (2,4) & (3,4)
                    & \text{OPT}
                    \\
                    \begin{block}{c[|cccccc]c}
                        (1, 1)
                        & B+1 & B+1 & B+1 & 2 & 2    & 2 & 2c
                        \\
                        (2, 2)
                        & 2B+1 & B+2 & B+2 & B+3 & B+3 & 4 & 4c
                        \\
                        (3, 3)
                        & 2B+1 & 2B+2 & B+3 & 2B+3 & B+4 & B+5    & 6c
                        \\
                        (4, 4)
                        & 2B+1 & 2B+2 & 2B+3 & 2B+3 & 2B+4 & 2B+5    & 6c
                        \\
                    \end{block}
                    \begin{block}{c[|cccccc]c}
                        (1, 2) & 2B+1 / B+1 & B+2 / B+1 & B+2 / B+1 & 3 / B+2 & 3/B+2    & 3/3 & 3c
                        \\
                        (1, 3) &  2B+1 / B+1 & 2B+2 / B+1 & B+3 / B+1 & B+3 / B+2 & 4 / B+2 & 4 / B+3     & 4c
                        \\
                        (1, 4)
                        & 2B+1 / B+1 & 2B+2 / B+1 & 2B+3 / B+1 & B+3 / B+2 & B+4 / B+2 & B+4 / B+3     & 5c
                        \\
                        (2, 3) & 2B+1 / 2B+1 & 2B+2 / B+2 & B+3 / B+2 & 2B+3 / B+3 & B+4 / B+3 & 5 / B+4     & 5c
                        \\
                        (2, 4)
                        & 2B+1 / 2B+1 & 2B+2 / B+2 & 2B+3 / B+2 & 2B+3 / B+3 & 2B+4 / B+3 & B+5 / B+4     & 6c
                        \\
                        (3, 4)
                        & 2B+1 / 2B+1 & 2B+2 / 2B+2 & 2B+3 / B+2 & 2B+3 / 2B+3 & 2B+4 / B+4 & 2B+5 / B+5     & 6c
                        \\
                    \end{block}
                \end{blockarray}
            \end{align}
        }
    where the column index \((i,i,j)\) refers to the policy that both agents buy the group pass in the same time slot \(i\), and if one of the agent becomes inactive before the time slot \(i\), then the other agent buys the individual pass at time slot \(j\), and the column index \((i,j)\) refers to the policy that agent \(1\) buys the individual pass at time slot \(i\) and agent \(2\) buys the individual pass at time slot \(j\).
    The row index refers to the active days of the two agents.

    Then, we can eliminate the dominated variables and inequalities as in standard ski-rental problem~\citep{mathieu_skirental}.
    The detail elimination consists of the following steps,
    \begin{itemize}
        \item Use the row of \((2,2)\) to eliminate the row of \((1,3)\),
        \item Use the row of \((4,4)\) to eliminate the rows of \((2,4)\) and \((3,4)\),
        \item Used the first parts (before slash) of columns of \((1,2), (1,3), (1,4) (3,4)\) to eliminate the second parts (after slash) of these columns respectively,
        \item Use the linear combination of columns \((1,1)\) and \((4,4)\) to eliminate the full columns of \((1,2), (1,3), (1,4)\), \begin{itemize}
                  \item with condition \(G\le 2B - 3\),
              \end{itemize}
        \item Use the linear combination of columns \((2,2,4)\) and \((4,4)\) to eliminate the full columns of \((2,3), (2,4)\),
        \item Use column \((4,4)\) to eliminate the full column of \((3,4)\).
    \end{itemize}
    With the above eliminations, we remove all columns of the asymmetric policies (after the vertical line) and the remaining columns are all symmetric policies, which can be expressed as follows,\begin{align}
        \begin{blockarray}{ccccccccc}
            (N_1, N_2) & (1,1) & (2,2,2) & (2,2,3) & (2,2,4) & (3,3,3) & (3,3,4) & (4,4)
            & \text{OPT}
            \\
            \begin{block}{c[ccccccc]c}
                (1, 1) & G & 2 & 2 & 2 & 2 & 2    & 2
                & 2c
                \\
                (2, 2) & G & G+2 & G+2 & G+2 & 4 & 4 & 4
                & 4c
                \\
                (3, 3) & G & G+2 & G+2 & G+2 & G+4 & G+4 & 6
                & 6c
                \\
                (4, 4) & G & G+2 & G+2 & G+2 & G+4 & G+4 & G+6
                & 6c
                \\
            \end{block}
            \begin{block}{c[ccccccc]c}
                (1, 2) & G & B+2 & 3 & 3 & 3 & 3   & 3  & 3c
                \\
                (1, 4) & G & B+2 & B+3 & B+4 & B+3 & B+4   & 5
                & 5c
                \\
                (2, 3) & G & G+2 & B+3 & G+2 & B+3 & 5   & 5   & 5c
                \\
            \end{block}
        \end{blockarray}
    \end{align}

    Notice that for each row of \((1,1), (2,2), (3,3), (4,4)\), the entries in columns of \((2,2,2), (2,2,3), (2,2,4)\) are the sames, and same statement holds for the columns of \((3,3,3), (3,3,4)\).
    Hence, we can solve the LP formulation by only considering the rows of \((1,1), (2,2), (3,3), (4,4)\) for the symmetric policies to get \(p_{1,1},p_{2,2}, p_{3,3}, p_{4,4}\) such that \(p_{1,1} + p_{2,2} + p_{3,3} + p_{4,4}=1\).
    Then, for \(p_{2,2}\) and \(p_{3,3}\), we can use the rows of \((1,2), (1,4)\) and \((2,3)\) to further get the probabilities \(p_{(2,2,2)}, p_{(2,2,3)}, p_{(2,2,4)}, p_{(3,3,3)}, p_{(3,3,4)}\) such that \(p_{(2,2,2)}+ p_{(2,2,3)}+ p_{(2,2,4)}=p_{2,2}\) and \(p_{(3,3,3)} + p_{(3,3,4)} = p_{3,3}\).

    \paragraph{Step 2: Induction for \(M\) agents}
    This corresponds to the horizontal arrows in Figure~\ref{fig:randomized-policy-induction}. This part can be proved by the same ``group split'' argument as the induction proof for the homogeneous \masr for Theorem~\ref{thm:homogeneous-randomized-policy}.
\end{proof}

\subsection{Proof of Minimizing Overall Competitive Ratio: Theorem~\ref{thm:randomized-heterogeneous-policy-overall}}\label{subapp:proof-randomized-heterogeneous-policy-overall}
\label{subsec:randomized-heterogeneous-policy-overall-proof}
\label{subapp:proof-randomized-overall-competitive-ratio}

After events \(N_1 < T(0)\), \(N_2 < T(1)\), \(\ldots\), \(N_{\ell} < T(\ell-1)\) happened, one needs to decide the randomized policy (probability distribution) to sample the next threshold \(T(\ell)\) for the remaining \(M-\ell\) agents (see Figure~\ref{fig:algorithmic-idea} for an illustration).

Denote \(p_t\) for any \(t\in\{N_{\ell}+1,N_{\ell}+2,\dots\}\) as the probability of picking day \(t\) as the threshold to buy the group pass.
Denote \(\ovcr(p_t\vert \ell)\) as \(c\) for simplicity here.
For each possible value of active days \(N \in \{N_{\ell}+1,N_{\ell}+2,\dots\}\), we have \begin{align}
     & \underbrace{\sum_{m=1}^{\ell} N_m + (M-\ell) N_{\ell}}_{\eqqcolon D(\ell)}  + \sum_{t=N_{\ell}+1}^\infty  p_t \sum_{m=\ell+1}^M (\cost_m(T = t, N) - N_\ell)
    \\
     & \qquad\qquad\qquad\qquad \le c \cdot \ovopt(N) = c\cdot \left( \min\left\{(M-\ell) \min\{N, B\} + \sum_{m=1}^{\ell} N_m, G\right\} \right).
\end{align}
With eliminating sub-dominated variables and inequalities as in standard ski-rental problem,
we can derive the following group of inequalities---only variables \(t\in \{N_{\ell + 1}, N_{\ell + 2}, \dots, T\overall(\ell)\}\) and inequalities \(N\in \{N_{\ell + 1}, N_{\ell + 2}, \dots, T\overall(\ell)\}\) remain and recall that \(T\overall(\ell) = \min\left\{ \frac{G - \sum_{m=1}^\ell N_m}{M-\ell}, B \right\}\) (assumed to be an integer without loss of generality)---for the optimal randomized policy for minimizing overall competitive ratio in \masr as follows,
{\scriptsize
        \begin{align}
            \begin{blockarray}{cccccccc}
                N & p_{N_{\ell} + 1} & p_{N_{\ell} + 2} & p_{N_{\ell} + 3} & \ldots & p_{T\overall(\ell)-1} & p_{T\overall(\ell)}
                & c
                \\
                \begin{block}{c[cccccc]c}
                    N_{\ell} + 1 & H(\ell) & M-\ell & M-\ell   & \ldots & M-\ell & M-\ell  &  (N_\ell + 1) (M-\ell)  \\
                    N_{\ell} + 2 & H(\ell) & H(\ell)+M-\ell & 2(M-\ell)   & \ldots & 2(M-\ell) &  2(M-\ell)    & (N_\ell + 2)(M-\ell)\\
                    N_{\ell} + 3 & H(\ell) & H(\ell)+M-\ell & H(\ell)+2(M-\ell)   & \ldots & 3(M-\ell) & 3(M-\ell)    & (N_\ell + 3)(M-\ell) \\
                    \vdots & \vdots & \vdots & \vdots   & \ddots & \vdots  & \vdots  & \vdots \\
                    T\overall(\ell)-1  & H(\ell) & H(\ell)+M-\ell & H(\ell)+2(M-\ell) & \ldots & 2H(\ell) - 2(M-\ell) & H(\ell)-(M-\ell)  & G-(M-\ell)\\
                    T\overall(\ell) & H(\ell) & H(\ell)+M-\ell & H(\ell)+2(M-\ell)   & \ldots  & 2H(\ell) - 2(M-\ell) & 2H(\ell)-(M-\ell) & G\\
                \end{block}
            \end{blockarray},
        \end{align}
    }
where \(H(\ell) \coloneqq (M-\ell)T\overall(\ell)\) is the rent cost of the remaining \(M-\ell\) agents after day \(N_{\ell}\),
and the entries in the above matrix, \emph{plus} \(D(\ell)\)---the total cost on and before day \(N_{\ell}\), correspond to the coefficients of the inequalities.

Then, we use the inequality in \(i^{\text{th}}\) row to subtract the inequality in \((i-1)^{\text{th}}\) for all \(i \in \{N_{\ell} +1, N_{\ell} + 2,\dots, T\overall(\ell)-1\}\), yielding the following group of inequalities,
{\scriptsize
        \begin{align}
            \begin{blockarray}{cccccccc}
                N & p_{N_{\ell} + 1} & p_{N_{\ell} + 2} & p_{N_{\ell} + 3} & \ldots & p_{T\overall(\ell)-1} & p_{T\overall(\ell)}
                & c
                \\
                \begin{block}{c[cccccc]c}
                    N_{\ell} + 1 & D(\ell) + H(\ell) & D(\ell) + M-\ell & D(\ell) + M-\ell   & \ldots & D(\ell) + M-\ell & D(\ell) + M-\ell  &  (N_\ell + 1)(M-\ell)  \\
                    N_{\ell} + 2 & 0 & H(\ell) & M-\ell   & \ldots & M-\ell &  M-\ell    & M-\ell\\
                    N_{\ell} + 3 & 0 & 0 & H(\ell)   & \ldots & M-\ell & M-\ell    & M-\ell \\
                    \vdots & \vdots & \vdots & \vdots   & \ddots & \vdots  & \vdots  & \vdots \\
                    T\overall(\ell)-1  & 0 & 0 & 0 & \ldots & H(\ell) & M-\ell  & M-\ell\\
                    T\overall(\ell) & 0 & 0 & 0   & \ldots  & 0 & H(\ell) & M-\ell\\
                \end{block}
            \end{blockarray},
        \end{align}
    }
where the entries in the above matrix \emph{directly} correspond to the coefficients of the inequalities. We note that after the subtraction, only the entries in the first row contain the \(D(\ell)\) term.

Lastly, we use the \(i^{\text{th}}\) row to subtract the \((i-1)^{\text{th}}\) row for all \(i \in \{N_{\ell} + 2,\dots, T\overall(\ell)\}\), yielding the following group of inequalities,
{\scriptsize
        \begin{align}
            \begin{blockarray}{cccccccc}
                N & p_{N_{\ell} + 1} & p_{N_{\ell} + 2} & p_{N_{\ell} + 3} & \ldots & p_{T\overall(\ell)-1} & p_{T\overall(\ell)}
                & c
                \\
                \begin{block}{c[cccccc]c}
                    N_{\ell} + 1 & D(\ell) + H(\ell) & D(\ell) + M-\ell - H(\ell) & D(\ell)    & \ldots & D(\ell)  & D(\ell)   &  N_\ell (M-\ell)  \\
                    N_{\ell} + 2 & 0 & H(\ell) & M-\ell - H(\ell)   & \ldots & 0 &  0    & 0\\
                    N_{\ell} + 3 & 0 & 0 & H(\ell)   & \ldots & M-\ell - H(\ell) & 0    & 0 \\
                    \vdots & \vdots & \vdots & \vdots   & \ddots & \vdots  & \vdots  & \vdots \\
                    T\overall(\ell)-1  & 0 & 0 & 0 & \ldots & H(\ell) & M-\ell - H(\ell)  & 0\\
                    T\overall(\ell) & 0 & 0 & 0   & \ldots  & 0 & H(\ell) & M-\ell\\
                \end{block}
            \end{blockarray},
        \end{align}
    }
Solving the above inequalities and noticing that \(\sum_{t=N_{\ell}+1}^{T\overall(\ell)} p_t = 1\) yield the randomized policy in Theorem~\ref{thm:randomized-heterogeneous-policy-overall}. \qed

\subsection{Proof of Minimizing State-Dependent Competitive Ratio: Theorem~\ref{thm:randomized-heterogeneous-policy-state-dependent}}\label{subapp:proof-randomized-heterogeneous-policy-state-dependent}
\label{subapp:proof-randomized-state-dependent-competitive-ratio}

The proof of minimizing state-dependent competitive ratio is similar to that of minimizing overall competitive ratio in Appendix~\ref{subapp:proof-randomized-heterogeneous-policy-overall}, where the corresponding inequalities are as follows,
\begin{align}
     & \sum_{m=1}^{\ell} N_m + (M-\ell) N_{\ell}  + \sum_{t=N_{\ell}+1}^\infty  p_t \sum_{m=\ell+1}^M (\cost_m(T = t, N) - N_\ell)
    \\
     & \qquad\qquad\qquad\qquad \le c \cdot \sdopt(N\vert \ell) = c\cdot \left( \sum_{m=1}^{\ell} N_m + \min\left\{(M-\ell) \min\{N, B\}, G\right\} \right),
\end{align}
and the threshold \(T\overall(\ell)\) is replaced with \(T\statedep(\ell) = \min\left\{ \frac{G}{M-\ell}, B \right\}\).
The rest of the proof follows the same steps as in Appendix~\ref{subapp:proof-randomized-heterogeneous-policy-overall} and yields the randomized policy in Theorem~\ref{thm:randomized-heterogeneous-policy-state-dependent}. \qed

\subsection{Proof of Minimizing Individual Rational Competitive Ratio: Theorem~\ref{thm:randomized-heterogeneous-policy-individual}}\label{subapp:proof-randomized-heterogeneous-policy-individual}
\label{subapp:proof-randomized-individual-rational-competitive-ratio}

After events \(N_1 < T(0)\), \(N_2 < T(1)\), \(\ldots\), \(N_{\ell} < T(\ell-1)\) happened, one needs to decide the randomized policy (probability distribution) to sample the next threshold \(T(\ell)\) for the remaining \(M-\ell\) agents (see Figure~\ref{fig:algorithmic-idea} for an illustration).
Denote \(p_t\) for any \(t\in\{N_{\ell}+1,N_{\ell}+2,\dots\}\) as the probability of the picking day \(t\) as the threshold to buy the group pass.
Denote \(\ratio\individual_{\ell}(p_t\vert \ell)\) as \(c\).
For each possible value of active days \(N \in \{N_{\ell}+1,N_{\ell}+2,\dots\}\), for any \emph{single} agent \(m > \ell\), we have \begin{align}
     & N_{\ell}  + \sum_{t=N_{\ell}+1}^\infty  p_t (\cost_m(T = t, N) - N_\ell)
    \le c \cdot \opt\individual_{m} (N) = c\cdot \left( \min\left\{\min\{N, B\}, \frac{G}{M-\ell}\right\} \right).
\end{align}

With eliminating sub-dominated variables and inequalities as in standard ski-rental problem,
we can derive the following group of inequalities---variables \(t\in \{N_{\ell + 1}, N_{\ell + 2}, \dots, T\statedep(\ell)\}\) and inequalities \(N\in \{N_{\ell + 1}, N_{\ell + 2}, \dots, T\statedep(\ell)\}\) remain and recall that \(T\statedep(\ell) = \min\left\{ \frac{G}{M-\ell}, B \right\}\)---for the optimal randomized policy of the single agent \(m\) for its minimizing individual rational competitive ratio in \masr as follows,
{\scriptsize
        \begin{align}
            \begin{blockarray}{cccccccc}
                N & p_{N_{\ell} + 1} & p_{N_{\ell} + 2} & p_{N_{\ell} + 3} & \ldots & p_{T\statedep(\ell)-1} & p_{T\statedep(\ell)}
                & c
                \\
                \begin{block}{c[cccccc]c}
                    N_{\ell} + 1 & T\statedep(\ell) & 1 & 1   & \ldots & 1 & 1  &  N_\ell + 1  \\
                    N_{\ell} + 2 & T\statedep(\ell) & T\statedep(\ell)+1 & 2   & \ldots & 2 &  2    & N_\ell + 2\\
                    N_{\ell} + 3 & T\statedep(\ell) & T\statedep(\ell)+1 & T\statedep(\ell)+2   & \ldots & 3 & 3    & N_\ell + 3 \\
                    \vdots & \vdots & \vdots & \vdots   & \ddots & \vdots  & \vdots  & \vdots \\
                    T\statedep(\ell)-1  & T\statedep(\ell) & T\statedep(\ell)+1 & T\statedep(\ell)+2 & \ldots & 2T\statedep(\ell) - 2 & T\statedep(\ell)-1  & T\statedep(\ell)-1\\
                    T\statedep(\ell) & T\statedep(\ell) & T\statedep(\ell)+1 & T\statedep(\ell)+2   & \ldots  & 2T\statedep(\ell) - 2 & 2T\statedep(\ell)-1 & T\statedep(\ell)\\
                \end{block}
            \end{blockarray},
        \end{align}
    }
where the entries in the above matrix, \emph{plus} \(N_\ell\)---the cost of agent \(m\) spent on and before day \(N_{\ell}\), correspond to the coefficients of the inequalities.

Then, we use the inequality in \(i^{\text{th}}\) row to subtract the inequality in \((i-1)^{\text{th}}\) for all \(i \in \{N_{\ell} +1, N_{\ell} + 2,\dots, T\statedep(\ell)-1\}\), yielding the following group of inequalities,
{\footnotesize
        \begin{align}
            \begin{blockarray}{cccccccc}
                N & p_{N_{\ell} + 1} & p_{N_{\ell} + 2} & p_{N_{\ell} + 3} & \ldots & p_{T\statedep(\ell)-1} & p_{T\statedep(\ell)}
                & c
                \\
                \begin{block}{c[cccccc]c}
                    N_{\ell} + 1 & N_\ell + T\statedep(\ell) & N_\ell + 1 & N_\ell + 1   & \ldots & N_\ell + 1 & N_\ell + 1  &  N_\ell + 1  \\
                    N_{\ell} + 2 & 0 & T\statedep(\ell) & 1   & \ldots & 1 &  1    & 1\\
                    N_{\ell} + 3 & 0 & 0 & T\statedep(\ell)  & \ldots & 1 & 1    & 1 \\
                    \vdots & \vdots & \vdots & \vdots   & \ddots & \vdots  & \vdots  & \vdots \\
                    T\statedep(\ell)-1  & 0 & 0 & 0 & \ldots & T\statedep(\ell) & 1  & 1\\
                    T\statedep(\ell) & 0 & 0 & 0   & \ldots  & 0 & T\statedep(\ell) & 1\\
                \end{block}
            \end{blockarray},
        \end{align}
    }
where the entries in the above matrix \emph{directly} correspond to the coefficients of the inequalities. After the subtraction, only the entries in the first row contain the \(N_\ell\) term.

Lastly, we use the \(i^{\text{th}}\) row to subtract the \((i-1)^{\text{th}}\) row for all \(i \in \{N_{\ell} + 2,\dots, T\statedep(\ell)\}\), yielding the following group of inequalities,
{\footnotesize
        \begin{align}
            \begin{blockarray}{cccccccc}
                N & p_{N_{\ell} + 1} & p_{N_{\ell} + 2} & p_{N_{\ell} + 3} & \ldots & p_{T\statedep(\ell)-1} & p_{T\statedep(\ell)}
                & c
                \\
                \begin{block}{c[cccccc]c}
                    N_{\ell} + 1 & N_\ell + T\statedep(\ell) & N_\ell + 1 - T\statedep(\ell)  & N_\ell    & \ldots & N_\ell  & N_\ell   &  N_\ell   \\
                    N_{\ell} + 2 & 0 & T\statedep(\ell) & 1 - T\statedep(\ell)   & \ldots & 0 &  0    & 0\\
                    N_{\ell} + 3 & 0 & 0 & T\statedep(\ell)  & \ldots & 0 & 0    & 0 \\
                    \vdots & \vdots & \vdots & \vdots   & \ddots & \vdots  & \vdots  & \vdots \\
                    T\statedep(\ell)-1  & 0 & 0 & 0 & \ldots & T\statedep(\ell) & 1- T\statedep(\ell)  & 0\\
                    T\statedep(\ell) & 0 & 0 & 0   & \ldots  & 0 & T\statedep(\ell) & 1\\
                \end{block}
            \end{blockarray},
        \end{align}
    }
Solving the above inequalities and noticing that \(\sum_{t=N_{\ell}+1}^{T\statedep(\ell)} p_t = 1\) yield the randomized policy in Theorem~\ref{thm:randomized-heterogeneous-policy-individual-rational}. \qed

\section{Competitive Ratio Lower Bounds for Heterogeneous \masr Problems}
\label{app:lower-bound}

\subsection{Lower Bounds for Deterministic Policies}\label{app:deterministic-lower-bound}

With the claim that the optimal deterministic policy is symmetric in Lemma~\ref{lemma:symmetric-better-than-asymmetric-deterministic}, we only needs to prove the lower bound for the class of the symmetric deterministic policies.
For any symmetric policy (i.e., same thresholds \(T\) for all active agents), the optimal choice of the adversary is to set the active days of agents the same.
Specifically,
the best option of the adversary (for maximizing the competitive ratio) is to set the active days of agents the same as for the symmetric policy, i.e., \(N_m = T\) for every active agent \(m\in \{1, \dots, M\}\), where \(T\) is the threshold of the symmetric policy.
Any other choices of the adversary will lead to a smaller competitive ratio, contradicting that the adversary is trying to maximize the competitive ratio.

Therefore, for the overall competitive ratio, given the state of \(\ell\) inactive agents, the lower bound is exactly the same as that presented in~\eqref{eq:deterministic-overall-competitive-ratio}, highlighting that optimality of state-aware algorithm design. Similarly, the lower bound for the state-dependent and individual rational competitive ratios are the same as those presented in~\eqref{eq:deterministic-state-dependent-competitive-ratio} and~\eqref{eq:deterministic-individual-rational-competitive-ratio}, respectively.

\subsection{Lower Bounds for Randomized Policies: Proof for Lemma~\ref{lemma:randomized-competitive-ratio-lower-bound}}\label{app:randomized-lower-bound}\label{subsec:randomized-policy-lower-bound-proof}
\label{sec:randomized-competitive-ratio-lower-bound-proof}

Without loss of generality, we prove a competitive ratio lower bound for the randomized policies for the case that all agents are active (i.e., state \(\ell\) is zero), and with the asymptotic nature of the lower bound, one can easily extend the proof to other states.

\paragraph{Step 1: use Yao's priniciple to transfer the task }
\begin{align}
    \min_{\text{randomized policy } f(\cdot)} \,\, \max_{\text{instance }\mathcal{I}=(N_1,\dots, N_M)} \ratio
     & = \min_{f} \max_{\mathcal{I}} \frac{\mathbb{E}_{\pi\sim f}[\cost(\pi, \mathcal{I})]}{\opt(\mathcal{I})}
    \\
     & \ge \max_{\mathcal I \sim \mathcal{D}} \min_{\pi} \frac{\mathbb{E}_{\mathcal I \sim \mathcal{D}}[\cost(\pi, \mathcal{I})]}{\mathbb{E}_{\mathcal I \sim \mathcal{D}}[\opt(\mathcal{I})]},
\end{align}
where \(\mathcal{D}\) is a distribution over all possible instances \(\mathcal{I}=(N_1,\dots, N_M)\), and the inequality follows from Yao's principle~\cite{yao1977probabilistic}.
In the following, we will construct a distribution \(\mathcal{D}\) over all possible instances, such that the competitive ratio is lower bounded.

\paragraph{Step 2: find a ``hard'' distribution over all possible instances.}
As the adversary already knows that the final optimal randomized policy is symmetric (proved in Appendix~\ref{sec:symmetric-better-than-asymmetric-randomized}), its best strategy is to set the active days of agents the same, i.e., \(N_m = n\) for all \(m\in \{1, \dots, M\}\)---the same argument as in the deterministic case.
Then, we consider the ``hard'' distribution over instances as follows,
\begin{align}
    g(n) = \frac{M}{G} e^{-\frac{nM}{G}} \quad \text{for } n\ge 0.
\end{align}
For simplicity, we normalize over the cost \(\frac{G}{M}\) as unit \(1\), which simplifies the distribution to
\begin{align}
    g(n) = e^{-n}.
\end{align}
Below, we use \(n\sim g(\cdot)\) to denote the random instance with active days \(N_m = n\) for all \(m\in \{1, \dots, M\}\), sampled from the distribution \(g(\cdot)\), which corresponds to \(\mathcal{I}\sim \mathcal{D}\) in the previous step.

\paragraph{Step 3: caclulate the competitive ratio under the ``hard'' distribution.}
We first calculate the optimal cost \(\opt\) under the instance \(N_m = n\) for all \(m\in \{1, \dots, M\}\):
\begin{align}
    \mathbb{E}_{n\sim g(\cdot)}[\opt(\mathcal{I}=(n, \dots, n))]
     & = \int_{n=0}^1 Mn e^{-n} dn + M\int_{n=1}^\infty e^{-n} dn
    \\
     & = M( 1 - e^{-1}),
\end{align}
where the first term corresponds to the cost of renting for all active days, while the second term corresponds to the cost of group buying for all active agents.

Then, we calculate the expected cost of the randomized policy \(\pi\) with buying threshold \(T=t\) under the instance \(N_m = n\) for all \(m\in \{1, \dots, M\}\):
\begin{align}
    \mathbb{E}_{n\sim g(\cdot)}[\cost(T=t, \mathcal{I}=(n, \dots, n))]
     & = \int_{n=0}^t M n e^{-n} dn + (Mx+M)\int_{n=t}^\infty e^{-n} dn
    \\
     & = M( 1 - e^{-t}(t+1)) + M (t+1) e^{-t} = M.
\end{align}

Putting everything together, we have
\begin{align}
    \ratio
     & \ge \frac{\mathbb{E}_{n\sim g(\cdot)}[\cost(T=t, \mathcal{I}=(n, \dots, n))]}{\mathbb{E}_{n\sim g(\cdot)}[\opt]}
    \\
     & = \frac{M}{M(1 - e^{-1})} = \frac{e}{e - 1} \approx 1.58.
\end{align}


\section{More Discussions on Figure~\ref{fig:deterministic-CR-trend}}
\label{app:deterministic-CR-trend}


In this section, we provide more discussions on the numerical results in Figure~\ref{fig:deterministic-CR-trend}.
The detailed numerical results (with three decimal digits) are summarized in Table~\ref{tab:threshold-M=10}.

We derive the overall competitive ratio of the deterministic policy \(\pi_*\statedep\) as follows,
\begin{align}
        \ovcr(\pi_*\upbra{\text{SD}}\vert \{N_n\}_{n\le \ell})
         & =
        \begin{cases}
                2 + \frac{\sum_{n=1}^\ell N_n - (M - \ell)}{G}
                 & \text{ if } G \le (M-\ell)B
                \\
                \frac{\sum_{n=1}^\ell N_n + (M-\ell)(2B-1)}{G}
                 & \text{ if }(M-\ell)B < G \le \sum_{n=1}^{\ell} N_n + (M-\ell) B
                \\
                1 + \frac{(M-\ell)(B-1)}{\sum_{n=1}^{\ell} N_n + (M-\ell)B}
                 & \text{ if } G > \sum_{n=1}^{\ell} N_n + (M-\ell) B
        \end{cases},
\end{align}
where the third case have the same ratio as the second case of the state-dependent \(\ratio\) in~\eqref{eq:deterministic-state-dependent-competitive-ratio}, as well as the third case of \(\ovcr(\pi_*\upbra{\text{OV}}\vert \{N_n\}_{n\le \ell})\) in~\eqref{eq:deterministic-overall-competitive-ratio}.
This is because when \(G\) is very large, (1) both overall and state-dependent optimal offline policies are the same (i.e., either rent or buy the individual pass), and (2) the thresholds \(T(\ell)\) for both overall and state-dependent policies are the same, i.e., \(T\overall(\ell) = T\statedep(\ell) =  B\).
We also calculate the state-dependent competitive ratio of the deterministic policy \(\pi_*\overall\), that is, \(\sdcr(\pi_*\overall\vert \ell)\), which turns out to be the same as the overall competitive ratio, that is, \(\sdcr(\pi_*\overall\vert \ell) = \ovcr(\pi_*\overall\vert \ell)\) for all \(\ell\) (see the two \text{red}{red} lines in both subfigures of Figures~\ref{subfig:deterministic-CR-trend-overall} and~\ref{subfig:deterministic-CR-trend-state-dependent}).
This counterintuitive coincidence emerges from the fact that the two competitive ratios are worst-case ones defined in~\eqref{eq:state-dependent-cr} and~\eqref{eq:overall-cr},
and their coincident ratios correspond to different worst-case instances for the remaining active agents.

\begin{table}[H]
        \centering
        \caption{Details for Figure~\ref{fig:deterministic-CR-trend}}\label{tab:threshold-M=10}
        \resizebox{\textwidth}{!}{
                \begin{tabular}{c|cccccccccc}
                        \hline
                        \# inactive agents \(\ell\)
                         & 0
                         & 1
                         & 2
                         & 3
                         & 4
                         & 5
                         & 6
                         & 7
                         & 8
                         & 9
                        \\
                        \hline
                        \(\sdcr(\pi\statedep_*\vert \ell)\)
                         & 1.833
                         & {1.836}
                         & 1.825
                         & 1.803
                         & 1.771
                         & 1.692
                         & 1.590
                         & 1.466
                         & 1.321
                         & 1.164
                        \\
                        \(\ovcr(\pi\statedep_*\vert \ell)\)
                         & 1.833
                         & 1.867
                         & 1.917
                         & 1.983
                         & {2.067}
                         & 1.833
                         & 1.617
                         & 1.466
                         & 1.321
                         & 1.164
                        \\
                        \(\sdcr(\pi\overall_*\vert \ell)\)
                         & 1.833
                         & 1.850
                         & 1.867
                         & 1.883
                         & {1.900}
                         & 1.750
                         & 1.600
                         & 1.466
                         & 1.321
                         & 1.164
                        \\
                        \(\ovcr(\pi\overall_*\vert \ell)\)
                         & 1.833
                         & 1.850
                         & 1.867
                         & 1.883
                         & {1.900}
                         & 1.750
                         & 1.600
                         & 1.466
                         & 1.321
                         & 1.164   \\
                        \hline
                        $\sdcr(f_*\statedep\vert\ell)$
                         & 1.504
                         & 1.518
                         & {1.528}
                         & 1.528
                         & 1.513
                         & 1.488
                         & 1.445
                         & 1.382
                         & 1.290
                         & 1.163
                        \\
                        $\ovcr(f_*\overall\vert\ell)$
                         & 1.504
                         & 1.520
                         & 1.527
                         & {1.531}
                         & 1.520
                         & 1.494
                         & 1.449
                         & 1.384
                         & 1.290
                         & 1.164
                        \\
                        \hline
                \end{tabular}}
\end{table}

\section{Additional Experiments}\label{app:additional-experiments}

In this section, we provide additional experiments to further validate the effectiveness of our proposed algorithms, where we vary the normal distribution (mean \(160\) with deviation \(30\)) used in the main paper for generating the active days of agents.
Specifically, we consider the following four normal distributions:
\begin{itemize}
        \item Figure~\ref{fig:hist-comparison-group-mean-160-std-10}: Normal distribution with mean \(160\) and standard deviation \(10\)
        \item Figure~\ref{fig:hist-comparison-group-mean-160-std-50}: Normal distribution with mean \(160\) and standard deviation \(50\)
        \item Figure~\ref{fig:hist-comparison-group-mean-140-std-30}: Normal distribution with mean \(140\) and standard deviation \(30\)
        \item Figure~\ref{fig:hist-comparison-group-mean-180-std-30}: Normal distribution with mean \(180\) and standard deviation \(30\)
\end{itemize}
Except for the normal distributions, all of the experiments are conducted with the same setup as in the main paper.
From these figures, we can see that the our proposed algorithms achieves the lowest worst-case (Max) competitive ratios in terms of its corresponding competitive ratios respectively, which is the same as the figures reported in the main paper and further validates the effectiveness of our proposed algorithms.

\begin{figure}[t]
        \centering
        \begin{subfigure}{0.945\textwidth}
                \centering
                \includegraphics[width=\linewidth]{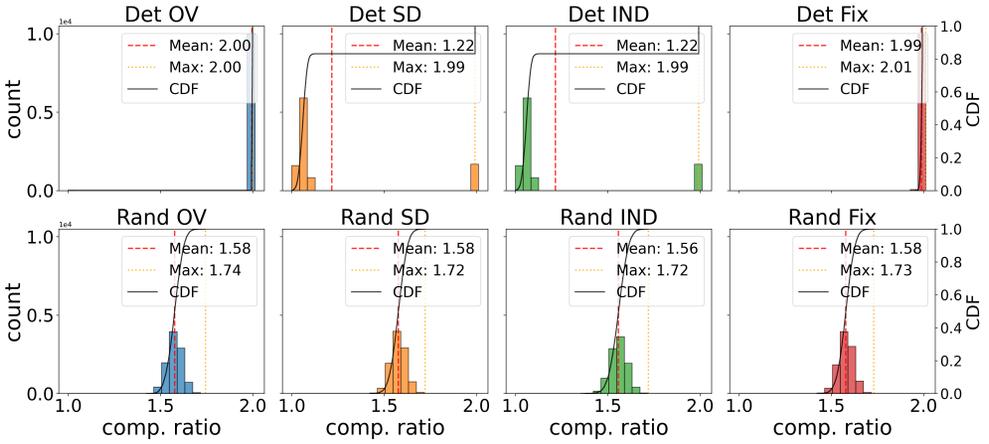}
                \caption{For overall competitive ratio \(\ovcr\)}
        \end{subfigure}
        \hspace{0.2em}
        \\
        \begin{subfigure}{0.945\textwidth}
                \centering
                \includegraphics[width=\linewidth]{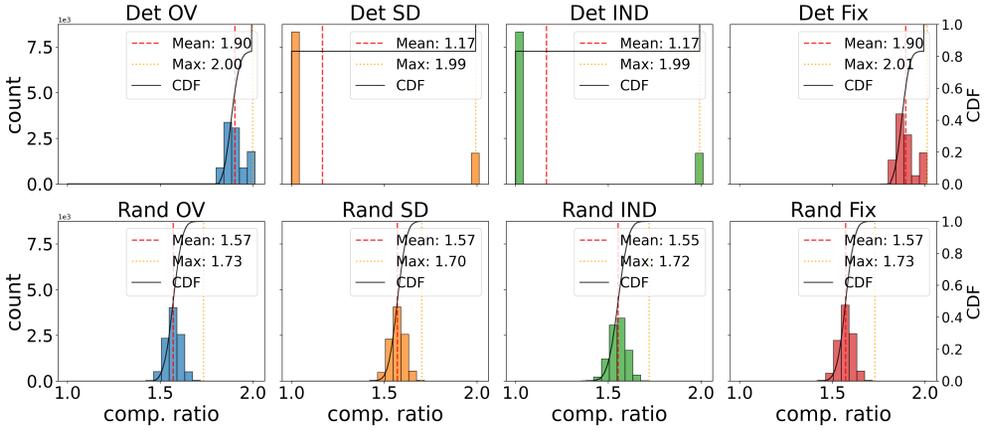}
                \caption{For state-dependent competitive ratio \(\sdcr\)}
        \end{subfigure}
        \\
        \begin{subfigure}{0.945\textwidth}
                \centering
                \includegraphics[width=\linewidth]{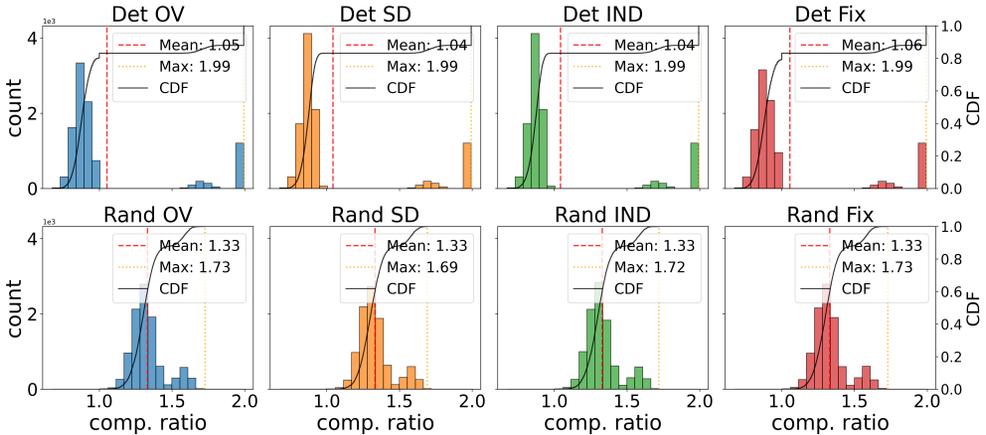}
                \caption{For minimum individual competitive ratio \(\indcr\)}
        \end{subfigure}
        \caption{Histogram and empirical cumulative distribution functions (CDFs) of competitive ratios for normal distribution with mean \(160\) and standard deviation \(10\). ``Max'' in the legend refers to the worst-case competitive ratio, while ``Mean'' refers to average performance.}\label{fig:hist-comparison-group-mean-160-std-10}
\end{figure}

\begin{figure}[t]
        \centering
        \begin{subfigure}{0.945\textwidth}
                \centering
                \includegraphics[width=\linewidth]{figures/histogram/hist_ratio_OV_T10000_M10_B200_G1500_H300_S50_MEAN160_LB50_UB300_R100.png}
                \caption{For overall competitive ratio \(\ovcr\)}
        \end{subfigure}
        \hspace{0.2em}
        \\
        \begin{subfigure}{0.945\textwidth}
                \centering
                \includegraphics[width=\linewidth]{figures/histogram/hist_ratio_SD_T10000_M10_B200_G1500_H300_S50_MEAN160_LB50_UB300_R100.png}
                \caption{For state-dependent competitive ratio \(\sdcr\)}
        \end{subfigure}
        \\
        \begin{subfigure}{0.945\textwidth}
                \centering
                \includegraphics[width=\linewidth]{figures/histogram/hist_ratio_IND_min_T10000_M10_B200_G1500_H300_S50_MEAN160_LB50_UB300_R100.png}
                \caption{For minimum individual competitive ratio \(\indcr_{\min}\)}
        \end{subfigure}
        \caption{Histogram and empirical cumulative distribution functions (CDFs) of competitive ratios for normal distribution with mean \(160\) and standard deviation \(50\). ``Max'' in the legend refers to the worst-case competitive ratio, while ``Mean'' refers to average performance.}\label{fig:hist-comparison-group-mean-160-std-50}
\end{figure}

\begin{figure}[t]
        \centering
        \begin{subfigure}{0.945\textwidth}
                \centering
                \includegraphics[width=\linewidth]{figures/histogram/hist_ratio_OV_T10000_M10_B200_G1500_H300_S30_MEAN140_LB50_UB300_R100.png}
                \caption{For overall competitive ratio \(\ovcr\)}
        \end{subfigure}
        \hspace{0.2em}
        \\
        \begin{subfigure}{0.945\textwidth}
                \centering
                \includegraphics[width=\linewidth]{figures/histogram/hist_ratio_SD_T10000_M10_B200_G1500_H300_S30_MEAN140_LB50_UB300_R100.png}
                \caption{For state-dependent competitive ratio \(\sdcr\)}
        \end{subfigure}
        \\
        \begin{subfigure}{0.945\textwidth}
                \centering
                \includegraphics[width=\linewidth]{figures/histogram/hist_ratio_IND_min_T10000_M10_B200_G1500_H300_S30_MEAN140_LB50_UB300_R100.png}
                \caption{For minimum individual competitive ratio \(\indcr_{\min}\)}
        \end{subfigure}
        \caption{Histogram and empirical cumulative distribution functions (CDFs) of competitive ratios for normal distribution with mean \(140\) and standard deviation \(30\). ``Max'' in the legend refers to the worst-case competitive ratio, while ``Mean'' refers to average performance.}\label{fig:hist-comparison-group-mean-140-std-30}
\end{figure}

\begin{figure}[t]
        \centering
        \begin{subfigure}{0.945\textwidth}
                \centering
                \includegraphics[width=\linewidth]{figures/histogram/hist_ratio_OV_T10000_M10_B200_G1500_H300_S30_MEAN180_LB50_UB300_R100.png}
                \caption{For overall competitive ratio \(\ovcr\)}
        \end{subfigure}
        \hspace{0.2em}
        \\
        \begin{subfigure}{0.945\textwidth}
                \centering
                \includegraphics[width=\linewidth]{figures/histogram/hist_ratio_SD_T10000_M10_B200_G1500_H300_S30_MEAN180_LB50_UB300_R100.png}
                \caption{For state-dependent competitive ratio \(\sdcr\)}
        \end{subfigure}
        \\
        \begin{subfigure}{0.945\textwidth}
                \centering
                \includegraphics[width=\linewidth]{figures/histogram/hist_ratio_IND_min_T10000_M10_B200_G1500_H300_S30_MEAN180_LB50_UB300_R100.png}
                \caption{For minimum individual competitive ratio \(\indcr_{\min}\)}
        \end{subfigure}
        \caption{Histogram and empirical cumulative distribution functions (CDFs) of competitive ratios for normal distribution with mean \(180\) and standard deviation \(30\). ``Max'' in the legend refers to the worst-case competitive ratio, while ``Mean'' refers to average performance.}\label{fig:hist-comparison-group-mean-180-std-30}
\end{figure}

\end{document}